\documentclass{article}
\usepackage{iclr2027_conference,times}

\usepackage{amsmath,amsfonts,bm}

\def\Figref#1{Figure~\ref{#1}}

\def\Secref#1{Section~\ref{#1}}

\def\eqref#1{equation~\ref{#1}}
\def\Eqref#1{Eq.~(\ref{#1})}

\def\1{\bm{1}}

\DeclareMathAlphabet{\mathsfit}{\encodingdefault}{\sfdefault}{m}{sl}
\SetMathAlphabet{\mathsfit}{bold}{\encodingdefault}{\sfdefault}{bx}{n}

\usepackage{hyperref}
\usepackage{url}

\usepackage{xcolor}
\definecolor{kaistblue}{RGB}{20, 135, 200}
\hypersetup{
    colorlinks=true,
    citecolor=kaistblue,
    linkcolor=kaistblue,
    urlcolor=kaistblue
}
\usepackage{amsmath,amssymb,bm}
\usepackage{algorithm}
\usepackage{algpseudocode}
\algrenewcommand\algorithmicrequire{\textbf{Input:}}
\algrenewcommand\algorithmicensure{\textbf{Output:}}

\usepackage{amsthm}
\newtheorem{proposition}{Proposition}

\usepackage{booktabs}
\usepackage{array}
\usepackage{makecell}
\usepackage{graphicx}
\usepackage{multirow}
\usepackage{multicol}
\usepackage[x11names]{xcolor}
\usepackage{colortbl}
\usepackage{kotex}
\usepackage{setspace}
\usepackage{caption}
\usepackage{enumitem}
\usepackage{booktabs}
\usepackage{wrapfig}

\title{
    DriftOPD: Sequence-Level Reverse-KL \newline Distillation for One-Step VLA Policies
}

\author{\fontsize{9pt}{10.8pt}\selectfont 
Youngjun Jun$^{1}$, 
Kyumin Choi$^2$, 
Youngmin Kim$^1$,
Seonghyun Jin$^1$,
Sunwoo Park$^1$,
\\
\fontsize{9pt}{10.8pt}\selectfont \bf  
Jangho Park$^1$, 
Jong Chul Ye$^{1}$
\\[0.8em]
\fontsize{9pt}{10.8pt}\selectfont
$^1$KAIST, Republic of Korea \\
\fontsize{9pt}{10.8pt}\selectfont
$^2$Sungkyunkwan University, Republic of Korea\\
\fontsize{9pt}{10.8pt}\selectfont
\texttt{\{youngjun.jun,jong.ye\}@kaist.ac.kr}
\\[0.8em]
\fontsize{9pt}{10.8pt}\selectfont
Project page: \url{https://yj-jun.github.io/DriftOPD/}
}

\iclrfinalcopy
\begin{document}

\maketitle

\begin{abstract}
Vision-Language-Action (VLA) models increasingly rely on action experts that generate short action chunks under receding-horizon control.
While chunk-level training is convenient across robot embodiments, it optimizes local action likelihood without explicitly accounting for long-horizon task success.
Sequence-level reinforcement learning can address this limitation, but typically requires policy rollouts and closed-loop interaction, which are costly for real-robot manipulation.
We introduce \textbf{DriftOPD}, a teacher-free, rollout-free framework for sequence-level on-policy distillation of continuous VLA action experts.
We show that the sequence-level reverse Kullback--Leibler (KL) divergence decomposes into a chunk-level reverse-KL term and a future-potential term that captures the long-horizon effect of the current action.
DriftOPD optimizes these two terms using a one-step drifting objective and a Q-function critic learned from offline demonstrations, respectively, enabling sequence-level optimization with only offline data and one-step action generation.
Across multiple VLA architectures in simulation and real-world manipulation, DriftOPD generally outperforms existing one-step distillation baselines while achieving task success performance comparable to multi-step teacher policies.
These results demonstrate that long-horizon behavior can be effectively distilled into one-step VLA action experts without online interaction or a separate teacher. 
\end{abstract}
\section{Introduction}
\label{sec:intro}

Vision-Language-Action (VLA) models have recently emerged as a promising paradigm for general-purpose robotic manipulation~\citep{RT2,OpenVLA,pi05}.
As a robot manipulation system, a VLA leverages the reasoning and planning knowledge of Vision-Language Models (VLMs), corresponding to System 2, together with the intuitive behavior generation of a flow-based action expert, corresponding to System 1~\citep{pi0,HiRobot}.
This VLA architecture is designed to reflect the state (e.g., environment and proprioception) that change according to actions, and VLAs perform tasks through closed-loop execution~\citep{RT1,Gr00tN1}.
Since replanning every low-level action at each control step is computationally inefficient, VLAs commonly adopt a receding-horizon control scheme, repeatedly predicting short action chunks from updated observations~\citep{ACT,OpenVLA-OFT}.

Action chunks provide a convenient common training unit for action experts across multiple embodiments~\citep{pi05,Gr00tN1}.
However, under this chunk-level training paradigm, VLAs suffer from two problems (\Figref{fig:motivation}a).
First, because VLAs are trained on demonstration states at train time, these VLAs suffer from a \textit{test-time distribution mismatch} and fail to generalize to model-generated states~\citep{DAger,Stable-BC}.
Second, within an action sequence composed of action chunks, the policy na\"ively generates only locally likely action chunks at each step without considering the \textit{final task success}~\citep{ProphRL}.
On the other hand, while recent reinforcement learning methods improve VLA training using various forms of sequence-level feedback~\citep{VLA-RL,SimpleVLA-RL,VLA-RFT}, these approaches heavily rely on \textit{policy-generated action sequences} to obtain such supervision.
They must execute the current policy, evaluate the resulting sequences, and often recollect new rollouts as the policy changes.
This dependence on closed-loop interaction is particularly costly for real-robot manipulation~\citep{WMPO}.

\begin{figure}[t]
  \centering
   \includegraphics[width=\linewidth]{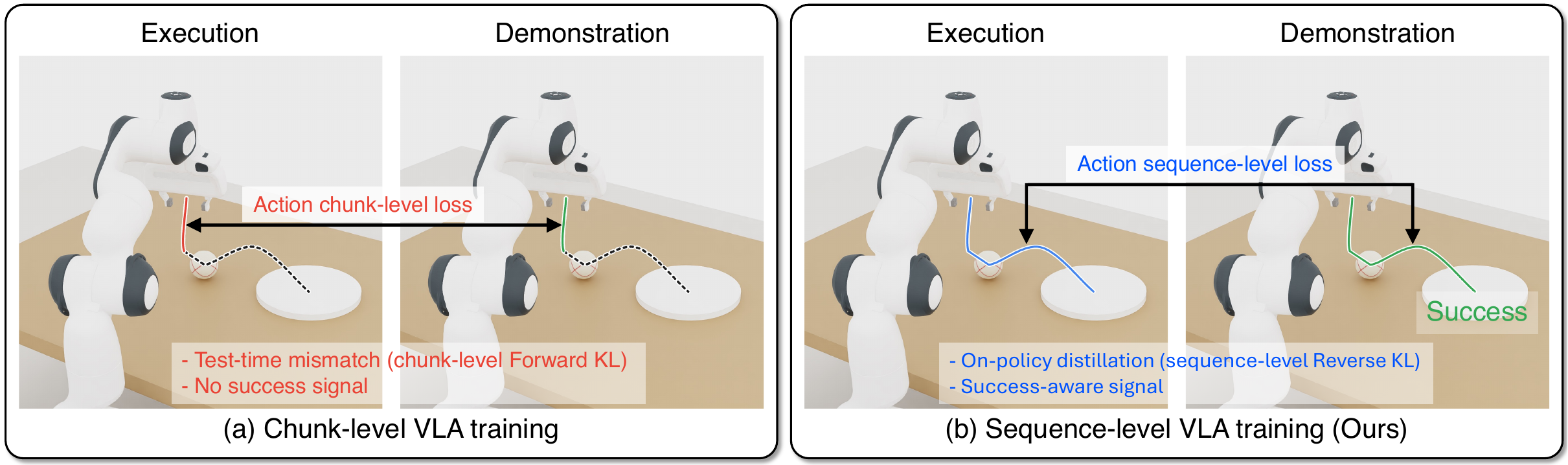}
    \vspace{-13pt}
   \caption{Chunk-level VLA training versus our sequence-level formulation}
   \vspace{-18pt}
   \label{fig:motivation}
\end{figure}

To this end, we explore a scalable and simple sequence-level objective function for action-expert training (\Figref{fig:motivation}b).
Currently, due to the multi-step sampling of diffusion/flow action experts, research on the trajectory-level reverse Kullback-Leibler divergence (KLD) for continuous generative policies has remained at the level of \textit{chunk}-level ODE trajectories~\citep{DiffusionOPD,FlowOPD}.
In this work, we formulate \textit{sequence}-level on-policy distillation (OPD) through the reverse KLD over the action sequence leading to success.
Our formulated sequence-level OPD consists of a \textit{chunk-level reverse KLD term} and a \textit{future-potential term} for the current action chunk, similar to MiniLLM~\citep{MiniLLM}.
However, unlike MiniLLM, the clean logits of continuous action chunks required for the future-potential term remain intractable.
Therefore, the sequence-level OPD formulation itself still relies on rollout data, is bounded by the performance of the teacher model, and incurs multi-step sampling latency.

To address these challenges, we propose \textbf{DriftOPD}, a teacher-free sequence-level OPD method for one-step distillation.
DriftOPD calculates the chunk-level reverse KLD term as a one-step distribution-matching drifting objective~\citep{Drifting}, while estimating the future-potential term with a critic-based neural network.
In this way, DriftOPD achieves one-step inference while learning directly from human demonstrations, without a separate teacher model.
In practice, we adapt VGAS~\citep{VGAS}, which provides a demonstration-trained Q-function critic, to achieve fully rollout-free sequence-level OPD.
Consequently, because DriftOPD uses only demonstration data, it can be readily applied across multiple embodiments, while its generated-sample-based sequence-level OPD avoids the test-time mismatch.
Moreover, one-step policy generation through the drifting objective reduces robot manipulation latency.

We apply DriftOPD to a diverse set of VLA models spanning different architectures, fine-tuning strategies, and action-expert parameterizations~\citep{pi05,Gr00tN1,SmolVLA,ABot-M0}. We distill their multi-step sampling capability into one-step policy generation at the action-sequence level.
Our experiments cover simulation and real-world execution across a wide range of robot manipulation tasks~\citep{RoboCasa365,RoboTwin2,Libero,LiberoPlus}.
Experiments show that DriftOPD outperforms existing one-step distillation methods for the VLA action expert, achieving task success rates comparable to those of multi-step teachers.
Further analysis shows that DriftOPD provides more stable training in data-scarce VLA settings, remains effective across diverse VLA configurations (e.g., architecture, fine-tuning strategy, and parameterization), and enables faster execution through one-step policy generation, with negligible loss of action diversity.

Our main contributions are summarized as follows:
\begin{itemize}[noitemsep, nolistsep, leftmargin=*]
    \item We formulate \textbf{sequence-level on-policy-distillation for VLAs} by minimizing the reverse KLD over action-sequence distributions, decomposing it into a local action-chunk reverse-KL term and a future-potential term.
    \item We propose \textbf{DriftOPD} to make sequence-level OPD tractable for continuous VLA action experts. It realizes the local reverse KLD term with a one-step drifting objective and approximates the future-potential term with a demonstration-trained Q-function critic, enabling teacher-free and rollout-free one-step distillation.
    \item Across diverse VLA configurations in simulation and real-world manipulation, DriftOPD generally outperforms existing one-step distillation baselines while achieving task success comparable to multi-step policies. It further provides stable training in data-scarce VLA settings and faster execution than multi-step policy generation.
\end{itemize}
\section{Related Work}
\label{sec:related}

\subsection{Vision-Language-Action (VLA) Models}
\label{ssuc:related-VLA}

Vision-Language-Action (VLA) models have emerged as general-purpose robot policies that couple vision-language representations with imitation learning from large-scale robot demonstrations~\citep{RT2,pi0,Gr00tN1}. Early models such as RT-1~\citep{RT1} and RT-2~\citep{RT2} established scalable language-conditioned robot policies. Recent models such as $\pi_0$ series~\citep{pi05,pi0} use continuous action experts for diverse robot tasks.
In parallel, ACT~\citep{ACT} and Diffusion Policy~\citep{DiffusionPolicy} demonstrated the effectiveness of predicting short action sequences (i.e., chunk) rather than individual actions. Action chunking has since become a common training and execution unit in continuous-control VLAs~\citep{pi0,OpenVLA-OFT,RLDX1}, and is adopted by recent models such as $\pi_0$ series~\citep{pi05,pi0}, RLDX~\citep{RLDX1}, ABot~\citep{ABot-M0}, and GR00T series~\citep{Gr00tN1}.
However, behavior-cloned policies remain vulnerable to test-time distribution shift, where the pretrained model fail to generalize to model-
generated state at the test time~\citep{DAger,Stable-BC}. Moreover, chunk-level imitation objectives provide no direct supervision for final task success~\citep{ProphRL,WMPO}. Recent reinforcement learning methods address these limitations through sequence-level feedback~\citep{VLA-RL,SimpleVLA-RL,VLA-RFT}. However, obtaining such feedback from physical rollouts requires costly closed-loop interaction on real robots~\citep{WMPO}.

\subsection{One-Step Distillation of Diffusion/Flow Models}
\label{ssec:related-distill}

Diffusion and flow-based models~\citep{NCSN,SongSDE,RectifiedFlow} have been widely adopted as action experts in VLA models~\citep{pi05,Gr00tN1}, but their iterative sampling incurs substantial inference latency~\citep{DDIM,DPMSolver}.
To reduce the number of sampling steps, prior work has explored progressive and consistency distillation~\citep{PD,CM,sCM}, distribution matching~\citep{DMD}, and more recently native one-step formulations such as consistency objective~\citep{CM,iCM,sCM}, MeanFlow~\citep{MF,iMF}, and distribution drifting~\citep{Drifting}.
Similar ideas have been introduced for robotic policies, including consistency- and distribution-matching-based policy distillation~\citep{ConsistencyPolicy,SDMPolicy}.
For VLA models, \citet{OFPolicy} employ self-distillation for one-step action generation, while \citet{MFPolicy} and \citet{DriftingVLA} directly introduce native one-step generation objectives into VLA action experts.
Despite their efficiency, existing approaches mainly optimize action-chunk-level ODE-trajectory or distribution matching, without directly distilling supervision from task success, which can leave a gap between chunk-level training objectives and closed-loop test-time behavior.

\subsection{On-Policy Distillation}
\label{ssec:related-OPD}

Prevalent action-chunking VLAs can be viewed as autoregressive (AR) sequence models at the chunk level, where each action chunk is conditioned on states induced by previously executed chunks.
This perspective naturally connects VLAs to knowledge distillation in large language models (LLMs), where recent work minimizes the divergence between teacher and student distributions~\citep{MiniLLM,GKD}.
In particular, a common formulation of on-policy distillation (OPD) minimizes the sequence-level reverse Kullback-Leibler divergence (KLD), thereby evaluating the student and teacher distributions at student-visited states~\citep{MiniLLM,DecouplingKLTraj}. 
This reverse KLD formulation alleviates the train-test distribution mismatch of supervised fine-tuning by training on student-generated sequences~\citep{MiniLLM,GKD}. 
Such on-policy supervision has been particularly effective for long-horizon generation and reasoning tasks, including mathematical reasoning~\citep{DecouplingKLTraj,SelfDistilledReasoner}.

Recent works extend this principle to continuous generative models, where DiffusionOPD~\citep{DiffusionOPD} and FlowOPD~\citep{FlowOPD} distill teacher knowledge along student-generated ODE \textit{trajectories} from noise to data. 
However, these ODE trajectory-level OPD still distills knowledge only within each action chunk. 
It does not explicitly capture the \textit{sequence} of executed chunks that determines final task success. 
Moreover, flow-based action experts do not directly provide clean-action likelihoods $\pi_\theta(\cdot\mid\bm{s}_t)$~\citep{FlowMatching}, making sequence-level OPD over action chunks difficult to apply directly. 
VLA-OPD~\citep{VLA-OPD} instead derives dense teacher-based distillation rewards on student rollouts, but requires closed-loop environment interaction and repeated teacher queries.
This work therefore aims to overcome these challenges, thereby enabling task-success signals to be incorporated across action-chunk sequences.
\section{Preliminary}
\label{sec:prelim}

\subsection{On-Policy Distillation}
\label{ssec:prelim-OPD}

On-policy distillation (OPD) minimizes the reverse Kullback-Leibler
divergence (KLD) between a model distribution $q_\theta$ and a target
distribution $p$ over model-generated sequences~\citep{MiniLLM}:
\begin{equation}
    \mathcal{L}_{\mathrm{OPD}}(\theta)
    = \mathbb{E}_{c \sim \mu,\,\bm{x} \sim q_\theta(\cdot \mid c)}
    \left[\log\frac{q_\theta(\bm{x}\mid c)}
                   {p(\bm{x}\mid c)}
    \right],
\label{eq:OPD-KL}
\end{equation}
where $c$ is the conditioning context and
$\bm{x}=(x_1,\ldots,x_T)$ is an autoregressively generated sequence.
A useful property of sequence-level reverse-KL optimization is that its
gradient separates the discrepancy at the current prediction from the effect
of that prediction on subsequent sequence-level discrepancy~\citep{MiniLLM}.
This decomposition is expressed through step-wise log-density ratios,
\begin{equation}
    r_t
    = \log\frac{p(z_t \mid c,\bm{z}_{<t})}
                {q_\theta(z_t \mid c,\bm{z}_{<t})},\qquad
    R_{t+1}
    = \sum_{k=t+1}^{T} r_k,
\label{eq:OPD-step-ratio}
\end{equation}
where $r_t$ represents the current-step discrepancy and $R_{t+1}$ accumulates
the discrepancy over subsequent steps.
For discrete autoregressive models such as LLMs, these terms are tractable
because the step-wise conditional distributions are explicitly available
from model logits. In contrast, continuous generative policies generally do
not provide tractable clean-sample densities, making the same formulation
difficult to apply directly.

\subsection{Drifting Models}
\label{ssec:prelim-drifting}

Drifting models~\citep{Drifting} are updated by a sample-space vector field induced by the discrepancy between a target distribution and a model distribution. Given a condition $\bm{s}$, let $p(a_t\mid\bm{s})$ and $q_{\theta}(a_t\mid\bm{s})$ denote the target and model distributions over data samples $a_t$, respectively.
Then, their reverse KLD is
\begin{equation}
    D_{\mathrm{KL}}\left[q_{\theta}(a_t\mid\bm{s})\,\|\,
                         p(a_t\mid\bm{s})\right]
    = \mathbb{E}_{a_t\sim q_{\theta}}
      \left[\log\frac{q_{\theta}(a_t\mid\bm{s})}
                        {p(a_t\mid\bm{s})}
      \right].
\label{eq:drifting-KL}
\end{equation}
The corresponding local descent direction in the data space is given by
the difference between their score functions:
\begin{equation}
    \bm{V}^{*}(a_t,\bm{s})
    = \nabla_{a_t}\log p(a_t\mid\bm{s})
    - \nabla_{a_t}\log q_{\theta}(a_t\mid\bm{s}).
\label{eq:drifting-score}
\end{equation}
When the densities are not directly available, they can be approximated
from samples using kernel density estimation (KDE)~\citep{MeanShift,AlignFewStep},
\begin{equation}
    p_h(a_t\mid\bm{s})
    = \mathbb{E}_{a_t^p \sim p(\cdot\mid\bm{s})}
    \left[K_h(a_t,a_t^{p})\right],\qquad
    q_{\theta,h}(a_t\mid\bm{s}) 
    = \mathbb{E}_{a_t^{q} \sim q_{\theta}(\cdot\mid\bm{s})}
    \left[K_h(a_t,a_t^{q})\right].
\label{eq:drifting-kde}
\end{equation}
where $K_h(\cdot,\cdot)$ denotes a kernel function with bandwidth $h>0$.
This yields the sample-based drifting field $\bm{V}_{\theta,h}(a_t,\bm{s})$:
\begin{equation}
    \begin{aligned}
        \bm{V}_{\theta,h}(a_t,\bm{s})
        =& \nabla_{a_t}\log p_h(a_t\mid\bm{s})
        - \nabla_{a_t}\log q_{\theta,h}(a_t\mid\bm{s}) \\
        =& \nabla_{a_t}\log
        \mathbb{E}_{a_t^{p} \sim p(\cdot\mid\bm{s})}
        \left[K_h(a_t,a_t^{p})\right]
        - \nabla_{a_t}\log
        \mathbb{E}_{a_t^{q} \sim q_{\theta}(\cdot\mid\bm{s})}
        \left[K_h(a_t,a_t^{q})\right] \\
        =& \frac{\mathbb{E}_{a_t^{p} \sim p(\cdot\mid\bm{s})}
            \left[\nabla_{a_t} K_h(a_t,a_t^{p})\right]
        }{\mathbb{E}_{a_t^{p} \sim p(\cdot\mid\bm{s})}
            \left[K_h(a_t,a_t^{p})\right]
        }
        - \frac{
            \mathbb{E}_{a_t^{q} \sim q_{\theta}(\cdot\mid\bm{s})}
            \left[\nabla_{a_t} K_h(a_t,a_t^{q})\right]
        }{\mathbb{E}_{a_t^{q} \sim q_{\theta}(\cdot\mid\bm{s})}
            \left[K_h(a_t,a_t^{q})\right]
        }.
    \end{aligned}
\label{eq:drifting-field}
\end{equation}
A drifting model $\bm{G}_{\theta}$ can then be learned by updating the current generated samples toward this field,
\begin{equation}
    \mathcal{L}_{\mathrm{drift}}(\theta)
    = \mathbb{E}_{\bm{s},\,a_t}
    \left[\left\|
    \bm{G}_{\theta}(a_t,\bm{s}) 
    - \mathrm{stopgrad}\left(\bm{G}_{\theta}(a_t,\bm{s}) 
    + \eta\bm{V}_{\theta,h}(a_t,\bm{s})\right)
    \right\|_2^2\right].
\label{eq:drifting-objective}
\end{equation}
For the full derivation, please refer to \Secref{appendix:drifting-rkl}.
\begin{figure}[t]
  \centering
   \includegraphics[width=\linewidth]{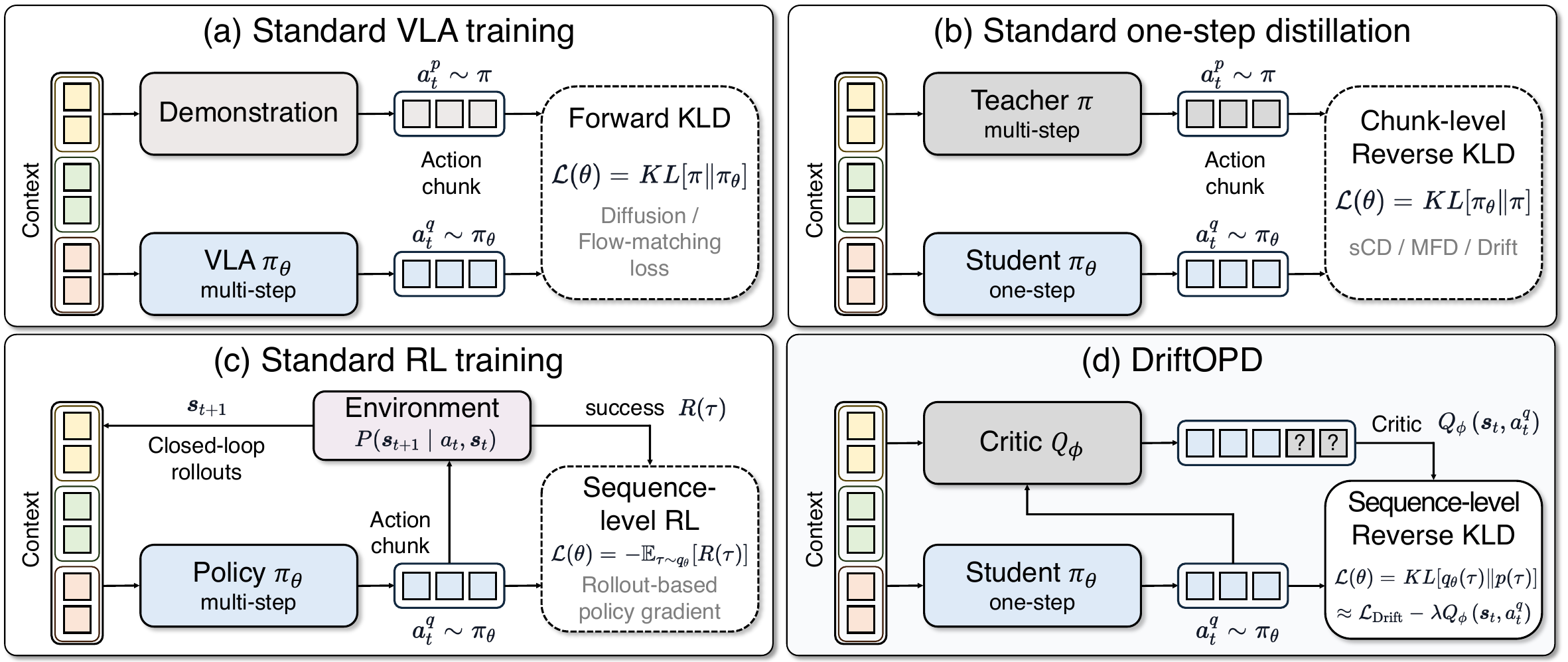}
    \vspace{-15pt}
   \caption{Comparison of VLA training paradigms. (a) Standard VLA training uses demonstration-supervised forward KLD, (b) existing one-step distillation optimizes chunk-level reverse KLD, and (c) RL relies on closed-loop rollouts, whereas (d) DriftOPD approximates sequence-level reverse KLD using a drifting objective and critic guidance without online rollouts.}
   \vspace{-12pt}
   \label{fig:main}
\end{figure}

\section{Method}
\label{sec:method}

\subsection{Sequence-Level Reverse KL Divergence in VLA}
\label{ssec:method-reverseKL}

We formulate VLA learning through the reverse KLD between the distributions over action \textit{sequences} induced by the model and target policies.
This sequence-level perspective is illustrated in \Figref{fig:main}.
Let $t$ index the action-chunk sequence, with $\bm{s}_t$ denoting the current VLA context (e.g., observation, instruction, and action state) and $a_t$ the corresponding action chunk.
Accordingly, let
$\tau=(\bm{s}_1,a_1,\bm{s}_2,\ldots,a_T,\bm{s}_{T+1})$
denote a trajectory composed of a sequence of action chunks and their corresponding conditioning contexts.
The trajectory distributions induced by the model policy $\pi_\theta$ and the target policy $\pi$ are respectively given by
\begin{equation}
    q_\theta(\tau)
    = \rho(\bm{s}_1)
      \prod_{t=1}^{T}
        \pi_\theta(a_t\mid\bm{s}_t)
        P(\bm{s}_{t+1}\mid\bm{s}_t,a_t),\qquad 
    p(\tau) 
    = \rho(\bm{s}_1)
      \prod_{t=1}^{T}
        \pi(a_t\mid\bm{s}_t)
        P(\bm{s}_{t+1}\mid\bm{s}_t,a_t),
\label{eq:vla-traj-dist}
\end{equation}
where $\rho(\bm{s}_1)$ denotes the initial context distribution and
$P(\bm{s}_{t+1}\mid\bm{s}_t,a_t)$ denotes the transition induced by executing $a_t$ under the current context.
Since the two trajectory distributions share the same initial context distribution and transition dynamics, their reverse KLD reduces to the cumulative policy discrepancy along model-generated trajectories (full derivation in \Secref{appendix:rkd-vla}):
\begin{equation}
    \begin{aligned}
        \mathcal{L}_\mathrm{RKL}(\theta)
        = D_{\mathrm{KL}}
            \left[q_\theta(\tau)\,\|\,p(\tau)
            \right] = \mathbb{E}_{\tau\sim q_\theta}
        \left[\log\frac{q_\theta(\tau)}
                       {p(\tau)}
        \right] 
        = \mathbb{E}_{\tau\sim q_\theta}
          \left[\sum_{t=1}^{T}\log
                \frac{\pi_\theta(a_t\mid\bm{s}_t)}
                     {\pi(a_t\mid\bm{s}_t)}
          \right].
    \end{aligned}
\label{eq:vla-reverse-kl}
\end{equation}

To expose the sequential structure of this objective, we define
$J_t(\bm{s}_t)$ as the reverse-KL cost-to-go from sequence index $t$.
Its Bellman recursion is
\begin{equation}
    \begin{aligned}
        J_t(\bm{s}_t)
        &=
        \underbrace{
        D_{\mathrm{KL}}
        \left[
            \pi_\theta(\cdot\mid\bm{s}_t)
            \,\|\,
            \pi(\cdot\mid\bm{s}_t)
        \right]
        }_{\textstyle \mathcal{L}_{\mathrm{Local}}}
        +
        \underbrace{
        \mathbb{E}_{\substack{
            a_t\sim\pi_\theta(\cdot\mid\bm{s}_t)\\
            \bm{s}_{t+1}\sim P(\cdot\mid\bm{s}_t,a_t)
        }}
        \left[
            J_{t+1}(\bm{s}_{t+1})
        \right]
        }_{\textstyle \mathcal{L}_{\mathrm{Future}}}.
    \end{aligned}
\label{eq:vla-single-long}
\end{equation}
Here, $\mathcal{L}_{\mathrm{Local}}$ captures the discrepancy between
the model and target distributions over the current action chunk,
whereas $\mathcal{L}_{\mathrm{Future}}$ captures how the current action
chunk affects the distributional discrepancy over subsequent action
chunks through the induced future contexts.
Thus, sequence-level reverse-KL in VLAs accounts for both the current
action-distribution discrepancy and its effect on future behavior, as in
language-model OPD~\citep{MiniLLM}.

\subsection{Single-Positive Local Term: Drifting Field}
\label{ssec:method-drifting}

\textbf{The local term} $\mathcal{L}_{\mathrm{Local}}$ in
\Eqref{eq:vla-single-long} requires minimizing the reverse KLD between
the model and target action-chunk distributions,
$D_{\mathrm{KL}}[
\pi_\theta(\cdot\mid\bm{s}_t)
\,\|\, 
\pi(\cdot\mid\bm{s}_t)]$.
However, unlike autoregressive LLMs, VLA action experts do not provide
explicit densities $\pi_\theta(\cdot\mid\bm{s}_t)$ over action chunks.
We therefore optimize a kernel-smoothed local reverse KLD using the
sample-space drifting field (\Secref{ssec:prelim-drifting}).
Let $K_h(a,a')=\kappa_h(a-a')$ be a normalized, nonnegative, differentiable density with bandwidth $h$, and define the kernel-smoothed distributions
as
\begin{equation}
    \pi_h(a\mid\bm{s}_t)
    = \mathbb{E}_{a^p\sim\pi(\cdot\mid\bm{s}_t)}
        \left[K_h(a,a^p)\right],\qquad
        \pi_{\theta,h}(a\mid\bm{s}_t)
    =\mathbb{E}_{a^q\sim\pi_\theta(\cdot\mid\bm{s}_t)}
        \left[K_h(a,a^q)\right].
\label{eq:local-kernel-density}
\end{equation}
The corresponding smoothed local objective is
\begin{equation}
    \mathcal{L}_{\mathrm{Local}}^{h}(\theta;\bm{s}_t)
    = D_{\mathrm{KL}}
        \left[\pi_{\theta,h}(\cdot\mid\bm{s}_t)\,\|\,
              \pi_h(\cdot\mid\bm{s}_t)
        \right].
\label{eq:local-smoothed-rkl}
\end{equation}
Its drifting field is defined by the difference between the target and
model scores,
\begin{equation}
    \bm{V}_{\theta,h}(a,\bm{s}_t)
    = \nabla_a\log\pi_h(a\mid\bm{s}_t)
    - \nabla_a\log\pi_{\theta,h}(a\mid\bm{s}_t).
\label{eq:local-drift-field}
\end{equation}

\begin{proposition}[Local reverse KL as drifting]
\label{prop:local-drift}
    Let
    $a_t^q=\bm{G}_{\theta}(\bm{z},\bm{s}_t)$ and
    $\widetilde a_t^q=a_t^q+\bm{\epsilon}_h$,
    where $\bm{z}\sim\mathcal{N}(\bm{0},\bm{I})$ and
    $\bm{\epsilon}_h\sim\kappa_h$.
    Define
    \begin{equation}
        \mathcal{L}_{\mathrm{Drift}}(\theta;\bm{s}_t)
        = \mathbb{E}_{\bm{z},\bm{\epsilon}_h}
            \left[\left\|
            \bm{G}_{\theta}(\bm{z},\bm{s}_t)
            - \mathrm{stopgrad}(a_t^q + \eta\,
                \bm{V}_{\theta,h}(\widetilde a_t^q,\bm{s}_t))
            \right\|_2^2\right].
    \label{eq:local-drift-loss}
    \end{equation}
    Then, for any $\eta>0$,
    \begin{equation}
        \nabla_{\theta}
        \mathcal{L}_{\mathrm{Local}}^{h}(\theta;\bm{s}_t)
        = \frac{1}{2\eta}
            \nabla_{\theta}
            \mathcal{L}_{\mathrm{Drift}}(\theta;\bm{s}_t).
        \label{eq:local-drift-equivalence}
    \end{equation}
\end{proposition}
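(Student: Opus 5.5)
The plan is to compute both gradients explicitly via the reparameterization $\widetilde a_t^q=\bm{G}_\theta(\bm{z},\bm{s}_t)+\bm{\epsilon}_h$ and show that each equals $-\mathbb{E}_{\bm{z},\bm{\epsilon}_h}\!\big[\bm{V}_{\theta,h}(\widetilde a_t^q,\bm{s}_t)^\top \partial_\theta \bm{G}_\theta(\bm{z},\bm{s}_t)\big]$, up to the factor $2\eta$ on the drifting side. The key observation is that, since $K_h(a,a')=\kappa_h(a-a')$ and $\bm{\epsilon}_h\sim\kappa_h$ is independent of $\bm{z}$, the kernel-smoothed model density $\pi_{\theta,h}(\cdot\mid\bm{s}_t)$ in \Eqref{eq:local-kernel-density} is exactly the law of $\widetilde a_t^q$. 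Throughout I will assume enough regularity ($\bm{G}_\theta$ differentiable in $\theta$, $\kappa_h$ smooth with integrable derivatives, dominated convergence) to exchange $\nabla_\theta$ with expectations and integrals.

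\textbf{Drifting side.} Write $\bm{g}=\bm{G}_\theta(\bm{z},\bm{s}_t)$. The stopgrad target is treated as a constant $\bm{c}=a_t^q+\eta\bm{V}_{\theta,h}(\widetilde a_t^q,\bm{s}_t)$, so $\nabla_\theta\|\bm{g}-\bm{c}\|_2^2=2(\partial_\theta\bm{g})^\top(\bm{g}-\bm{c})$. At the current parameters, $\bm{g}-\bm{c}=-\eta\bm{V}_{\theta,h}(\widetilde a_t^q,\bm{s}_t)$. Taking expectations gives
\[
\tfrac{1}{2\eta}\nabla_\theta\mathcal{L}_{\mathrm{Drift}}
=-\mathbb{E}_{\bm{z},\bm{\epsilon}_h}\big[(\partial_\theta\bm{G}_\theta)^\top\bm{V}_{\theta,h}(\widetilde a_t^q,\bm{s}_t)\big].
\]

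\textbf{KL side.} Write
\[
\mathcal{L}^h_{\mathrm{Local}}=\mathbb{E}_{\bm{z},\bm{\epsilon}_h}\big[f_\theta(\widetilde a_t^q)\big],\qquad f_\theta(a)=\log\pi_{\theta,h}(a\mid\bm{s}_t)-\log\pi_h(a\mid\bm{s}_t).
\]
Here $\theta$ enters in two ways: through the sample location $\widetilde a_t^q$, and explicitly through $\pi_{\theta,h}$. By the chain rule, the pathwise part is
\[
\mathbb{E}\big[(\partial_\theta\bm{G}_\theta)^\top\nabla_a f_\theta(\widetilde a_t^q)\big]=-\mathbb{E}\big[(\partial_\theta\bm{G}_\theta)^\top\bm{V}_{\theta,h}(\widetilde a_t^q,\bm{s}_t)\big],
\]
using \Eqref{eq:local-drift-field} and the fact that $\partial_\theta\widetilde a_t^q=\partial_\theta\bm{G}_\theta$, since $\bm{\epsilon}_h$ does not depend on $\theta$. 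The explicit part is $\mathbb{E}_{a\sim\pi_{\theta,h}}[\partial_\theta\log\pi_{\theta,h}(a)]=\int\partial_\theta\pi_{\theta,h}(a)\,da=\partial_\theta\int\pi_{\theta,h}=0$. This is the standard zero-mean score identity, which holds because $\pi_{\theta,h}$ is a normalized density (the kernel is normalized). Combining the two parts gives the same expression as the drifting side, which yields \Eqref{eq:local-drift-equivalence} for every $\eta>0$.

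\textbf{Main obstacle.} The delicate step is the explicit-dependence term. The expectation is over samples whose law depends on $\theta$, so one must be careful not to double-count. Either differentiate the integral $\int\pi_{\theta,h}\log(\pi_{\theta,h}/\pi_h)\,da$ directly, or use the reparameterized form and verify that the $\theta$-derivative of $\log\pi_{\theta,h}$ at a fixed point integrates to zero. I would present it via the reparameterized form and justify the interchange of derivative and integral by assuming bounded derivatives of $\kappa_h$ and of $\partial_\theta\bm{G}_\theta$. It also needs stating that kernel smoothing makes $\pi_{\theta,h}$ a genuine smooth density even when $\bm{G}_\theta$ pushes forward a degenerate distribution, which is exactly why the smoothed objective is used.
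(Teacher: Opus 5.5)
Your proposal is correct and follows essentially the same route as the paper's proof: you reparameterize $\widetilde a_t^q=\bm{G}_\theta(\bm{z},\bm{s}_t)+\bm{\epsilon}_h$ and split $\nabla_\theta\mathcal{L}^h_{\mathrm{Local}}$ into a pathwise term equal to $-\mathbb{E}[(\partial_\theta\bm{G}_\theta)^\top\bm{V}_{\theta,h}(\widetilde a_t^q,\bm{s}_t)]$ and an explicit-density term that vanishes by the score identity, then match this against the stop-gradient computation of $\nabla_\theta\mathcal{L}_{\mathrm{Drift}}$. Your remarks on regularity and on $\pi_{\theta,h}$ being exactly the law of $\widetilde a_t^q$ (via $K_h(a,a')=\kappa_h(a-a')$ and the independence of $\bm{\epsilon}_h$ from $\bm{z}$) spell out what the paper assumes implicitly.
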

\begin{proof}
    Using the pathwise derivative through
    $\widetilde a_t^q=
    \bm{G}_{\theta}(\bm{z},\bm{s}_t)+\bm{\epsilon}_h$,
    \begin{equation}
        \nabla_{\theta}\mathcal{L}_{\mathrm{Local}}^{h}
        = -\mathbb{E}_{\bm{z},\bm{\epsilon}_h}
            \left[(\partial_{\theta}\bm{G}_{\theta})^\top
                \bm{V}_{\theta,h}(\widetilde a_t^q,\bm{s}_t)
            \right].
    \end{equation}
    Since $a_t^q+\eta\bm{V}_{\theta,h}(\widetilde a_t^q,\bm{s}_t)$ is treated as a stop-gradient target,
    \begin{equation}
        \nabla_{\theta}\mathcal{L}_{\mathrm{Drift}}
        = -2\eta\,\mathbb{E}_{\bm{z},\bm{\epsilon}_h}
            \left[(\partial_{\theta}\bm{G}_{\theta})^\top
                \bm{V}_{\theta,h}(\widetilde a_t^q,\bm{s}_t)
            \right],
    \end{equation}
    which proves \Eqref{eq:local-drift-equivalence}. A detailed derivation is provided in \Secref{appendix:local-drift}.
\end{proof}

In our setting, the target policy is accessible only through offline
demonstrations rather than repeated queries at a fixed context
$\bm{s}_t$. For each demonstration context, we therefore use its single
recorded action chunk $a_t^p$ as the positive sample, following prior work~\citep{DriftQlearning,DriftWorld},
\begin{equation}
    \pi(\cdot\mid\bm{s}_t)
    \approx \delta_{a_t^p}(\cdot), \qquad
    \pi_h(a\mid\bm{s}_t) \approx K_h(a,a_t^p),
\label{eq:single-positive-target}
\end{equation}
where $\delta_{a_t^p}(\cdot)$ denotes the Dirac delta at $a_t^p$.
The drifting field then becomes
\begin{equation}
    \bm{V}_{\theta,h}(a,\bm{s}_t)
    = \nabla_a\log K_h(a,a_t^p)
    - \frac{\mathbb{E}_{a^q\sim\pi_\theta(\cdot\mid\bm{s}_t)}
            \left[\nabla_a K_h(a,a^q)\right]
           }{\mathbb{E}_{a^q\sim\pi_\theta(\cdot\mid\bm{s}_t)}
                \left[K_h(a,a^q)\right]
           }.
\label{eq:single-positive-drift}
\end{equation}
Hence, the local reverse-KL term can be optimized using a single positive
demonstration chunk together with samples from the current VLA policy.
A detailed analysis of the single-positive drifting field is provided in \Secref{appendix:mse}.

\subsection{Action-Value Future Term: Critic Surrogate}
\label{ssec:method-critic}

\textbf{The future term} $\mathcal{L}_{\mathrm{Future}}$ in \Eqref{eq:vla-single-long} accounts for how the current action chunk affects the subsequent trajectory through future observations. Following the Bellman recursion, we define the divergence-to-go after taking $a_t$ at $\bm{s}_t$ as
\begin{equation}
    Q_t^\theta(\bm{s}_t,a_t)
    = \mathbb{E}_{\bm{s}_{t+1}\sim P(\cdot\mid\bm{s}_t,a_t)}
        \left[J_{t+1}(\bm{s}_{t+1})\right],
\label{eq:future-q}
\end{equation}
such that the future term is
\begin{equation}
    \mathcal{L}_{\mathrm{Future}}(\theta;\bm{s}_t)
    = \mathbb{E}_{a_t\sim\pi_\theta(\cdot\mid\bm{s}_t)}
        \left[Q_t^\theta(\bm{s}_t,a_t)\right].
\label{eq:future-term}
\end{equation}
In the sequence-level gradient decomposition, $Q_t^\theta$ is evaluated under the current policy and treated as fixed with respect to $\theta$ when differentiating the contribution at sequence index $t$ (\Secref{appendix:rkd-vla}). 
Using $a_t=\bm{G}_\theta(\bm{z},\bm{s}_t)$, the resulting future gradient contribution is
\begin{equation}
    \bm{g}_{\mathrm{Future}}(\theta;\bm{s}_t)
    =
    \mathbb{E}_{\bm{z}}
    \left[
    (\partial_\theta\bm{G}_\theta)^\top
    \nabla_{a_t}Q_t^\theta(\bm{s}_t,a_t)
    \right].
\label{eq:future-gradient}
\end{equation}
Directly evaluating $Q_t^\theta$ requires rolling out the current policy and evaluating the remaining divergence along its future trajectory. 
Instead, we use a frozen action-value critic
$Q_\phi(\bm{s}_t,a_t)$ to approximate the unavailable action gradient,
\begin{equation}
    \nabla_{a_t}Q_t^\theta(\bm{s}_t,a_t)
    \approx
    -\nabla_{a_t}Q_\phi(\bm{s}_t,a_t).
\label{eq:critic-q-substitution}
\end{equation}
Note that $Q_t^\theta$ denotes a divergence cost, whereas $Q_\phi$
is a value score for which larger values indicate better actions. 
The critic parameters $\phi$ are fixed during policy optimization, and the critic is independent of the VLA policy parameterization.
Accordingly, the future gradient contribution is approximated as
\begin{equation}
    \bm{g}_{\mathrm{Future}}(\theta;\bm{s}_t)
    \approx
    -\mathbb{E}_{\bm{z}}
    \left[
    (\partial_\theta\bm{G}_\theta)^\top
    \nabla_{a_t}Q_\phi(\bm{s}_t,a_t)
    \right].
\label{eq:future-critic-gradient}
\end{equation}
Equivalently, this update can be obtained from the critic surrogate
\begin{equation}
    \mathcal{L}_{\mathrm{Critic}}(\theta;\bm{s}_t)
    = -\mathbb{E}_{\bm{z}}
        \left[Q_\phi\!\left(\bm{s}_t,\bm{G}_\theta(\bm{z},\bm{s}_t)\right)
        \right],
\label{eq:critic-surrogate}
\end{equation}
where gradients are propagated through the generated action chunk while $Q_\phi$ remains frozen. Thus, the critic provides a tractable surrogate for the long-horizon future gradient contribution without requiring closed-loop rollouts during policy optimization.
A detailed analysis and justification of the critic surrogate, together with its error analysis, are provided in \Secref{appendix:future-critic}.

\subsection{Objective Function}
\label{ssec:method-objective}

Combining the local and future terms, we introduce
\textbf{Drift On-Policy Distillation (DriftOPD)}, a unified policy
optimization framework that accounts for both local action-distribution
matching and long-horizon action quality. The overall framework is summarized in \Figref{fig:main}. Specifically, the local
reverse KLD term is realized by the drifting objective in
\Secref{ssec:method-drifting}, while the future gradient contribution is approximated by the frozen critic surrogate in \Secref{ssec:method-critic}. 
Our overall objective is
\begin{equation}
    \mathcal{L}_{\mathrm{DriftOPD}}(\theta)
    = \mathbb{E}_{(\bm{s}_t,a_t^p)\sim\mathcal{D}}
    \left[\mathcal{L}_{\mathrm{Drift}}(\theta;a_t^p,\bm{s}_t)
        + \lambda
          \mathcal{L}_{\mathrm{Critic}}(\theta;\bm{s}_t)
    \right],
\label{eq:overall-objective}
\end{equation}
where $\mathcal{D}$ denotes the offline demonstration dataset, and $\lambda>0$ absorbs the constant scale $\eta$ induced by the drifting step and controls the relative contribution of the future term.

\begin{wrapfigure}[13]{r}{0.53\textwidth}
\vspace{-2.2em}
\begin{minipage}{0.53\textwidth}
\centering
\begin{algorithm}[H]
\caption{Drift On-Policy Distillation (DriftOPD)}
\label{alg:gtpos}
\small
\begin{algorithmic}[1]
    \Require offline demonstrations $\mathcal{D}$, critic
    $Q_\phi$, weight $\lambda>0$
    \Ensure distilled VLA policy $\bm{G}_\theta$
    \For{$k=1,\dots,N_{\mathrm{train}}$}
    \State Sample $(\bm{s}_t,a_t^p)\sim\mathcal{D}$ and
    $\bm{z}\sim\mathcal{N}(\bm{0},\bm{I})$
    \State Generate $a_t^q=\bm{G}_\theta(\bm{z},\bm{s}_t)$
    \State Estimate $\widehat{\bm{V}}_{\theta,h}(\widetilde a_t^q,\bm{s}_t)$
    using $a_t^p$ and policy samples
    \State $\displaystyle
    \mathcal{L}_{\mathrm{Drift}}
    = \left\|a_t^q
      - \mathrm{stopgrad}\!
        \left(a_t^q+\eta\widehat{\bm{V}}_{\theta,h}
        \right)
    \right\|_2^2$
    \State $\displaystyle
    \mathcal{L}_{\mathrm{DriftOPD}}
    = \mathcal{L}_{\mathrm{Drift}}
    - \lambda
        Q_\phi(\bm{s}_t,a_t^q)$
    \State $\displaystyle
    \theta\leftarrow
    \theta-\alpha
    \nabla_\theta\mathcal{L}_{\mathrm{DriftOPD}}$
    \Comment{$Q_\phi$ is frozen}
    \EndFor
\end{algorithmic}
\end{algorithm}
\end{minipage}
\end{wrapfigure}
\textbf{Algorithm.}
The complete
training procedure of DriftOPD is summarized in
Algorithm~\ref{alg:gtpos}. 
For each demonstration pair $(\bm{s}_t,a_t^p)$, we sample $M$ action chunks $a_t^q=\bm{G}_{\theta}(\bm{z},\bm{s}_t)$ from the current VLA policy and estimate the drifting field in \Eqref{eq:single-positive-drift}.
The resulting drifting update locally aligns the model action distribution with the demonstrated target action while retaining the model-distribution score term of reverse-KL optimization. 
In parallel, the frozen critic $Q_\phi$ provides long-horizon guidance by propagating $\nabla_a Q_\phi(\bm{s}_t,a)$ through the generated action chunk.
The critic parameters $\phi$ remain fixed throughout policy optimization.

\textbf{Takeaway.}
Consequently, the two components of DriftOPD directly correspond to
the decomposition in \Eqref{eq:vla-single-long}. 
The drifting term $\mathcal{L}_{\mathrm{Drift}}$ realizes local action-distribution matching, while the critic term $\mathcal{L}_{\mathrm{Critic}}$ provides a tractable surrogate for the future gradient contribution..
This formulation enables on-policy distillation of VLA action experts directly from generated action chunks, without requiring explicit action densities or closed-loop rollouts during policy optimization.

\section{Experiment}
\label{sec:exp}

We experimentally evaluate and analyze the effectiveness of DriftOPD on robot manipulation tasks. We use the $\pi_{0.5}$~\citep{pi05}, GR00T N1.5/N1.6~\citep{Gr00tN1}, and ABot-M0~\citep{ABot-M0} vision language action models, covering fully finetuned models, LoRA~\citep{LoRA} finetuned models, models finetuned on selected VLM layers, and a JiT~\citep{JiT}-style model that directly predicts clean actions. For the given embodiment-specific finetuned models, we further train each VLA using the DriftOPD objective (\Eqref{eq:overall-objective}) for only 10k steps. For the action-value critic $Q_\phi(\bm{s}_t,a_t)$ in \Secref{ssec:method-critic}, we adapt VGAS~\citep{VGAS}, a Q-function trained solely on demonstration data to avoid requiring rollout data for critic training.
To compare one-step distillation in terms of performance, stability, and applicability, we use the multi-step teacher, its one-step prediction, and existing one-step distillation methods as baselines, including sCD~\citep{sCM}, MFD~\citep{MFD}, and Drift~\citep{DriftingDistill}.

\subsection{Simulation Experiments}
\label{ssec:exp-sim}

\begin{table}[t]
\centering
\caption{Average task success rates on RoboCasa365, reported separately for each evaluation suite. Each task uses 30 episodes, totaling 540, 480, and 480 per suite.}
\vspace{-6pt}
\setlength{\tabcolsep}{5pt}
\renewcommand{\arraystretch}{1.05}
    \resizebox{\textwidth}{!}{%
        \begin{tabular}{c c c c l c c c c c c}
            \specialrule{0.5pt}{0pt}{3pt}
            \multirow{2}{*}{\textbf{Model}} &
            \multirow{2}{*}{\textbf{\shortstack{Action\\prediction}}} &
            \multirow{2}{*}{\textbf{\shortstack{Fine-tuning\\strategy}}} &
            \multirow{2}{*}{\textbf{\shortstack{Teacher\\steps}}} &
            \multirow{2}{*}{\textbf{Suite}} &
            \multirow{2}{*}{\textbf{Teacher}} &
            \multicolumn{5}{c}{\textbf{1-step}} \\
            \cmidrule(lr){7-11}
            & & & & & &
            \textbf{Teacher} & \textbf{sCD} & \textbf{MFD} &
            \textbf{Drift} & \textbf{DriftOPD} \\
            \specialrule{0.7pt}{1pt}{3pt}
            \multirow{3}{*}{$\pi_{0.5}$} & \multirow{3}{*}{\shortstack{Flow\\matching}} & \multirow{3}{*}{Full FT} & \multirow{3}{*}{10} & \texttt{atomic\_seen} (18) & 39.6 & 31.30 & 28.33 & 34.44 & 32.22 & \textbf{38.33} \\
            & & & & \texttt{composite\_seen} (16) & 7.1 & 0.83 & 1.04 & 2.50 & 5.00 & \textbf{5.42} \\
            & & & & \texttt{composite\_unseen} (16) & 1.2 & 0.83 & 0.83 & 0.83 & 1.04 & \textbf{1.67} \\
            \specialrule{0.3pt}{1pt}{3pt}
            \multirow{3}{*}{$\pi_{0.5}$} & \multirow{3}{*}{\shortstack{Flow\\matching}} & \multirow{3}{*}{\shortstack{LoRA\\($r=16$)}} & \multirow{3}{*}{10} & \texttt{atomic\_seen} (18) & 39.6 & 31.30 & 29.26 & 28.33 & 34.07 & \textbf{39.26} \\
            & & & & \texttt{composite\_seen} (16) & 7.1 & 0.83 & 1.67 & 2.71 & 3.12 & \textbf{5.00} \\
            & & & & \texttt{composite\_unseen} (16) & 1.2 & 0.83 & 0.83 & 1.25 & \textbf{1.46} & \textbf{1.46} \\
            \specialrule{0.3pt}{1pt}{3pt}
            \multirow{3}{*}{GR00T N1.5} & \multirow{3}{*}{\shortstack{Flow\\matching}} & \multirow{3}{*}{Action expert} & \multirow{3}{*}{4} & \texttt{atomic\_seen} (18) & 50.7 & 42.96 & 39.63 & 43.52 & 43.89 & \textbf{45.37} \\
            & & & & \texttt{composite\_seen} (16) & 14.8 & 7.71 & 4.17 & \textbf{8.54} & 8.12 & \textbf{8.54} \\
            & & & & \texttt{composite\_unseen} (16) & 2.7 & 2.50 & 1.67 & 2.71 & 2.29 & \textbf{2.92} \\
            \specialrule{0.3pt}{1pt}{3pt}
            \multirow{3}{*}{GR00T N1.6} & \multirow{3}{*}{\shortstack{Flow\\matching}} & \multirow{3}{*}{\shortstack{Action head +\\top-4 VLM layers}} & \multirow{3}{*}{4} & \texttt{atomic\_seen} (18) & 51.1 & 44.81 & \textbf{51.67} & 32.96 & 47.22 & 49.44 \\
            & & & & \texttt{composite\_seen} (16) & 9.4 & 2.71 & 3.54 & 0.83 & 3.96 & \textbf{5.00} \\
            & & & & \texttt{composite\_unseen} (16) & 1.7 & 0.83 & 0.62 & \textbf{1.04} & 0.62 & \textbf{1.04} \\
            \specialrule{0.5pt}{1pt}{0pt}
        \end{tabular}%
    }
\vspace{-6pt}
\label{tab:robocasa365}
\end{table}

\begin{figure}[t]
    \centering
    \vspace{-3pt}
    \includegraphics[width=0.9\textwidth]{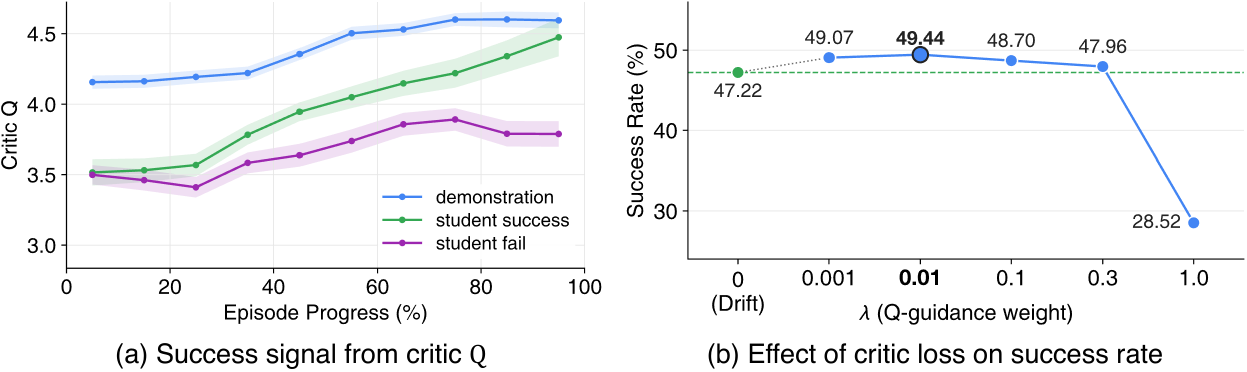}
    \vspace{-4pt}
    \caption{Analysis of critic guidance for sequence-level VLA policy optimization on RoboCasa365 with GR00T N1.6. (a) Critic $Q$ over episode progress for demonstrations, successful rollouts, and failed student rollouts. (b) Task success rate versus $\lambda$ on the \texttt{atomic\_seen} suite.}
    \vspace{-16pt}
    \label{fig:critic}
\end{figure}

Our simulation benchmarks include RoboCasa365~\citep{RoboCasa365} for large-scale household manipulation and RoboTwin 2.0~\citep{RoboTwin2} for bimanual manipulation under diverse domain variations. We additionally evaluate on LIBERO~\citep{Libero} and LIBERO-Plus~\citep{LiberoPlus} in \Secref{appendix:results}. All evaluations follow the official publicly released protocols for reproducibility. Further implementation details are provided in \Secref{appendix:implementation}.

\textbf{Main results.}
As shown in Table~\ref{tab:robocasa365} for RoboCasa365 and Table~\ref{tab:robotwin2} for RoboTwin 2.0, DriftOPD achieves task success rates that are mostly superior or comparable to sCD, MFD, and Drift. In several suites, it even surpasses the multi-step teacher despite using only one-step inference. Meanwhile, sCD and MFD sometimes underperform the one-step teacher prediction. We hypothesize that limited embodiment-specific data for VLAs exacerbate the difficulty of low-noise estimation, where the target distribution becomes sharply concentrated and score estimation is known to be unstable~\citep{LowNoise1,LowNoise2,LowNoise3}. This instability is particularly evident in the training collapse of MFD on ABot-M0, which directly predicts clean actions. In contrast, the single-positive drifting objectives of Drift and DriftOPD in \Eqref{eq:drifting-objective} may mitigate this issue through multi-bandwidth kernel density estimation~\citep{Drifting}.
DriftOPD further complements this robust chunk-level objective with sequence-level critic guidance, which accounts for the effect of the action chunk on future task progress and generally improves over chunk-level drifting distillation.

\textbf{Effect of critic guidance.}
To assess whether the critic provides a meaningful success-aware signal and its sensitivity to critic weighting, we analyze trajectory-level critic values and sweep $\lambda$. As shown in \Figref{fig:critic}a, the critic clearly separates successful from failed trajectories, supporting its use for sequence-level optimization. In \Figref{fig:critic}b, a broad range of $\lambda$ values ($0.001$-$0.3$) consistently improves success over the drifting-only objective ($\lambda=0$). An excessively large weight of $1.0$ causes performance collapse, likely because RMS normalization fixes the scale of the multi-bandwidth drifting loss~\citep{Drifting}. These two results support that the critic provides useful sequence-level guidance without requiring delicate tuning.

\textbf{GPU time to success.}
Unlike image or video generation, reducing action generation to a single inference step does not necessarily translate into faster task completion. A one-step policy may reduce success rate or require more policy calls because of noisier action sequences. We therefore measure GPU time to success, defined as the number of policy calls multiplied by the per-call latency, on 10 randomly selected RoboCasa365 \texttt{atomic\_seen} tasks with $\pi_{0.5}$ (\Figref{fig:gpu-time}). DriftOPD is $2.60\times$ faster than the 10-step teacher, thereby reducing GPU time to task success.

\begin{table}[t]
\centering
\caption{Average task success rates on RoboTwin 2.0, grouped by task horizon. Each task is evaluated over 30 episodes under paired environment settings. \texttt{short}, \texttt{medium}, and \texttt{long} horizons correspond to average trajectory lengths of $<150$, $150$-$279$, and $\geq280$ steps, respectively.}
\vspace{-6pt}
\setlength{\tabcolsep}{5pt}
\renewcommand{\arraystretch}{1.05}
    \resizebox{\textwidth}{!}{%
        \begin{tabular}{c c c c l c c c c c c}
            \specialrule{0.5pt}{0pt}{3pt}
            \multirow{2}{*}{\textbf{Model}} &
            \multirow{2}{*}{\textbf{\shortstack{Action\\objective}}} &
            \multirow{2}{*}{\textbf{\shortstack{Fine-tuning\\strategy}}} &
            \multirow{2}{*}{\textbf{\shortstack{Teacher\\steps}}} &
            \multirow{2}{*}{\textbf{Suite}} &
            \multirow{2}{*}{\textbf{Teacher}} &
            \multicolumn{5}{c}{\textbf{1-step}} \\
            \cmidrule(lr){7-11}
            & & & & & &
            \textbf{Teacher} & \textbf{sCD} & \textbf{MFD} &
            \textbf{Drift} & \textbf{DriftOPD} \\
            \specialrule{0.7pt}{1pt}{3pt}
            \multirow{3}{*}{$\pi_{0.5}$-LoRA} &
            \multirow{3}{*}{\shortstack{Flow\\matching}} &
            \multirow{3}{*}{\shortstack{LoRA\\($r=16$)}} &
            \multirow{3}{*}{10} &
            \texttt{short} (18) & 60.74 & 61.30 & 64.81 & 54.44 & 66.30 & \textbf{67.78} \\
            & & & & \texttt{medium} (21) & 67.30 & 56.35 & 61.11 & 48.25 & 61.90 & \textbf{64.13} \\
            & & & & \texttt{long} (11) & 55.15 & 40.61 & 36.97 & 16.36 & \textbf{48.48} & 46.67 \\
            \specialrule{0.3pt}{1pt}{3pt}
            \multirow{3}{*}{ABot-M0} &
            \multirow{3}{*}{\shortstack{Direct\\clean action}} &
            \multirow{3}{*}{\shortstack{Joint FT\\(VLM + action expert)}} &
            \multirow{3}{*}{10} &
            \texttt{short} (18) & 70.19 & 70.37 & 67.41 & 2.41 & \textbf{69.07} & 64.63 \\
            & & & & \texttt{medium} (21) & 73.97 & 67.78 & 66.83 & 4.13 & 65.71 & \textbf{68.10} \\
            & & & & \texttt{long} (11) & 36.97 & 36.97 & 30.61 & 0.91 & 35.76 & \textbf{38.79} \\
            \specialrule{0.5pt}{1pt}{0pt}
        \end{tabular}%
    }
\vspace{-5pt}
\label{tab:robotwin2}
\end{table}

\begin{figure}[t]
\centering
\begin{minipage}[t]{0.43\textwidth}
        \vspace{0pt}
        \centering
        \captionsetup{hypcap=false}
        \captionof{figure}{GPU time to success: policy calls $\times$ per-call latency (idle B200, batch 1) (Bars: total GPU time per success, including failed episodes; Diamonds (parenthesized): mean over successful episodes).}
    \vspace{-5pt}
        \includegraphics[width=\linewidth]{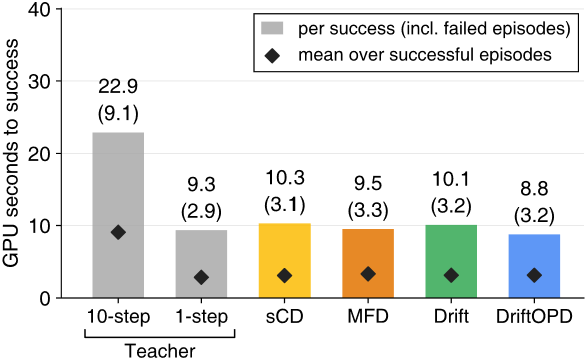}
        \label{fig:gpu-time}
    \vspace{-20pt}
\end{minipage}
\hfill
\begin{minipage}[t]{0.55\textwidth}
    \vspace{0pt}
    \centering
    \small
    \setlength{\tabcolsep}{6pt}
    \captionsetup{hypcap=false}
    \captionof{table}{Real-world evaluation in paired environments on unimanual pick-and-place and bimanual handover tasks. (parentheses: partial success rate (e.g., picking the object but failing to place or handover it)).}
    \vspace{-5pt}
    \renewcommand{\arraystretch}{1.1}
        \resizebox{\linewidth}{!}{%
            \begin{tabular}{l cccccc}
            \specialrule{0.5pt}{0pt}{3pt}
            \multirow{2}{*}{\textbf{Task}} &
            \multicolumn{1}{c}{\textbf{4-step}} &
            \multicolumn{5}{c}{\textbf{1-step}} \\
            \cmidrule(lr){3-7}
            & \textbf{Teacher} &
            \textbf{Teacher} &
            \textbf{sCD} &
            \textbf{MFD} &
            \textbf{Drift} &
            \textbf{DriftOPD} \\
            \specialrule{0.7pt}{1pt}{3pt}
            \multicolumn{7}{l}{\textit{\texttt{Unimanual (right arm)}}} \\
            Easy        & 93.3 & 40.0 & 0.0 (60.0) & 0.0 (0.0) & 86.7 & \textbf{93.3} \\
            Moderate    & 73.3 & 46.7 & 6.7 (20.0) & 0.0 (0.0) & \textbf{60.0} & \textbf{60.0} \\
            Hard        & 66.7 & 13.3 & 0.0 (0.0) & 0.0 (0.0) & 60.0 & \textbf{66.7} \\
            Challenging & 46.7 & 0.0  & 0.0 (0.0) & 0.0 (0.0) & \textbf{13.3} & \textbf{13.3} \\
            \cmidrule(lr){2-7}
            Avg & 70.0 & 25.0 & 1.7 & 0.0 & 55.0 & \textbf{58.3} \\
            \specialrule{0.3pt}{1pt}{3pt}
            \multicolumn{7}{l}{\textit{\texttt{Bimanual}}} \\
            \texttt{Handover (LR)} & 73.3 & 60.0 & 0.0 (0.0) & 0.0 (0.0) & 53.3 & \textbf{80.0} \\
            \texttt{Handover (RL)} & 86.7 & 53.3 & 0.0 (26.7) & 0.0 (20.0) & 40.0 & \textbf{73.3} \\
            \cmidrule(lr){2-7}
            \textit{Avg} & 80.0 & 56.7 & 0.0 & 0.0 & 46.7 & \textbf{76.7} \\
            \specialrule{0.5pt}{1pt}{0pt}
            \end{tabular}%
        }
        \label{tab:realworld-gr00t}
    \vspace{-20pt}
\end{minipage}
\end{figure}

\subsection{Real-World Experiments}
\label{ssec:exp-real}

For real-world evaluation, we use the MolmoAct 2 Research Kit\footnote{\url{https://i2rt.com/products/molmoact-2-research-kit}} from I2RT Robotics. 
We finetune the cross-embodiment pretrained GR00T N1.6 base model on 638 teleoperated episodes comprising 366k frames, and subsequently distill the resulting policy into one-step students for 10k training steps. 
We evaluate four unimanual pick-and-place tasks with increasing difficulty using the right YAM robot arm, as well as two bimanual pick-and-handover-and-place tasks using two YAM robot arms. As summarized in Table~\ref{tab:realworld-gr00t}, DriftOPD achieves the highest one-step success rates on both the unimanual and bimanual tasks. A detailed analysis of action oscillation is provided in \Secref{appendix:oscilation}, while the hardware setup and evaluation protocol are described in \Secref{appendix:implementation}.

\section{Conclusion}
\label{sec:conc}

We introduce DriftOPD, a sequence-level one-step distillation framework for VLAs. Starting from the reverse KL divergence over action-sequence distributions, we decompose sequence-level OPD into a local action-distribution term and a future-sequence term. 
DriftOPD realizes the local term with a single-positive drifting objective and approximates the future gradient contribution with a demonstration-trained critic, enabling rollout-free one-step policy optimization from offline demonstrations. 
Across diverse VLA configurations in simulation and real-world manipulation, DriftOPD generally outperforms existing one-step distillation baselines while achieving performance comparable to multi-step policies. 
Our results further suggest that limited embodiment-specific data can exacerbate low-noise estimation, while the drifting objective provides stable distribution matching and the critic supplies effective sequence-level guidance. 
Overall, DriftOPD provides an efficient and sequence-aware approach to one-step VLA distillation without online rollouts or a separate multi-step teacher.
Further discussion is provided in \Secref{appendix:limitation}.

\subsubsection*{Author Contributions}
Y. Jun and J.C. Ye contributed to the conceptualization and study design.
Research execution was carried out by Y. Jun.
For the robot experiments, J. Park set up the I2RT robot hardware, and S. Jin, Y. Kim, and S. Park developed the software for robot data acquisition and action execution.
Robot data collection was performed by Y. Jun, K. Choi, and Y. Kim through teleoperation using leader arms.
Real-robot execution and evaluation were conducted by Y. Jun and K. Choi.
The manuscript was written by Y. Jun and J.C. Ye, and all authors reviewed and approved the final manuscript.

\clearpage
\bibliography{bib}
\bibliographystyle{iclr2027_conference}

\clearpage
\appendix

\section{Theoretical Demonstration}
\label{appendix:theory}

\subsection{Drifting Field and Reverse KL}
\label{appendix:drifting-rkl}

We provide a derivation connecting the drifting field in \Secref{ssec:prelim-drifting} to the local descent direction of the reverse KLD. Throughout this section, we fix the conditioning context $\bm{s}$ and assume that $p(a_t\mid\bm{s})$ and $q_\theta(a_t\mid\bm{s})$ are positive and continuously differentiable, with sufficient decay at the boundary to justify integration by parts.

\paragraph{Reverse-KL descent in sample space.}
For a fixed condition $\bm{s}$, consider the reverse KLD
\begin{equation}
    \mathcal{F}(q_\theta)=D_{\mathrm{KL}}\!\left[q_\theta(a_t\mid\bm{s})\,\|\,p(a_t\mid\bm{s})\right]=\int q_\theta(a_t\mid\bm{s})\log\frac{q_\theta(a_t\mid\bm{s})}{p(a_t\mid\bm{s})}\,da_t.
\label{eq:app-rkl}
\end{equation}
To characterize a descent direction directly in the sample space, let $\bm{u}(a_t,\bm{s})$ be a smooth vector field and perturb each sample infinitesimally as
\begin{equation}
    T_\epsilon(a_t)=a_t+\epsilon\bm{u}(a_t,\bm{s}), \qquad q_{\theta,\epsilon}=(T_\epsilon)_{\#}q_\theta,
\label{eq:app-transport}
\end{equation}
where $(T_\epsilon)_{\#}q_\theta$ denotes the pushforward of $q_\theta$ through $T_\epsilon$. The induced density satisfies the continuity equation
\begin{equation}
    \left.\frac{\partial q_{\theta,\epsilon}}{\partial\epsilon}\right|_{\epsilon=0}=-\nabla_{a_t}\!\cdot\!\left(q_\theta(a_t\mid\bm{s})\bm{u}(a_t,\bm{s})\right).
\label{eq:app-continuity}
\end{equation}
Differentiating \Eqref{eq:app-rkl} along this perturbation gives
\begin{equation}
    \begin{aligned}
        \left.\frac{d}{d\epsilon}\mathcal{F}(q_{\theta,\epsilon})\right|_{\epsilon=0}
        &=\int \left.\frac{\partial q_{\theta,\epsilon}}{\partial\epsilon}\right|_{\epsilon=0}
        \left(\log\frac{q_\theta}{p}+1\right)da_t \\
        &=-\int \nabla_{a_t}\!\cdot(q_\theta\bm{u})\left(\log\frac{q_\theta}{p}+1\right)da_t \\
        &=\int q_\theta\,\bm{u}^{\top}\nabla_{a_t}\log\frac{q_\theta}{p}\,da_t \\
        &=-\mathbb{E}_{a_t\sim q_\theta(\cdot\mid\bm{s})}
        \left[\bm{u}(a_t,\bm{s})^{\top}\bm{V}^{*}(a_t,\bm{s})\right],
    \end{aligned}
\label{eq:app-rkl-directional}
\end{equation}
where
\begin{equation}
    \bm{V}^{*}(a_t,\bm{s})=\nabla_{a_t}\log p(a_t\mid\bm{s})-\nabla_{a_t}\log q_\theta(a_t\mid\bm{s}).
\label{eq:app-optimal-field}
\end{equation}
Here, the third equality follows from integration by parts. Choosing $\bm{u}=\bm{V}^{*}$ yields
\begin{equation}
    \left.\frac{d}{d\epsilon}\mathcal{F}(q_{\theta,\epsilon})\right|_{\epsilon=0}
    =-\mathbb{E}_{a_t\sim q_\theta(\cdot\mid\bm{s})}\!\left[\|\bm{V}^{*}(a_t,\bm{s})\|_2^2\right]\leq 0.
\label{eq:app-rkl-decrease}
\end{equation}
Therefore, the difference between the target and model scores defines a local descent direction of the reverse KLD in sample space.

\paragraph{KDE-based drifting field.}
When $p(a_t\mid\bm{s})$ and $q_\theta(a_t\mid\bm{s})$ cannot be evaluated explicitly, we replace them by the kernel-smoothed densities
\begin{equation}
    p_h(a_t\mid\bm{s})=\mathbb{E}_{a_t^p\sim p(\cdot\mid\bm{s})}[K_h(a_t,a_t^p)], \qquad q_{\theta,h}(a_t\mid\bm{s})=\mathbb{E}_{a_t^q\sim q_\theta(\cdot\mid\bm{s})}[K_h(a_t,a_t^q)].
\label{eq:app-kde}
\end{equation}
Assuming that differentiation and expectation can be interchanged, the score of the smoothed target distribution is
\begin{equation}
    \nabla_{a_t}\log p_h(a_t\mid\bm{s})
    =
    \frac{\mathbb{E}_{a_t^p\sim p(\cdot\mid\bm{s})}[\nabla_{a_t}K_h(a_t,a_t^p)]}
    {\mathbb{E}_{a_t^p\sim p(\cdot\mid\bm{s})}[K_h(a_t,a_t^p)]},
\label{eq:app-kde-p-score}
\end{equation}
and analogously,
\begin{equation}
    \nabla_{a_t}\log q_{\theta,h}(a_t\mid\bm{s})
    =
    \frac{\mathbb{E}_{a_t^q\sim q_\theta(\cdot\mid\bm{s})}[\nabla_{a_t}K_h(a_t,a_t^q)]}
    {\mathbb{E}_{a_t^q\sim q_\theta(\cdot\mid\bm{s})}[K_h(a_t,a_t^q)]}.
\label{eq:app-kde-q-score}
\end{equation}
Applying the reverse-KL descent result in \Eqref{eq:app-optimal-field} to $p_h$ and $q_{\theta,h}$ therefore gives
\begin{equation}
    \bm{V}_{\theta,h}(a_t,\bm{s})=\frac{\mathbb{E}_{a_t^p\sim p(\cdot\mid\bm{s})}[\nabla_{a_t}K_h(a_t,a_t^p)]}{\mathbb{E}_{a_t^p\sim p(\cdot\mid\bm{s})}[K_h(a_t,a_t^p)]}-\frac{\mathbb{E}_{a_t^q\sim q_\theta(\cdot\mid\bm{s})}[\nabla_{a_t}K_h(a_t,a_t^q)]}{\mathbb{E}_{a_t^q\sim q_\theta(\cdot\mid\bm{s})}[K_h(a_t,a_t^q)]},
\label{eq:app-kde-field}
\end{equation}
which recovers the sample-based drifting field in \Eqref{eq:drifting-field}.

For example, with a Gaussian kernel $K_h(a,a')\propto\exp(-\|a-a'\|_2^2/(2h^2))$, we have
\begin{equation}
    \nabla_a K_h(a,a')=K_h(a,a')\frac{a'-a}{h^2}.
\label{eq:app-gaussian-kernel-grad}
\end{equation}
Hence, given target samples $\{a_{t,i}^{p}\}_{i=1}^{N_p}$ and model samples $\{a_{t,j}^{q}\}_{j=1}^{N_q}$, the empirical drifting field can be written as
\begin{equation}
    \widehat{\bm{V}}_{\theta,h}(a_t,\bm{s})=\frac{1}{h^2}\left(\sum_{i=1}^{N_p}w_i^p a_{t,i}^p-\sum_{j=1}^{N_q}w_j^q a_{t,j}^q\right),
\label{eq:app-empirical-field}
\end{equation}
where
\begin{equation}
    w_i^p=\frac{K_h(a_t,a_{t,i}^p)}{\sum_{k=1}^{N_p}K_h(a_t,a_{t,k}^p)}, \qquad w_j^q=\frac{K_h(a_t,a_{t,j}^q)}{\sum_{k=1}^{N_q}K_h(a_t,a_{t,k}^q)}.
\label{eq:app-kde-weights}
\end{equation}
Thus, the KDE score difference has the intuitive form of attraction toward nearby target samples and repulsion from nearby model samples.

\paragraph{Drifting objective as reverse-KL optimization.}
We finally show why moving generated samples toward the drifting field yields the gradient of the smoothed reverse KLD. Let $a_t=G_\theta(z,\bm{s})$, where $z$ is sampled from the base distribution, and let $\tilde{a}_t=a_t+\epsilon_h$ with $\epsilon_h\sim\kappa_h$. Then $\tilde{a}_t$ follows the kernel-smoothed model distribution $q_{\theta,h}(\cdot\mid\bm{s})$. Define
\begin{equation}
    \mathcal{L}_{\mathrm{RKL}}^{h}(\theta;\bm{s})=D_{\mathrm{KL}}\!\left[q_{\theta,h}(\cdot\mid\bm{s})\,\|\,p_h(\cdot\mid\bm{s})\right].
\label{eq:app-smoothed-rkl}
\end{equation}
Using the reparameterization $\tilde{a}_t=G_\theta(z,\bm{s})+\epsilon_h$, its gradient is
\begin{equation}
    \nabla_\theta\mathcal{L}_{\mathrm{RKL}}^{h}
    =-\mathbb{E}_{z,\epsilon_h}\!\left[\left(\partial_\theta G_\theta(z,\bm{s})\right)^\top\bm{V}_{\theta,h}(\tilde{a}_t,\bm{s})\right].
\label{eq:app-smoothed-rkl-grad}
\end{equation}
To see this explicitly, differentiating the reparameterized objective gives
\begin{equation}
\begin{aligned}
    \nabla_\theta\mathcal{L}_{\mathrm{RKL}}^{h}
    &=\mathbb{E}_{z,\epsilon_h}\!\left[\partial_\theta\log q_{\theta,h}(\tilde{a}_t\mid\bm{s})+\left(\partial_\theta G_\theta\right)^\top\nabla_{\tilde{a}_t}\log\frac{q_{\theta,h}(\tilde{a}_t\mid\bm{s})}{p_h(\tilde{a}_t\mid\bm{s})}\right] \\
    &=\mathbb{E}_{z,\epsilon_h}\!\left[\left(\partial_\theta G_\theta\right)^\top\left(\nabla_{\tilde{a}_t}\log q_{\theta,h}-\nabla_{\tilde{a}_t}\log p_h\right)\right],
\end{aligned}
\label{eq:app-pathwise-rkl}
\end{equation}
where the explicit density derivative vanishes because
\begin{equation}
    \mathbb{E}_{\tilde{a}_t\sim q_{\theta,h}}\!\left[\partial_\theta\log q_{\theta,h}(\tilde{a}_t\mid\bm{s})\right]=\int \partial_\theta q_{\theta,h}(a\mid\bm{s})\,da=\partial_\theta 1=0.
\label{eq:app-score-identity}
\end{equation}
Substituting the definition of $\bm{V}_{\theta,h}$ then yields \Eqref{eq:app-smoothed-rkl-grad}.

Now define the drifting objective with a stop-gradient target,
\begin{equation}
    \mathcal{L}_{\mathrm{drift}}(\theta;\bm{s})=\mathbb{E}_{z,\epsilon_h}\!\left[\left\|G_\theta(z,\bm{s})-\mathrm{stopgrad}\!\left(a_t+\eta\bm{V}_{\theta,h}(\tilde{a}_t,\bm{s})\right)\right\|_2^2\right].
\label{eq:app-drift-loss}
\end{equation}
Since the target inside $\mathrm{stopgrad}(\cdot)$ is treated as constant during differentiation and $a_t=G_\theta(z,\bm{s})$ numerically at the current iteration,
\begin{equation}
    \begin{aligned}
        \nabla_\theta\mathcal{L}_{\mathrm{drift}}
        &=2\mathbb{E}_{z,\epsilon_h}\!\left[\left(\partial_\theta G_\theta\right)^\top\left(G_\theta-\mathrm{stopgrad}\!\left(a_t+\eta\bm{V}_{\theta,h}\right)\right)\right] \\
        &=-2\eta\,\mathbb{E}_{z,\epsilon_h}\!\left[\left(\partial_\theta G_\theta\right)^\top\bm{V}_{\theta,h}(\tilde{a}_t,\bm{s})\right] \\
        &=2\eta\,\nabla_\theta\mathcal{L}_{\mathrm{RKL}}^{h}(\theta;\bm{s}).
    \end{aligned}
\label{eq:app-drift-rkl-equivalence}
\end{equation}
Therefore, for any $\eta>0$, minimizing the drifting objective follows the same parameter-space descent direction as minimizing the KDE-smoothed reverse KLD, up to the positive scaling factor $2\eta$.

\subsection{Sequence-Level Reverse KL Divergence in VLA}
\label{appendix:rkd-vla}

We provide the full derivation of the sequence-level reverse-KL decomposition used in \Secref{ssec:method-reverseKL}. We assume that the model and target policies share the same initial context distribution $\rho$ and transition dynamics $P$, and that $\pi_\theta(\cdot\mid\bm{s}_t)$ is absolutely continuous with respect to $\pi(\cdot\mid\bm{s}_t)$ whenever an unsmoothed density ratio is used. All required exchanges of differentiation and expectation are assumed valid.

From \Eqref{eq:vla-traj-dist}, the ratio between the model and target trajectory distributions is
\begin{equation}
    \frac{q_\theta(\tau)}{p(\tau)}
    =
    \prod_{t=1}^{T}
    \frac{\pi_\theta(a_t\mid\bm{s}_t)}
    {\pi(a_t\mid\bm{s}_t)},
\label{eq:appendix-traj-ratio}
\end{equation}
because the common factors $\rho(\bm{s}_1)$ and $P(\bm{s}_{t+1}\mid\bm{s}_t,a_t)$ cancel. Taking the logarithm and expectation under $q_\theta$ gives
\begin{equation}
    \mathcal{L}_{\mathrm{RKL}}(\theta)
    =
    \mathbb{E}_{\tau\sim q_\theta}
    \left[
    \sum_{t=1}^{T}
    \log
    \frac{\pi_\theta(a_t\mid\bm{s}_t)}
    {\pi(a_t\mid\bm{s}_t)}
    \right].
\label{eq:appendix-seq-rkl}
\end{equation}

To derive the Bellman recursion, define the reverse-KL cost-to-go as
\begin{equation}
    J_t(\bm{s}_t)
    =
    \mathbb{E}_{\tau\sim q_\theta}
    \left[
    \left.
    \sum_{k=t}^{T}
    \log
    \frac{\pi_\theta(a_k\mid\bm{s}_k)}
    {\pi(a_k\mid\bm{s}_k)}
    \,\right|\,\bm{s}_t
    \right],
    \qquad
    J_{T+1}\equiv0.
\label{eq:appendix-cost-to-go}
\end{equation}
Conditioning first on $a_t$ and then on $\bm{s}_{t+1}$ yields
\begin{equation}
    \begin{aligned}
        J_t(\bm{s}_t)
        &=
        \mathbb{E}_{a_t\sim\pi_\theta(\cdot\mid\bm{s}_t)}
        \left[
        \log\frac{\pi_\theta(a_t\mid\bm{s}_t)}
        {\pi(a_t\mid\bm{s}_t)}
        +
        \mathbb{E}_{\bm{s}_{t+1}\sim P(\cdot\mid\bm{s}_t,a_t)}
        [J_{t+1}(\bm{s}_{t+1})]
        \right] \\
        &=
        D_{\mathrm{KL}}
        \left[
        \pi_\theta(\cdot\mid\bm{s}_t)
        \,\|\,\pi(\cdot\mid\bm{s}_t)
        \right]
        +
        \mathbb{E}_{\substack{
        a_t\sim\pi_\theta(\cdot\mid\bm{s}_t)\\
        \bm{s}_{t+1}\sim P(\cdot\mid\bm{s}_t,a_t)}}
        [J_{t+1}(\bm{s}_{t+1})],
    \end{aligned}
\label{eq:appendix-bellman}
\end{equation}
which recovers \Eqref{eq:vla-single-long}. In particular, $\mathcal{L}_{\mathrm{RKL}}(\theta)=\mathbb{E}_{\bm{s}_1\sim\rho}[J_1(\bm{s}_1)]$.

We next derive the corresponding gradient decomposition. Let $d_t^\theta$ denote the marginal distribution of $\bm{s}_t$ induced by $q_\theta$. Differentiating \Eqref{eq:appendix-seq-rkl} gives
\begin{equation}
    \nabla_\theta\mathcal{L}_{\mathrm{RKL}}
    =
    \sum_{t=1}^{T}
    \mathbb{E}_{\tau\sim q_\theta}
    \left[
    \nabla_\theta\log\pi_\theta(a_t\mid\bm{s}_t)
    \sum_{k=t}^{T}
    \log
    \frac{\pi_\theta(a_k\mid\bm{s}_k)}
    {\pi(a_k\mid\bm{s}_k)}
    \right].
\label{eq:appendix-seq-gradient}
\end{equation}
To obtain \Eqref{eq:appendix-seq-gradient}, the explicit derivative of the log-ratio sum vanishes in expectation by the score identity, $\mathbb{E}_{a_t\sim\pi_\theta}[\nabla_\theta\log\pi_\theta(a_t\mid\bm{s}_t)]=0$, while terms preceding $t$ vanish by causality. Conditioning the remaining return on $(\bm{s}_t,a_t)$ gives the current log-ratio plus the divergence-to-go
\begin{equation}
    Q_t^\theta(\bm{s}_t,a_t)
    =
    \mathbb{E}_{\bm{s}_{t+1}\sim P(\cdot\mid\bm{s}_t,a_t)}
    [J_{t+1}(\bm{s}_{t+1})].
\label{eq:appendix-future-q}
\end{equation}
Therefore,
\begin{equation}
    \nabla_\theta\mathcal{L}_{\mathrm{RKL}}
    =
    \sum_{t=1}^{T}
    \mathbb{E}_{\bm{s}_t\sim d_t^\theta}
    \left[
    \nabla_\theta\mathcal{L}_{\mathrm{Local}}(\theta;\bm{s}_t)
    +
    \nabla_\theta
    \mathbb{E}_{a_t\sim\pi_\theta(\cdot\mid\bm{s}_t)}
    \left[
    \mathrm{stopgrad}\!\left(Q_t^\theta(\bm{s}_t,a_t)\right)
    \right]
    \right].
\label{eq:appendix-local-future-gradient}
\end{equation}
Here, $Q_t^\theta$ is evaluated at the current policy but held fixed when differentiating the contribution of $a_t$. The dependence of future actions on $\theta$ is accounted for by their corresponding terms at later sequence indices. Using $a_t=\bm{G}_\theta(\bm{z},\bm{s}_t)$, the future component admits the pathwise form
\begin{equation}
    \bm{g}_{\mathrm{Future}}(\theta;\bm{s}_t)
    :=
    \mathbb{E}_{\bm{z}}
    \left[
    (\partial_\theta\bm{G}_\theta)^\top
    \nabla_{a_t}Q_t^\theta(\bm{s}_t,a_t)
    \right].
\label{eq:appendix-future-pathwise}
\end{equation}

\paragraph{Offline context substitution.}
The exact gradient in \Eqref{eq:appendix-local-future-gradient} is evaluated under the model-induced context distribution $d_t^\theta$, whereas DriftOPD uses contexts from the offline demonstrations. Let $d_t^\pi$ denote the target-policy context distribution and let $\bm{g}_t(\bm{s}_t;\theta)$ denote the quantity inside the brackets of \Eqref{eq:appendix-local-future-gradient}. If $\|\bm{g}_t(\bm{s}_t;\theta)\|_2\le M$ for all $t$ and $\bm{s}_t$, then
\begin{equation}
    \left\|
    \sum_{t=1}^{T}
    \left(
    \mathbb{E}_{d_t^\theta}
    -
    \mathbb{E}_{d_t^\pi}
    \right)
    [\bm{g}_t]
    \right\|_2
    \le
    M\sum_{t=1}^{T}
    \sqrt{
    2D_{\mathrm{KL}}
    \left[
    q_\theta^{1:t}\,\|\,p^{1:t}
    \right]
    }
    \le
    TM\sqrt{2\mathcal{L}_{\mathrm{RKL}}(\theta)},
\label{eq:appendix-occupancy-bound}
\end{equation}
where $q_\theta^{1:t}$ and $p^{1:t}$ denote the corresponding trajectory-prefix distributions. The first inequality follows from Pinsker's inequality and the contraction of KLD under marginalization, and the second from $D_{\mathrm{KL}}[q_\theta^{1:t}\|p^{1:t}]\le D_{\mathrm{KL}}[q_\theta\|p]$. Thus, replacing model-generated contexts by demonstration contexts introduces an occupancy error that vanishes as the model trajectory distribution approaches the target distribution, although the bound can be loose when the two policies are far apart.

\subsection{Local Term and Drifting Field}
\label{appendix:local-drift}

We next derive the drifting update for the local term without assuming a particular kernel family. Let $K_h(a,a')=\kappa_h(a-a')$, where $\kappa_h$ is a normalized, nonnegative, differentiable density. The smoothed target and model distributions in \Eqref{eq:local-kernel-density} can equivalently be written as convolutions, $\pi_h=\pi*\kappa_h$ and $\pi_{\theta,h}=\pi_\theta*\kappa_h$. Accordingly, if $a_t=\bm{G}_\theta(\bm{z},\bm{s}_t)$ and $\bm{\epsilon}_h\sim\kappa_h$, then $\widetilde a_t=a_t+\bm{\epsilon}_h$ follows $\pi_{\theta,h}(\cdot\mid\bm{s}_t)$.

\paragraph{KDE-based score estimation.}
Given samples $\{a_i^p\}_{i=1}^{N_p}$ from $\pi(\cdot\mid\bm{s}_t)$ and $\{a_j^q\}_{j=1}^{N_q}$ from $\pi_\theta(\cdot\mid\bm{s}_t)$, the empirical KDEs are
\begin{equation}
    \widehat{\pi}_h(a\mid\bm{s}_t)
    =
    \frac{1}{N_p}\sum_{i=1}^{N_p}K_h(a,a_i^p),
    \qquad
    \widehat{\pi}_{\theta,h}(a\mid\bm{s}_t)
    =
    \frac{1}{N_q}\sum_{j=1}^{N_q}K_h(a,a_j^q).
\label{eq:appendix-empirical-kde}
\end{equation}
Their score estimates follow directly by differentiating the KDE and dividing by its density,
\begin{equation}
    \nabla_a\log\widehat{\pi}_h(a\mid\bm{s}_t)
    =
    \frac{\sum_{i=1}^{N_p}\nabla_aK_h(a,a_i^p)}
    {\sum_{i=1}^{N_p}K_h(a,a_i^p)},
    \qquad
    \nabla_a\log\widehat{\pi}_{\theta,h}(a\mid\bm{s}_t)
    =
    \frac{\sum_{j=1}^{N_q}\nabla_aK_h(a,a_j^q)}
    {\sum_{j=1}^{N_q}K_h(a,a_j^q)}.
\label{eq:appendix-kde-score}
\end{equation}
Thus, the population field in \Eqref{eq:local-drift-field} can be estimated entirely from samples. For fixed $h$, the empirical density and its derivative converge to their population counterparts under standard regularity conditions, and hence the score estimate converges whenever the limiting density is positive. With a vanishing bandwidth, standard KDE conditions further recover the unsmoothed score; for a Gaussian kernel in $d$ dimensions, a sufficient condition is $h\rightarrow0$ and $Nh^{d+2}\rightarrow\infty$~\citep{SmoothingTechniques,DensitiyEstimation}.

\newtheorem*{propdriftrestate}{Proposition~\ref{prop:local-drift}}

\begin{propdriftrestate}[Local reverse KL as drifting, restated]
\label{prop:appendix-local-drift}
    Let $a_t=\bm{G}_{\theta}(\bm{z},\bm{s}_t)$ and $\widetilde a_t=a_t+\bm{\epsilon}_h$, where $\bm{z}\sim\mathcal{N}(\bm{0},\bm{I})$ and $\bm{\epsilon}_h\sim\kappa_h$ are independent. Define
    \begin{equation}
        \mathcal{L}_{\mathrm{drift}}(\theta;\bm{s}_t)
        =
        \mathbb{E}_{\bm{z},\bm{\epsilon}_h}
        \left[
        \left\|
        \bm{G}_{\theta}(\bm{z},\bm{s}_t)
        -
        \mathrm{stopgrad}
        \left(
        a_t+\eta\bm{V}_{\theta,h}(\widetilde a_t,\bm{s}_t)
        \right)
        \right\|_2^2
        \right].
    \label{eq:appendix-drift-loss}
    \end{equation}
    Then, for any $\eta>0$,
    \begin{equation}
        \nabla_\theta\mathcal{L}_{\mathrm{Local}}^h(\theta;\bm{s}_t)
        =
        \frac{1}{2\eta}
        \nabla_\theta\mathcal{L}_{\mathrm{drift}}(\theta;\bm{s}_t).
    \label{eq:appendix-drift-equivalence}
    \end{equation}
\end{propdriftrestate}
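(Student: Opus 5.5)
The plan is to compute both gradients explicitly as pathwise expectations over $(\bm{z},\bm{\epsilon}_h)$ and show they coincide up to the factor $2\eta$. The argument mirrors the computation in \Secref{appendix:drifting-rkl}, now specialized to the local term $\mathcal{L}_{\mathrm{Local}}^h$ and a general kernel $\kappa_h$.

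\textbf{Reparameterize the smoothed model distribution.} First I note that $\pi_{\theta,h}=\pi_\theta*\kappa_h$. Since $\bm{z}$ and $\bm{\epsilon}_h$ are independent, $\widetilde a_t=\bm{G}_\theta(\bm{z},\bm{s}_t)+\bm{\epsilon}_h$ is distributed as $\pi_{\theta,h}(\cdot\mid\bm{s}_t)$. Hence
\[
\mathcal{L}_{\mathrm{Local}}^h(\theta;\bm{s}_t)=\mathbb{E}_{\bm{z},\bm{\epsilon}_h}\!\left[\log\pi_{\theta,h}(\widetilde a_t\mid\bm{s}_t)-\log\pi_h(\widetilde a_t\mid\bm{s}_t)\right].
\]

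\textbf{Differentiate the local term.} I differentiate under the expectation. The total derivative of $\log\pi_{\theta,h}(\widetilde a_t)$ splits into two parts. The first is the explicit parameter derivative $\partial_\theta\log\pi_{\theta,h}$ at fixed argument; its expectation under $\pi_{\theta,h}$ is $\int\partial_\theta\pi_{\theta,h}\,da=\partial_\theta 1=0$, by the score identity \Eqref{eq:app-score-identity}. The second is the chain-rule part $(\partial_\theta\bm{G}_\theta)^\top\nabla_a\log\pi_{\theta,h}(\widetilde a_t)$, using $\partial_\theta\widetilde a_t=\partial_\theta\bm{G}_\theta$. The target term $\pi_h$ does not depend on $\theta$, so it contributes only $-(\partial_\theta\bm{G}_\theta)^\top\nabla_a\log\pi_h(\widetilde a_t)$. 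Combining these with \Eqref{eq:local-drift-field} gives
\[
\nabla_\theta\mathcal{L}_{\mathrm{Local}}^h=-\mathbb{E}\!\left[(\partial_\theta\bm{G}_\theta)^\top\bm{V}_{\theta,h}(\widetilde a_t,\bm{s}_t)\right].
\]

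\textbf{Differentiate the drift loss.} The stop-gradient target is constant for differentiation, so
\[
\nabla_\theta\mathcal{L}_{\mathrm{drift}}=2\,\mathbb{E}\!\left[(\partial_\theta\bm{G}_\theta)^\top\bigl(\bm{G}_\theta-a_t-\eta\bm{V}_{\theta,h}\bigr)\right].
\]
Because $a_t=\bm{G}_\theta(\bm{z},\bm{s}_t)$ numerically, this reduces to $-2\eta\,\mathbb{E}[(\partial_\theta\bm{G}_\theta)^\top\bm{V}_{\theta,h}]$. Comparing with the previous step yields \Eqref{eq:appendix-drift-equivalence} for any $\eta>0$.

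\textbf{Main obstacle.} The delicate step is justifying the vanishing of the explicit $\partial_\theta\log\pi_{\theta,h}$ term. Here $\pi_{\theta,h}$ is an implicit pushforward density, so its $\theta$-derivative needs care. I would handle this as follows.
\begin{itemize}
\item Write $\pi_{\theta,h}(a)=\mathbb{E}_{\bm{z}}[\kappa_h(a-\bm{G}_\theta(\bm{z},\bm{s}_t))]$. This is differentiable in $\theta$ because $\kappa_h$ is differentiable.
\item Invoke dominated convergence to interchange $\partial_\theta$ with the integral over $a$. This is the standing "exchange of differentiation and expectation" assumption of \Secref{appendix:rkd-vla}.
\item Positivity of the convolution densities, on the support where the scores are evaluated, keeps the logarithms and scores well defined.
\end{itemize}
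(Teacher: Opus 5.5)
Your proposal is correct and follows the paper's proof essentially step for step. You reparameterize $\widetilde a_t\sim\pi_{\theta,h}(\cdot\mid\bm{s}_t)$, drop the explicit $\partial_\theta\log\pi_{\theta,h}$ term by the score identity, and use the chain rule to get $-\mathbb{E}[(\partial_\theta\bm{G}_\theta)^\top\bm{V}_{\theta,h}(\widetilde a_t,\bm{s}_t)]$, which you compare with the stop-gradient derivative $-2\eta\,\mathbb{E}[(\partial_\theta\bm{G}_\theta)^\top\bm{V}_{\theta,h}(\widetilde a_t,\bm{s}_t)]$; your regularity remarks only make explicit the interchange and positivity assumptions the paper takes for granted.
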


\begin{proof}
    Since $\widetilde a_t=\bm{G}_\theta(\bm{z},\bm{s}_t)+\bm{\epsilon}_h$ follows $\pi_{\theta,h}(\cdot\mid\bm{s}_t)$,
    \begin{equation}
        \mathcal{L}_{\mathrm{Local}}^h
        =
        \mathbb{E}_{\bm{z},\bm{\epsilon}_h}
        \left[
        \log\pi_{\theta,h}(\widetilde a_t\mid\bm{s}_t)
        -
        \log\pi_h(\widetilde a_t\mid\bm{s}_t)
        \right].
    \label{eq:appendix-smoothed-kl-reparam}
    \end{equation}
    Differentiating with respect to $\theta$ gives
    \begin{equation}
        \begin{aligned}
            \nabla_\theta\mathcal{L}_{\mathrm{Local}}^h
            &=
            \mathbb{E}_{\bm{z},\bm{\epsilon}_h}
            \left[
            \partial_\theta\log\pi_{\theta,h}(\widetilde a_t\mid\bm{s}_t)
            \right]
            +
            \mathbb{E}_{\bm{z},\bm{\epsilon}_h}
            \left[
            (\partial_\theta\bm{G}_\theta)^\top
            \nabla_{\widetilde a_t}
            \log
            \frac{\pi_{\theta,h}(\widetilde a_t\mid\bm{s}_t)}
            {\pi_h(\widetilde a_t\mid\bm{s}_t)}
            \right] \\
            &=
            -\mathbb{E}_{\bm{z},\bm{\epsilon}_h}
            \left[
            (\partial_\theta\bm{G}_\theta)^\top
            \bm{V}_{\theta,h}(\widetilde a_t,\bm{s}_t)
            \right].
        \end{aligned}
    \label{eq:appendix-local-pathwise}
    \end{equation}
    The first term vanishes by the score identity,
    \begin{equation}
        \mathbb{E}_{\widetilde a_t\sim\pi_{\theta,h}}
        \left[
        \partial_\theta\log\pi_{\theta,h}(\widetilde a_t\mid\bm{s}_t)
        \right]
        =
        \int\partial_\theta\pi_{\theta,h}(a\mid\bm{s}_t)\,da
        =
        0.
    \label{eq:appendix-score-zero}
    \end{equation}
    Now define $\widehat a_t=a_t+\eta\bm{V}_{\theta,h}(\widetilde a_t,\bm{s}_t)$. Since $\widehat a_t$ is treated as a stop-gradient target,
    \begin{equation}
        \begin{aligned}
            \nabla_\theta\mathcal{L}_{\mathrm{drift}}
            &=
            2\mathbb{E}_{\bm{z},\bm{\epsilon}_h}
            \left[
            (\partial_\theta\bm{G}_\theta)^\top
            \left(
            \bm{G}_\theta(\bm{z},\bm{s}_t)
            -
            \mathrm{stopgrad}(\widehat a_t)
            \right)
            \right] \\
            &=
            -2\eta
            \mathbb{E}_{\bm{z},\bm{\epsilon}_h}
            \left[
            (\partial_\theta\bm{G}_\theta)^\top
            \bm{V}_{\theta,h}(\widetilde a_t,\bm{s}_t)
            \right].
        \end{aligned}
    \label{eq:appendix-drift-gradient}
    \end{equation}
    Comparing \Eqref{eq:appendix-local-pathwise} and \Eqref{eq:appendix-drift-gradient} proves \Eqref{eq:appendix-drift-equivalence}. The derivation uses only that $\kappa_h$ is a differentiable probability kernel and does not require a Gaussian kernel.
\end{proof}

\paragraph{Single-positive target.}
For a demonstration context $\bm{s}_t$, only one target action chunk $a_t^p$ is observed. Replacing the target conditional by $\delta_{a_t^p}$ gives $\pi_h(a\mid\bm{s}_t)=K_h(a,a_t^p)$ and
\begin{equation}
\bm{V}_{\theta,h}(a,\bm{s}_t)
=
\nabla_a\log K_h(a,a_t^p)
-
\frac{
\mathbb{E}_{a^q\sim\pi_\theta(\cdot\mid\bm{s}_t)}
[\nabla_aK_h(a,a^q)]
}{
\mathbb{E}_{a^q\sim\pi_\theta(\cdot\mid\bm{s}_t)}
[K_h(a,a^q)]
},
\label{eq:appendix-single-positive-field}
\end{equation}
which recovers \Eqref{eq:single-positive-drift}. The first term attracts the generated action toward the demonstrated action, while the second retains the model-distribution score required by reverse-KL optimization.

The single-positive construction also has a direct relation to the full smoothed target distribution. Assume $K_h(a,a')>0$ and define
\begin{equation}
    \mathcal{U}_h(\theta;\bm{s}_t)
    =
    \mathbb{E}_{a_t^p\sim\pi(\cdot\mid\bm{s}_t)}
    \left[
    D_{\mathrm{KL}}
    \left[
    \pi_{\theta,h}(\cdot\mid\bm{s}_t)
    \,\|\,K_h(\cdot,a_t^p)
    \right]
    \right].
\label{eq:appendix-single-positive-objective}
\end{equation}
Since $\pi_h(a\mid\bm{s}_t)=\mathbb{E}_{a_t^p}[K_h(a,a_t^p)]$,
\begin{equation}
    \mathcal{U}_h-\mathcal{L}_{\mathrm{Local}}^h
    =
    \mathbb{E}_{a\sim\pi_{\theta,h}}
    \left[
    \log
    \mathbb{E}_{a_t^p}
    K_h(a,a_t^p)
    -
    \mathbb{E}_{a_t^p}
    \log K_h(a,a_t^p)
    \right]
    \ge0,
\label{eq:appendix-single-positive-gap}
\end{equation}
where the inequality follows from Jensen's inequality. Thus, averaging the single-positive objective over demonstration actions upper-bounds the reverse KLD to the full kernel-smoothed target distribution. In particular, the gap vanishes when the target conditional is deterministic.

\paragraph{Finite-sample implementation.}
Proposition~\ref{prop:appendix-local-drift} is an exact population identity when the field is evaluated at the kernel-perturbed sample $\widetilde a_t$ using the population scores. In practice, \Eqref{eq:appendix-kde-score} is estimated from a finite set of policy samples, and Algorithm~\ref{alg:gtpos} evaluates the resulting field at the generated action. These introduce, respectively, finite-sample KDE error and an evaluation-point approximation. Under standard KDE regularity conditions the former vanishes as the number of policy samples increases, while the latter vanishes as the kernel bandwidth tends to zero for a continuous drifting field.

\subsection{Future Term and Critic Surrogate}
\label{appendix:future-critic}

We now show that the future gradient contribution can be represented exactly by an ideal frozen action-value function, and then characterize the approximation error introduced by a practical learned critic.
Recall
\begin{equation}
    Q_t^\theta(\bm{s}_t,a_t)
    =
    \mathbb{E}_{\bm{s}_{t+1}\sim P(\cdot\mid\bm{s}_t,a_t)}
    \left[
    J_{t+1}(\bm{s}_{t+1})
    \right].
\label{eq:appendix-future-definition}
\end{equation}
When computing the contribution at sequence index $t$, $Q_t^\theta$ is evaluated under the current policy but treated as fixed with respect to $\theta$, as in \Eqref{eq:appendix-local-future-gradient}.

\paragraph{Structure of the future action gradient.}
Assume that $P(\bm{s}_{t+1}\mid\bm{s}_t,a_t)$ admits a density differentiable with respect to $a_t$.
Differentiating under the integral gives
\begin{equation}
    \nabla_{a_t}Q_t^\theta(\bm{s}_t,a_t)
    =
    \mathbb{E}_{\bm{s}_{t+1}\sim P(\cdot\mid\bm{s}_t,a_t)}
    \left[
    J_{t+1}(\bm{s}_{t+1})
    \nabla_{a_t}\log P(\bm{s}_{t+1}\mid\bm{s}_t,a_t)
    \right].
\label{eq:appendix-q-action-gradient}
\end{equation}
Define
\begin{equation}
    \begin{aligned}
        I_P(\bm{s}_t,a_t)
        &=
        \mathbb{E}_{\bm{s}_{t+1}\sim P(\cdot\mid\bm{s}_t,a_t)}
        \left[
        \left\|
        \nabla_{a_t}\log P(\bm{s}_{t+1}\mid\bm{s}_t,a_t)
        \right\|_2^2
        \right], \\
        M_{t+1}(\bm{s}_t,a_t)
        &=
        \mathbb{E}_{\bm{s}_{t+1}\sim P(\cdot\mid\bm{s}_t,a_t)}
        \left[
        J_{t+1}(\bm{s}_{t+1})^2
        \right].
    \end{aligned}
\label{eq:appendix-transition-fisher}
\end{equation}
By Cauchy--Schwarz,
\begin{equation}
    \left\|
    \nabla_{a_t}Q_t^\theta(\bm{s}_t,a_t)
    \right\|_2
    \le
    \sqrt{
    I_P(\bm{s}_t,a_t)
    M_{t+1}(\bm{s}_t,a_t)
    }.
\label{eq:appendix-future-magnitude}
\end{equation}
If
\begin{equation}
    \overline D(\theta)
    =
    \sup_{\bm{s}}
    \mathcal{L}_{\mathrm{Local}}(\theta;\bm{s})
    <
    \infty,
\end{equation}
then
$0\le J_{t+1}(\bm{s}_{t+1})
\le (T-t)\overline D(\theta)$, and therefore
\begin{equation}
    \left\|
    \nabla_{a_t}Q_t^\theta(\bm{s}_t,a_t)
    \right\|_2
    \le
    (T-t)\overline D(\theta)
    \sqrt{I_P(\bm{s}_t,a_t)}.
\label{eq:appendix-future-local-bound}
\end{equation}
Hence, the future contribution can be significant only when the current action changes the successor distribution and non-negligible policy discrepancy remains in the future.
In particular, the exact future contribution vanishes as the remaining local discrepancy vanishes.

\paragraph{Ideal frozen critic.}
Using
$a_t=\bm{G}_\theta(\bm{z},\bm{s}_t)$, the exact future gradient contribution is
\begin{equation}
    \bm{g}_{\mathrm{Future}}(\theta;\bm{s}_t)
    =
    \mathbb{E}_{\bm{z}}
    \left[
    (\partial_\theta\bm{G}_\theta)^\top
    \nabla_{a_t}Q_t^\theta(\bm{s}_t,a_t)
    \right].
\label{eq:appendix-exact-future-gradient}
\end{equation}
Consider a reference policy $\pi_{\bar\theta}$ and define the ideal frozen critic
\begin{equation}
    Q^{\mathrm{ideal}}_{t,\bar\theta}(\bm{s}_t,a_t)
    =
    -
    Q_t^{\bar\theta}(\bm{s}_t,a_t)
    +
    b_t(\bm{s}_t),
\label{eq:appendix-ideal-critic}
\end{equation}
where $\bar\theta$ is fixed and $b_t(\bm{s}_t)$ is any action-independent function.
Importantly, $Q^{\mathrm{ideal}}_{t,\bar\theta}$ is treated as a fixed function and has no differentiable dependence on the optimized policy parameters $\theta$.

Define
\begin{equation}
    \mathcal{L}_{\mathrm{Critic}}^{\mathrm{ideal}}
    (\theta;\bm{s}_t)
    =
    -
    \mathbb{E}_{\bm{z}}
    \left[
    Q^{\mathrm{ideal}}_{t,\bar\theta}
    \left(
    \bm{s}_t,
    \bm{G}_\theta(\bm{z},\bm{s}_t)
    \right)
    \right].
\label{eq:appendix-ideal-critic-objective}
\end{equation}
Its gradient is
\begin{equation}
    \nabla_\theta
    \mathcal{L}_{\mathrm{Critic}}^{\mathrm{ideal}}
    =
    -
    \mathbb{E}_{\bm{z}}
    \left[
    (\partial_\theta\bm{G}_\theta)^\top
    \nabla_{a_t}
    Q^{\mathrm{ideal}}_{t,\bar\theta}(\bm{s}_t,a_t)
    \right].
\label{eq:appendix-ideal-critic-gradient}
\end{equation}
Since
\begin{equation}
    \nabla_{a_t}
    Q^{\mathrm{ideal}}_{t,\bar\theta}(\bm{s}_t,a_t)
    =
    -
    \nabla_{a_t}
    Q_t^{\bar\theta}(\bm{s}_t,a_t),
\end{equation}
evaluating \Eqref{eq:appendix-ideal-critic-gradient} at $\theta=\bar\theta$ yields
\begin{equation}
    \left.
    \nabla_\theta
    \mathcal{L}_{\mathrm{Critic}}^{\mathrm{ideal}}
    (\theta;\bm{s}_t)
    \right|_{\theta=\bar\theta}
    =
    \bm{g}_{\mathrm{Future}}(\bar\theta;\bm{s}_t).
\label{eq:appendix-ideal-critic-exact}
\end{equation}
Thus, an ideal frozen critic exactly realizes the future gradient contribution at the reference policy.
The scalar critic value itself need not match $-Q_t^{\bar\theta}$: any action-independent offset $b_t(\bm{s}_t)$ leaves the policy update unchanged.
This establishes that the critic term is not an auxiliary regularizer in the ideal case, but an exact frozen representation of the sequence-level future contribution.

\paragraph{Learned critic and approximation error.}
In practice, the ideal critic in \Eqref{eq:appendix-ideal-critic} is unavailable.
DriftOPD therefore uses a frozen learned critic $Q_\phi$, for which larger values indicate more desirable actions,
\begin{equation}
    \mathcal{L}_{\mathrm{Critic}}(\theta;\bm{s}_t)
    =
    -
    \mathbb{E}_{\bm{z}}
    \left[
    Q_\phi
    \left(
    \bm{s}_t,
    \bm{G}_\theta(\bm{z},\bm{s}_t)
    \right)
    \right],
\label{eq:appendix-critic-objective}
\end{equation}
with
\begin{equation}
    \nabla_\theta
    \mathcal{L}_{\mathrm{Critic}}
    =
    -
    \mathbb{E}_{\bm{z}}
    \left[
    (\partial_\theta\bm{G}_\theta)^\top
    \nabla_{a_t}Q_\phi(\bm{s}_t,a_t)
    \right].
\label{eq:appendix-critic-gradient}
\end{equation}
At any current policy $\theta$, define the action-gradient mismatch
\begin{equation}
    \Delta_t^\phi(\bm{s}_t,a_t)
    =
    \nabla_{a_t}Q_t^\theta(\bm{s}_t,a_t)
    +
    \nabla_{a_t}Q_\phi(\bm{s}_t,a_t),
\label{eq:critic-gradient-mismatch}
\end{equation}
and
\begin{equation}
    \epsilon_t(\phi;\bm{s}_t)
    =
    \left(
    \mathbb{E}_{\bm{z}}
    \left[
    \left\|
    \Delta_t^\phi(\bm{s}_t,a_t)
    \right\|_2^2
    \right]
    \right)^{1/2}.
\label{eq:critic-action-gradient-error}
\end{equation}
If
\begin{equation}
    \left\|
    \partial_\theta
    \bm{G}_\theta(\bm{z},\bm{s}_t)
    \right\|_{\mathrm{op}}
    \le
    L_G,
\end{equation}
then
\begin{equation}
    \begin{aligned}
        \left\|
        \bm{g}_{\mathrm{Future}}(\theta;\bm{s}_t)
        -
        \nabla_\theta
        \mathcal{L}_{\mathrm{Critic}}(\theta;\bm{s}_t)
        \right\|_2
        &=
        \left\|
        \mathbb{E}_{\bm{z}}
        \left[
        (\partial_\theta\bm{G}_\theta)^\top
        \Delta_t^\phi(\bm{s}_t,a_t)
        \right]
        \right\|_2 \\
        &\le
        L_G\,
        \epsilon_t(\phi;\bm{s}_t).
    \end{aligned}
\label{eq:critic-gradient-error-bound}
\end{equation}
Therefore, the practical critic differs from the ideal frozen surrogate only through its action-gradient mismatch on model-generated action chunks.
\Eqref{eq:critic-gradient-error-bound} characterizes this approximation without requiring the learned critic to recover the absolute value of the future divergence, or assuming that $\epsilon_t(\phi;\bm{s}_t)$ is necessarily small.
The particular demonstration-trained critic used in our experiments is one practical instantiation of this frozen surrogate.
\section{Experiment Protocol}
\label{appendix:implementation}

\subsection{Model Configurations}
\label{appendix:model-data}

\begin{table}[H]
    \centering
    \caption{Model configurations used in our experiments}
    \setlength{\tabcolsep}{6pt}
    \renewcommand{\arraystretch}{2}
    \resizebox{\textwidth}{!}{%
        \begin{tabular}{c c c c c c c c}
            \toprule
            \textbf{Model} & \textbf{Benchmark} & \textbf{\shortstack{Action\\objective}} & \textbf{\shortstack{Distillation\\trainable parameters}} & \textbf{\shortstack{Action\\dim.}} & \textbf{\shortstack{Chunk\\length}} & \textbf{\shortstack{Teacher\\steps}} & \textbf{\shortstack{One-step\\parameterization}} \\
            \midrule
            $\pi_{0.5}$ & RoboCasa365 & Flow matching & \shortstack{Action expert\\693.4M} & 32 & 50 & 10 & $z-v(z,t{=}1)$ \\
            \midrule
            \multirow{3}{*}{$\pi_{0.5}$-LoRA} & RoboCasa365 & \multirow{3}{*}{Flow matching} & \multirow{3}{*}{\shortstack{LoRA + input/output proj.\\+ timestep MLP, 9.1M}} & \multirow{3}{*}{32} & 50 & \multirow{3}{*}{10} & \multirow{3}{*}{$z-v(z,t{=}1)$} \\
            & LIBERO & & & & 10 & & \\
            & RoboTwin 2.0 & & & & 50 & & \\
            \midrule
            GR00T N1.5 & RoboCasa365 & Flow matching & \shortstack{Action-generation head\\1,068.8M} & 32 & 16 & 4 & $z+v(z,t{=}0)$ \\
            \midrule
            GR00T N1.6 & RoboCasa365 & Flow matching & \shortstack{Action-generation head\\1,418.6M} & 128 & 50 & 4 & $z+v(z,t{=}0)$ \\
            \midrule
            \multirow{2}{*}{ABot-M0} & RoboTwin 2.0 & \multirow{2}{*}{Direct clean action} & \multirow{2}{*}{\shortstack{Action-generation head\\164.2M}} & \multirow{2}{*}{14} & 50 & \multirow{2}{*}{10} & \multirow{2}{*}{Direct $x_0$} \\
            & LIBERO & & & & 10 & & \\
            \midrule
            SmolVLA & LIBERO & Flow matching & \shortstack{Action-generation head\\97.5M} & 32 & 50 & 10 & $z-v(z,t{=}1)$ \\
            \bottomrule
        \end{tabular}%
    }
    \label{tab:model-config}
\end{table}

\begin{table}[H]
    \centering
    \tiny
    \caption{Starting checkpoints used for each model and benchmark}
    \renewcommand{\arraystretch}{1.2}
    \resizebox{\textwidth}{!}{%
        \begin{tabular}{c c >{\raggedright\arraybackslash}p{0.60\textwidth}}
            \toprule
            \textbf{Model} & \textbf{Benchmark / setting} & \textbf{Starting checkpoint} \\
            \midrule
            \multirow{3}{*}{$\pi_{0.5}$} & RoboCasa365 & \href{https://huggingface.co/robocasa/robocasa365_checkpoints}{\texttt{robocasa/robocasa365\_checkpoints}} \\
            & LIBERO & \texttt{gs://openpi-assets/checkpoints/pi05\_libero/} \\
            & RoboTwin 2.0 & \texttt{gs://openpi-assets/checkpoints/pi05\_base/params} \newline (fine-tuned by us for 75k steps with batch size 64 on \href{https://huggingface.co/datasets/TianxingChen/RoboTwin2.0}{\texttt{TianxingChen/RoboTwin2.0}}, 50 tasks) \\
            \midrule
            GR00T N1.5 & RoboCasa365 & \href{https://huggingface.co/robocasa/robocasa365_checkpoints}{\texttt{robocasa/robocasa365\_checkpoints}} \\
            \midrule
            \multirow{2}{*}{GR00T N1.6} & RoboCasa365 & \href{https://huggingface.co/Abhi03/grootn16_robocasa365_multitask_learning}{\texttt{Abhi03/grootn16\_robocasa365\_multitask\_learning}} \\
            & Real world & \href{https://huggingface.co/nvidia/GR00T-N1.6-3B}{\texttt{nvidia/GR00T-N1.6-3B}} \\
            \midrule
            \multirow{2}{*}{ABot-M0} & LIBERO & \href{https://huggingface.co/acvlab/ABot-M0-LIBERO}{\texttt{acvlab/ABot-M0-LIBERO}} \\
            & RoboTwin 2.0 & \href{https://huggingface.co/acvlab/ABot-M0-RoboTwin2}{\texttt{acvlab/ABot-M0-RoboTwin2}} \\
            \midrule
            SmolVLA & LIBERO & \href{https://huggingface.co/HuggingFaceVLA/smolvla_libero}{\texttt{HuggingFaceVLA/smolvla\_libero}} \\
            \bottomrule
        \end{tabular}%
    }
    \label{tab:starting-checkpoints}
\end{table}

\paragraph{Model configurations.} 
Tables~\ref{tab:model-config} and~\ref{tab:starting-checkpoints} summarize the policy configurations and starting checkpoints used throughout our experiments.
Depending on the benchmark, we either use a released task-specific checkpoint directly or fine-tune a base checkpoint before distillation, as detailed in Table~\ref{tab:starting-checkpoints}.
Each one-step student is initialized from its corresponding teacher checkpoint, and during distillation only the parameters listed in Table~\ref{tab:model-config} are updated while the remaining policy parameters are kept frozen.

\paragraph{LoRA fine-tuning.}
For $\pi_{0.5}$-LoRA, we insert LoRA~\citep{LoRA} modules with $r=16$ and $\alpha=16$ into the query, key, value, output, gate, up, and down projections of the action-expert transformer.
This replaces 126 linear layers with LoRA-augmented layers and contributes 6.93M trainable LoRA parameters.
The action input/output projections and timestep MLP remain fully trainable, resulting in 9.1M trainable parameters in total.
We use no LoRA dropout or bias term, and the LoRA scaling is $\alpha/r=1$.

\subsection{DriftOPD Training Details}
\label{appendix:distillation-details}

\begin{table}[H]
\centering
\small
\caption{Common student optimization settings}
\renewcommand{\arraystretch}{1.2}
    \resizebox{0.95\textwidth}{!}{%
    \begin{tabular}{p{0.31\textwidth} p{0.61\textwidth}}
        \toprule
        \textbf{Hyperparameter} & \textbf{Setting} \\
        \midrule
        Training steps & 10,000 \\
        Optimizer & AdamW, $\beta_1=0.9$, $\beta_2=0.95$ \\
        Learning rate & $3\times10^{-5}\rightarrow3\times10^{-7}$ \\
        LR schedule & Linear decay; no learning-rate warmup \\
        Weight decay & $0$ \\
        Gradient clipping & $1.0$ \\
        Precision & bfloat16 autocast with FP32 master weights \\
        EMA & Maximum decay $0.9995$ with 1,000-step EMA warmup \\
        Initialization & Full copy of the corresponding starting policy checkpoint \\
        Default training seed & 42 \\
        Noise & $z\sim\mathcal{N}(0,I)$, resampled at every microstep \\
        Antithetic sampling & Disabled \\
        Noise scale & $1.0$ \\
        \bottomrule
    \end{tabular}
    }
\label{tab:student-optimization}
\end{table}

\paragraph{One-step distillation settings.}
The common optimization settings used for DriftOPD and the one-step baselines are summarized in Table~\ref{tab:student-optimization}.
Unless otherwise specified, all students are trained for 10k optimization steps and evaluated using their EMA parameters.

\begin{table}[H]
\centering
\small
\caption{DriftOPD-specific hyperparameters}
\renewcommand{\arraystretch}{1.2}
    \resizebox{0.95\textwidth}{!}{%
    \begin{tabular}{p{0.31\textwidth} p{0.61\textwidth}}
        \toprule
        \textbf{Hyperparameter} & \textbf{Setting} \\
        \midrule
        Kernel & Laplacian, $K_R(a,a')\propto\exp(-\lVert a-a'\rVert/R)$ \\
        Bandwidths & $R\in\{0.02,0.05,0.2\}$ \\
        Current-policy samples $M$ & 64 by default (8 gradient-carrying + 56 detached) \newline 32 (8 gradient-carrying + 24 detached) on RoboCasa365 for $\pi_{0.5}$  \\
        Positive sample & One aligned demonstration action, separate from the $M$ policy samples \\
        Default KDE pool & 64 policy samples + 1 demonstration sample \\
        Input perturbation & Laplace noise $\epsilon_R$ with scale $R$ on generated policy samples \\
        Drifting step $\eta$ & $1.0$ for all bandwidths \\
        Force normalization & Current-step global RMS over batch, generated-sample, and flattened action dimensions \\
        RMS floor & $10^{-8}$ before the square root \\
        Multi-band aggregation & Compute one normalized loss per bandwidth and average the losses \\
        $\lambda$ & $0.01$, fixed \\
        Critic warmup & linear warmup over the first 500 steps \\
        Critic gradient normalization & Drift-to-critic gradient-norm ratio with EMA $0.99$ \\
        Gradient-ratio range & $[0.1,10]$ \\
        Simulation Q clipping & Disabled \\
        Real-world Q clipping & Hard clip when generated-action $Q$ exceeds demonstration $Q$ \\
        \bottomrule
    \end{tabular}
    }
\label{tab:driftopd-hparams}
\end{table}

\paragraph{DriftOPD-specific settings.}
Table~\ref{tab:driftopd-hparams} summarizes the implementation settings specific to the drifting and critic objectives.
Following prior work \citep{Drifting,DriftingDistill}, we use a Laplacian affinity kernel and the three-bandwidth set $R\in\{0.02,0.05,0.2\}$.
As in \citet{DriftingDistill}, the policy-sample set is built from both gradient-carrying and detached generations. $M$ counts current-policy samples only, of which eight retain gradients and the remainder are detached, and the aligned demonstration action enters the kernel pool as the single positive.
We use $M=64$ (8 gradient-carrying $+$ 56 detached) by default, giving a kernel pool of 65 samples. On RoboCasa365, $\pi_{0.5}$ students instead use $M=32$ (8 $+$ 24), i.e.\ a pool of 33.

\paragraph{Details for the drifting field.}
Before the kernel is evaluated, Laplace noise $\epsilon_R$ with scale $R$ is added to the generated and detached policy samples \citep{DriftingDistill}, so the drifting force is evaluated at perturbed actions $\tilde{a}=a+\epsilon_R$. The demonstration positive is left unperturbed, and the same noise realization enters the regression target of the one-step student.
For each bandwidth, the drifting force is normalized using statistics from the current optimization step rather than a running RMS:
\begin{equation}
    \widehat{\bm V}_{R}
    =
    \frac{\bm V_R}
    {\sqrt{\max\!\left(\mathbb{E}_{b,c,s}\!\left[\bm V_R^2\right],10^{-8}\right)}}.
\label{eq:rms-drift}
\end{equation}
Here, $b$, $c$, and $s$ index the conditioning batch, generated policy samples, and flattened action coordinates, respectively.
The drifting loss is computed independently for each bandwidth and then averaged across the three bandwidths. Normalizing each bandwidth before averaging keeps the target displacement independent of the number of bandwidths, whereas summing the normalized fields would scale it with $|R|$ and destabilize early training.

\paragraph{Implementation details for the critic loss.}
The critic term uses a base weight $\lambda=0.01$ with no decay schedule, ramped linearly from zero over the first 500 optimization steps so that the critic does not steer the student while its one-step outputs are still far from the data manifold. To keep the two terms commensurable, the critic weight is rescaled by the ratio of gradient norms measured at the generated actions, $r=\lVert\nabla_a\mathcal{L}_{\mathrm{Drift}}\rVert_2/\lVert\nabla_a Q_\phi\rVert_2$, clipped to $[0.1,10]$ and smoothed by an exponential moving average with coefficient $0.99$. 
The effective weight is $\lambda_{\mathrm{eff}}(t)=\lambda\,w(t)\,\bar r(t)$, where $w(t)$ denotes the warmup factor. The critic-weight analysis in \Figref{fig:critic} evaluates $\lambda\in\{0,0.001,0.01,0.1,0.3,1.0\}$ while holding all other optimization and stabilization settings fixed, and the final simulation experiments use $\lambda=0.01$.
No trust region or clipping of the critic score is applied in simulation. The real-world runs additionally hard-clip the critic term when the generated-action $Q_\phi$ exceeds the demonstration-action $Q_\phi$.

\begin{table}[H]
    \centering
    \small
    \caption{Model-specific batch configurations. $N_c$ denotes the number of task contexts per optimization step, and batch size denotes the number of gradient-carrying policy samples per context. Students evaluated on LIBERO-Plus
    are the LIBERO students, so they share the LIBERO configuration.}
    \renewcommand{\arraystretch}{1.2}
    \resizebox{0.8\textwidth}{!}{%
        \begin{tabular}{l l c c c}
            \toprule
            \textbf{Benchmark} & \textbf{Model} & \textbf{$N_c$ (\# tasks)} &
            \textbf{Batch size} & \textbf{\shortstack{Gradient-carrying\\samples / step}} \\
            \midrule
            \multirow{4}{*}{RoboCasa365}
              & $\pi_{0.5}$ (full fine-tuning) &  4 & 8 &  32 \\
              & $\pi_{0.5}$ (LoRA $r{=}16$)    &  4 & 8 &  32 \\
              & GR00T N1.5                     & 16 & 8 & 128 \\
              & GR00T N1.6                     & 16 & 8 & 128 \\
            \midrule
            \multirow{2}{*}{RoboTwin 2.0}
              & $\pi_{0.5}$ (LoRA $r{=}16$)    &  4 & 8 &  32 \\
              & ABot-M0                        & 64 & 8 & 512 \\
            \midrule
            \multirow{3}{*}{\shortstack[l]{LIBERO\\/ LIBERO-Plus}}
              & $\pi_{0.5}$ (LoRA $r{=}16$)    &  4 & 8 &  32 \\
              & ABot-M0                        & 64 & 8 & 512 \\
              & SmolVLA                        & 16 & 8 & 128 \\
            \bottomrule
        \end{tabular}%
    }
    \label{tab:model-batch}
\end{table}

\paragraph{Mini-batch configurations.}
Table~\ref{tab:model-batch} summarizes the batch configuration for each model and benchmark. Every optimization step draws $N_c$ task contexts and generates eight gradient-carrying one-step chunks per context, so a step is backpropagated through $8N_c$ policy samples. We keep the per-context count at eight everywhere and vary only $N_c$, which ranges from $4$ for $\pi_{0.5}$ to $64$ for ABot-M0, because the drifting loss is a per-task quantity and widening $N_c$ marginalizes over more of the task distribution per step. Within a given model and benchmark all four students share the same $N_c$ and batch size, so sCD, MFD, Drift, and DriftOPD see an identical sample budget and any gap between them reflects the objective alone. Each context also supplies one cached ground-truth chunk and $56$ or $24$ detached negatives, which shape the loss but carry no gradient and are excluded from the count above. We realize $N_c$ on a single device as the per-microstep task count times the number of gradient accumulation steps, holding the product fixed across students, so no multi-GPU training is needed to reproduce these settings. Students evaluated on LIBERO-Plus are the LIBERO students and inherit the LIBERO configuration.

\clearpage
\subsection{Critic Training}
\label{appendix:critic-training}

\begin{figure}[H]
    \centering
    \includegraphics[width=0.85\linewidth]{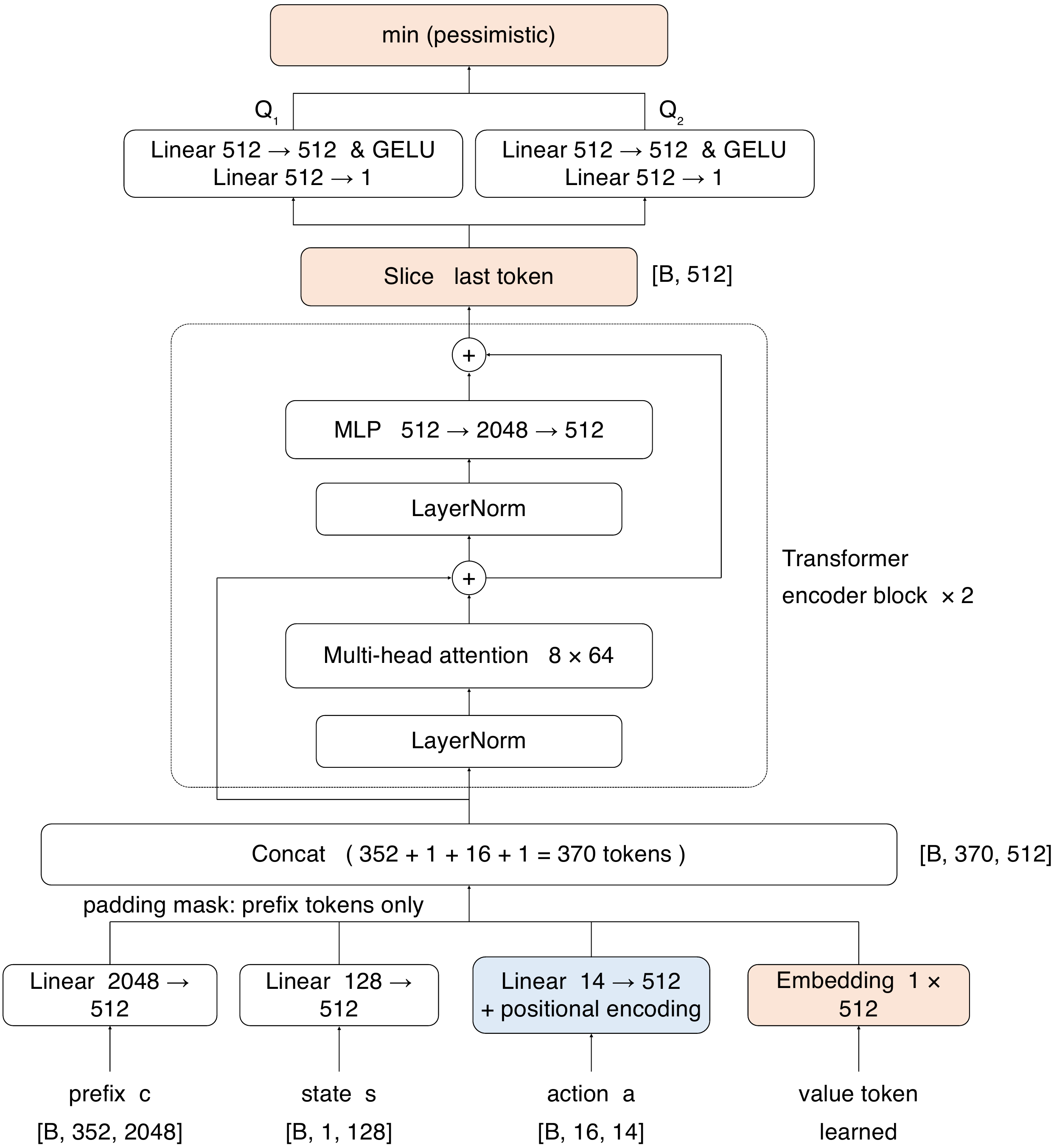}
    \caption{Architecture of the demonstration-trained critic used in DriftOPD.
    The frozen VLA prefix, the robot state, and the candidate action chunk are
    encoded jointly, and a learned value token is read out by twin Q-heads.}
    \label{fig:critic-architecture}
\end{figure}

\paragraph{Overview.}
DriftOPD queries the critic for an action gradient rather than an absolute value, so the critic must vary informatively with the action chunk in a neighborhood of demonstrated behavior. 
We therefore build on the Q-chunk-Former formulation of VGAS~\citep{VGAS} but adapt both the architecture and the training objective to that requirement. We do not reuse a released VGAS critic.
Instead we train one critic per policy backbone and embodiment from offline demonstrations alone, so that the critic reads the same frozen prefix the student is distilled against. 
As shown in \Figref{fig:critic-architecture}, the critic consumes the frozen VLA prefix together with the robot state and an action chunk, encodes them with a lightweight bidirectional Transformer, and reads an action-conditioned value out of a learned value token through twin Q-heads.
The critic is trained once and then frozen for the whole of policy distillation. Table~\ref{tab:critic-config} lists the architecture and optimization settings in full.

\begin{table}[t]
\centering
\small
\caption{Architecture and optimization settings of the demonstration-trained critic.}
\renewcommand{\arraystretch}{1.2}
\resizebox{0.95\textwidth}{!}{%
\begin{tabular}{p{0.31\textwidth} p{0.61\textwidth}}
    \toprule
    \textbf{Component} & \textbf{Setting} \\
    \midrule
    Input & Frozen VLA prefix tokens + state token + action-chunk tokens + learned value token \\
    Encoder & 2-layer pre-norm bidirectional Transformer encoder \\
    Hidden dimension & 512 \\
    Attention heads & 8 \\
    FFN dimension & 2048 \\
    Dropout & $0$ \\
    Q-heads & Two independent MLP heads; $\min(Q_1,Q_2)$ used for policy optimization \\
    Q-head MLP & $512\rightarrow512\rightarrow1$ with GELU and LayerNorm \\
    Training data & Offline demonstrations only \\
    Sampling & Task-balanced uniform sampling over tasks, then a uniform row within the sampled task \\
    Held-out split & Simulation: tasks satisfying \texttt{task\_index \% 10 == 0} excluded from critic training \newline Real world: episodes satisfying \texttt{episode\_index \% 10 == 0} excluded from critic training \\
    Target & Monte-Carlo return \\
    Discount factor & $\gamma=0.998$ \\
    Optimizer & AdamW, $\beta_1=0.9$, $\beta_2=0.95$ \\
    Learning rate & $10^{-4}$ \\
    Weight decay & $10^{-6}$ \\
    Batch size & 64 \\
    Training steps & 20,000 \\
    LR schedule & 1,000-step warmup followed by cosine decay \\
    Gradient clipping & 10 \\
    Critic seed & 0 \\
    Q-output normalization & None \\
    Objective & MC regression + Cal-QL + EGR \\
    Cal-QL & Coefficient $1.0$, eight OOD action candidates \\
    OOD candidates & Four local perturbations with $\sigma=0.3$ + four marginal samples \\
    EGR & Anchor weight $1.0$, ranking weight $5.0$ \\
    EGR scales & $\beta$ and $d_{\max}$ automatically calibrated from each cache \\
    \bottomrule
\end{tabular}
}
\label{tab:critic-config}
\end{table}

\paragraph{Training objective.}
Each training sample is a triple $(s_t,a_t,R_t)\sim\mathcal{D}$, where $s_t$ collects the frozen VLA prefix tokens, the robot state, and the language instruction, $a_t$ is the demonstrated action chunk executed at $s_t$, and $R_t$ is its Monte-Carlo return. The two Q-heads are combined pessimistically,
\begin{equation}
    Q_\phi(s_t,a_t)
    =
    \min\big(Q_1(s_t,a_t),Q_2(s_t,a_t)\big),
    \label{eq:critic-min}
\end{equation}
and the objective pairs a value-regression term with two terms that induce action dependence,
\begin{equation}
    \mathcal{L}_{\mathrm{Critic}}(\phi)
    =
    \mathcal{L}_{\mathrm{MC}}(\phi)
    +
    \alpha\,\mathcal{L}_{\mathrm{Cal\text{-}QL}}(\phi)
    +
    \mathcal{L}_{\mathrm{EGR}}(\phi),
    \qquad
    \alpha=1.0.
    \label{eq:critic-total}
\end{equation}

\paragraph{Monte-Carlo regression.}
Our demonstration caches retain only a few frames per episode, so consecutive chunk pairs $(s_t,s_{t+H})$ are rarely both present and temporal-difference bootstrapping is impractical. All demonstrations are successful and the reward is defined by frame position, so the return is computed directly from the episode and used as a regression target,
\begin{equation}
    \mathcal{L}_{\mathrm{MC}}(\phi)
    =
    \mathbb{E}_{(s_t,a_t,R_t)\sim\mathcal{D}}
    \left[
        \left(Q_\phi(s_t,a_t)-R_t\right)^2
    \right],
    \qquad
    R_t
    =
    \sum_{k\geq 0}\gamma^k r_{t+k},
    \quad
    \gamma=0.998.
    \label{eq:critic-mc}
\end{equation}
No target network or bootstrapping is used, so the critic is trained by pure regression onto returns available from the recorded episodes.

\paragraph{Negative candidates.}
Monte-Carlo regression alone constrains the critic only at demonstrated state--action pairs, which leaves its local variation in the action argument unsupervised. For each sample we therefore draw eight out-of-distribution (OOD) action chunks from an equal mixture of local and marginal candidates,
\begin{equation}
    a'
    \sim
    \nu(\cdot\mid s_t,a_t)
    =
    \frac{1}{2}
    \underbrace{
        \mathcal{N}\!\left(a_t,\sigma^2 I\right)
    }_{\text{local},\ \sigma=0.3}
    +
    \frac{1}{2}
    \underbrace{
        \mathcal{D}_a(\cdot)
    }_{\text{marginal}},
    \label{eq:critic-nu}
\end{equation}
where $\mathcal{D}_a$ is the empirical distribution of demonstrated action chunks recorded at other observations, and four candidates are drawn from each component. The local component populates a neighborhood of the demonstrated chunk, while the marginal component supplies chunks that are plausible in the dataset but mismatched to the current context.

\paragraph{Action dependence.}
The first of the two candidate-based terms is a conservative objective in the style of Cal-QL~\citep{Cal-QL}, which suppresses OOD action values while calibrating them against the Monte-Carlo return,
\begin{equation}
    \mathcal{L}_{\mathrm{Cal\text{-}QL}}(\phi)
    =
    \mathbb{E}_{s_t}
    \left[
        \log
        \sum_{a'\sim\nu}
        \exp
        \left(
            \max\left(Q_\phi(s_t,a'),R_t\right)
        \right)
        -
        Q_\phi(s_t,a_t)
    \right].
    \label{eq:critic-calql}
\end{equation}
The second, EGR, adds distance-aware supervision so that the critic varies smoothly with the distance from the demonstrated chunk, 
\begin{equation}
    \mathcal{L}_{\mathrm{EGR}}(\phi)
    =
    w_{\mathrm{anchor}}\,
    \ell_{\mathrm{anchor}}(\phi)
    +
    w_{\mathrm{rank}}\,
    \ell_{\mathrm{rank}}(\phi),
    \qquad
    w_{\mathrm{anchor}}=1.0,
    \quad
    w_{\mathrm{rank}}=5.0.
    \label{eq:critic-egr}
\end{equation}
Here $\ell_{\mathrm{anchor}}$ ties the value of a candidate at distance $d(a',a_t)$ to the margin-shifted anchor $Q_\phi(s_t,a_t)-\beta\,d(a',a_t)/d_{\max}$, while $\ell_{\mathrm{rank}}$ penalizes candidate pairs whose predicted ordering disagrees with their distance ordering, encouraging $Q_\phi(s_t,a'_i)>Q_\phi(s_t,a'_j)$ whenever $d(a'_i,a_t)<d(a'_j,a_t)$. The scale $\beta$ and the distance normalization $d_{\max}$ are calibrated from each critic cache rather than fixed across embodiments, since the action spaces differ in magnitude.

Together, \Eqref{eq:critic-calql} and \Eqref{eq:critic-egr} make $Q_\phi$ action-dependent in a neighborhood of demonstrated behavior. 
This is the property required by the critic surrogate in \Secref{ssec:method-critic}, which consumes the action gradient $\nabla_{a_t}Q_\phi(s_t,a_t)$ rather than the absolute critic value.

\subsection{Baseline Implementations}
\label{appendix:baseline-details}

All one-step baselines start from the same teacher checkpoint, read the same cached demonstrations and conditioning, and use the optimizer and effective context batch reported in Table~\ref{tab:student-optimization} and \ref{tab:model-batch}. We follow the hyperparameters of the original methods where they are specified and run no
baseline-specific sweep. The implemented settings are listed in Table~\ref{tab:baseline-config}.

\begin{table}[H]
\centering
\small
\caption{Implementation details of the one-step distillation baselines.}
\renewcommand{\arraystretch}{1.2}
\resizebox{\textwidth}{!}{%
\begin{tabular}{p{0.16\textwidth} p{0.76\textwidth}}
    \toprule
    \textbf{Method} & \textbf{Implementation} \\
    \midrule
    One-step Teacher &
    Direct one-step generation from the corresponding teacher checkpoint without
    additional distillation. For ABot-M0, the baseline uses its dedicated
    one-step generator. \\
    sCD &
    $P_{\mathrm{mean}}=-1.0$, $P_{\mathrm{std}}=1.4$, $\sigma_d=1$, tangent normalization $c=0.1$, JVP tangent
    estimation with central finite-difference fallback $\epsilon=10^{-3}$, and
    adaptive-weighting learning rate $10^{-4}$. \\
    MFD &
    Maximum step size $0.02$, solver-step cap 32, minimum interval $0.02$.
    TTUR ratio $K=10$ with logit-normal time sampling on LIBERO and
    RoboTwin 2.0; $K=50$ with uniform time sampling on RoboCasa365. \\
    Drift &
    Identical kernel, bandwidth, perturbation, normalization, sampling and
    student-optimization settings as DriftOPD, with the critic term disabled. \\
    DriftOPD &
    Drift objective combined with the frozen demonstration-trained critic using
    the settings in Tables~\ref{tab:driftopd-hparams} and~\ref{tab:critic-config}. \\
    \bottomrule
\end{tabular}
}
\label{tab:baseline-config}
\end{table}

\paragraph{One-step Teacher.}
The teacher is evaluated with a single generation step and no additional distillation, which isolates how much of the one-step gap is already closed by the pretrained flow head. For ABot-M0 this baseline uses the dedicated one-step generator shipped with the backbone rather than a truncated sampler.

\paragraph{sCD.}
We use the distillation form of the continuous-time consistency model \citep{sCM}, in which the probability-flow ODE is supplied by the frozen pretrained teacher $F^{\mathrm{Q}}$ rather than by an unbiased noise estimator.
With $\sigma_d=1$, fixed by the teachers having been trained with $\epsilon\sim\mathcal{N}(0,I)$, each step draws $x_0\sim\mathcal{D}$, $z\sim\mathcal{N}(0,\sigma_d^2 I)$ and a time from the log-normal proposal
\begin{equation}
    \tau\sim\mathcal{N}(P_{\mathrm{mean}},P_{\mathrm{std}}^2),
    \qquad
    t=\arctan\!\big(e^{\tau}/\sigma_d\big),
    \qquad
    x_t=\cos(t)\,x_0+\sin(t)\,z,
    \label{eq:scd-proposal}
\end{equation}
with $P_{\mathrm{mean}}=-1.0$ and $P_{\mathrm{std}}=1.4$. The tangent that the consistency condition requires is
\begin{equation}
    g
    =
    -\cos^2(t)\Big(\sigma_d F_{\theta^-}-\tfrac{\mathrm{d}x_t}{\mathrm{d}t}\Big)
    -
    r\,\cos(t)\sin(t)\Big(x_t+\sigma_d\,\tfrac{\mathrm{d}F_{\theta^-}}{\mathrm{d}t}\Big),
    \qquad
    \tfrac{\mathrm{d}x_t}{\mathrm{d}t}=\sigma_d F^{\mathrm{Q}}\!\big(x_t/\sigma_d,t\big),
    \label{eq:scd-tangent}
\end{equation}
where $r\in[0,1]$ is a linear warmup factor. The term $\mathrm{d}F_{\theta^-}/\mathrm{d}t$ is a directional derivative in time and is obtained with a Jacobian-vector product; since sCM removes the EMA target and sets $\theta^-=\mathrm{sg}[\theta]$, the primal output of that JVP is exactly $F_{\theta^-}$, so no separate forward pass is needed for it. 
If forward-mode automatic differentiation is unavailable for a backbone, the implementation falls back permanently to central finite differences with $\epsilon=10^{-3}$. The tangent is normalized as $g\leftarrow g/(\lVert g\rVert+c)$ with $c=0.1$, and the loss uses the adaptive weighting $w_\varphi$ of the original method,
\begin{equation}
    \mathcal{L}_{\mathrm{sCD}}(\theta,\varphi)
    =
    \frac{e^{w_\varphi(t)}}{D}
    \big\lVert
        F_\theta-F_{\theta^-}-g
    \big\rVert^2
    -
    w_\varphi(t),
    \label{eq:scd-loss}
\end{equation}
with $D$ the action-chunk dimension and $w_\varphi$ trained at learning rate $10^{-4}$.

\paragraph{MFD.}
Mean Flow Distillation \citep{MFD} is data-free, so the demonstration bank that Drift attracts toward is unused and only the teacher, the conditioning and the step budget are shared. Three heads sit on the frozen backbone: the frozen teacher $u^{\mathrm{Q}}$, the student generator $G_\theta$, and an auxiliary flow $v_\phi$ that tracks the student's own output distribution. Each outer iteration performs $K$ auxiliary updates followed by one student update. The auxiliary head is fit by flow matching on student samples,
\begin{equation}
    \mathcal{L}_{\mathrm{aux}}(\phi)
    =
    \mathbb{E}\Big[
        \big\lVert v_\phi(X_\tau,\tau)-(X_1-X_0')\big\rVert^2
    \Big],
    \quad
    X_1=\mathrm{sg}\!\left[G_\theta(X_0)\right],
    \quad
    X_\tau=(1-\tau)X_0'+\tau X_1,
    \label{eq:mfd-aux}
\end{equation}
with $\tau\sim\mathcal{U}[0,1]$ and $X_0,X_0'$ independent noise draws. 
The two-timescale ratio $K$ keeps $v_\phi$ an accurate model of the student distribution before each student step. For the student update an interval $0\leq s\leq t\leq 1$ is drawn, the state $X_s=s\,G_\theta(X_0)+(1-s)X_0'$ is formed, and both flows are integrated from $X_s$ over $[s,t]$ to give mean velocities
\begin{equation}
    \hat{U}^{\mathrm{P}}=\frac{\Phi(v_\phi,X_s,s,t)-X_s}{t-s},
    \qquad
    \hat{U}^{\mathrm{Q}}=\frac{\Phi(u^{\mathrm{Q}},X_s,s,t)-X_s}{t-s},
    \qquad
    \Delta=\hat{U}^{\mathrm{P}}-\hat{U}^{\mathrm{Q}},
    \label{eq:mfd-meanvel}
\end{equation}
where $\Phi$ denotes the ODE solver. Following the original derivation we drop the Jacobian of the solver with respect to its input, which gives the student gradient
\begin{equation}
    \nabla_\theta\mathcal{L}_{\mathrm{MFD}}
    =
    \mathbb{E}\Big[
        s\,\big(\nabla_\theta G_\theta(X_0)\big)^{\!\top}\Delta
    \Big].
    \label{eq:mfd-grad}
\end{equation}
We implement \Eqref{eq:mfd-grad} as the surrogate $\mathcal{L}=\langle \mathrm{sg}[\Delta],\,X_s\rangle$, whose autograd gradient is exactly the expression above. A mean-squared error on $\Delta$ would instead backpropagate through the solver and optimize a different objective.

\paragraph{Drift.}
Drift is the drifting distillation objective of \citet{DriftingDistill}, applied to action chunks. Generated samples are pushed by a drift field built from an affinity kernel over a single positive sample drawn from the demonstration cache and detached negatives drawn from the student's own output, so the student is attracted toward the demonstration distribution without a per-sample regression target. We use the Laplace kernel $k(d)=\exp(-d/R)$ on scale-normalized distances, compute the loss at each temperature and average across temperatures, and keep the three modifications of the reference implementation: a single softmax, Laplace perturbation of the inputs, and partially detached negatives. The feature over which the kernel acts is the flattened action chunk, and one drift problem is instantiated per task in the micro-batch. Kernel, bandwidth, perturbation, normalization, time sampling and student-optimization settings are listed in Table~\ref{tab:driftopd-hparams}.

\paragraph{DriftOPD.}
DriftOPD augments the Drift objective with the off-policy distillation term introduced in \Secref{ssec:method-critic}, which supplies the action gradient $\nabla_{a}Q_\phi$ of the frozen demonstration-trained critic. Every other setting is shared with Drift, so the two differ by exactly this one term and their gap is attributable to the critic. The critic is trained as described in \Secref{appendix:critic-training} and the coupling weight and normalization are given in Table~\ref{tab:driftopd-hparams}.

\clearpage
\subsection{Simulation Evaluation Protocol}
\label{appendix:simulation-protocol}

We follow the official benchmark protocols for replanning intervals and episode termination, while environment initialization follows the evaluation code released with the corresponding teacher models. Task sets, episode counts and the resulting settings are summarized in Table~\ref{tab:benchmark-config}.

\begin{table}[H]
    \centering
    \caption{Simulation benchmark configurations.}
    \label{tab:benchmark-config}
    \setlength{\tabcolsep}{4pt}
    \renewcommand{\arraystretch}{1.2}
    \resizebox{\textwidth}{!}{%
        \begin{tabular}{c c c c c c c}
            \toprule
            \textbf{Benchmark} & \textbf{Task set} & \textbf{\# Tasks} & \textbf{Episodes per task} & \textbf{Initialization} & \textbf{Actions before replanning} & \textbf{Termination} \\
            \midrule

            \multirow{3}{*}{RoboCasa365} & \texttt{pretrain} split & \multirow{3}{*}{50} & \multirow{3}{*}{30} & $\pi_{0.5}$: seed 7 & $\pi_{0.5}$: 5 & \multirow{3}{*}{Success or task horizon} \\
            & (leaderboard) & & & GR00T: no explicit seed & GR00T N1.5: 16 & \\
            & & & & & GR00T N1.6: 8 & \\
            \midrule

            \multirow{2}{*}{RoboTwin 2.0} & 50 tasks & \multirow{2}{*}{50} & \multirow{2}{*}{30} & Accepted-seed & \multirow{2}{*}{25} & Success or \\
            & \texttt{demo\_clean} & & & manifest & & task-specific limit \\
            \midrule

            \multirow{2}{*}{LIBERO} & 4 suites & \multirow{2}{*}{40} & \multirow{2}{*}{50} & Seed 7 & $\pi_{0.5}$: 5 & Success or \\
            & $\times$ 10 tasks & & & 50 official initial states & ABot / SmolVLA: 10 & suite horizon \\
            \midrule

            \multirow{2}{*}{LIBERO-Plus} & Perturbed & \multirow{2}{*}{10,030} & \multirow{2}{*}{1} & Seed 7 & $\pi_{0.5}$: 5 & Success or \\
            & LIBERO tasks & & & first official initial state & ABot / SmolVLA: 10 & suite horizon \\
            \bottomrule
        \end{tabular}%
    }
\end{table}

\paragraph{RoboCasa365.}
We use the \texttt{pretrain} evaluation split with 18 \texttt{atomic\_seen}, 16 \texttt{composite\_seen} and 16 \texttt{composite\_unseen} tasks, each evaluated over 30 episodes. 
For $\pi_{0.5}$ the environment is initialized with seed 7, so the compared methods share the same initial conditions and differ only in the stochasticity of policy generation. 
The official GR00T evaluators do not fix an environment seed, so those evaluations are not treated as paired. 
Thus, for Table~\ref{tab:robocasa365}, we follow the official GR00T evaluation protocol, while paired episode comparisons are provided as demo videos on the project page.
An episode ends when the environment reports success or when the task horizon from the benchmark registry is exhausted.

\paragraph{RoboTwin 2.0.}
We evaluate 50 tasks with the ALOHA-AgileX embodiment under the \texttt{demo\_clean} configuration, which disables background, lighting, camera, table-height and clutter randomization. The simulator runs at a physics timestep of $1/250\,\mathrm{s}$. Each predicted action chunk is executed for 25 environment actions before replanning, and an episode ends on success or when the task-specific action limit is reached.

To make initial conditions reproducible we build an accepted-seed manifest per task. Candidate seeds are drawn in increasing order from $100000$ and run with the benchmark expert, and a seed is accepted only when both its motion plan and its task-success check pass. Seeds the expert cannot solve are discarded, because such episodes fail for every policy and separate no methods while still consuming the budget. Filling 30 accepted episodes takes about 36 candidates per task at the median, so the seeds actually executed span $100000$ to $100190$ and form a non-contiguous subset of that interval rather than a fixed block. The manifest is shared across the compared methods, so a given episode index refers to the same initial condition for each of them, and the executed seed and outcome of every episode are recorded so that two runs can be aligned episode by episode. The short, medium and long groups reported in Table~\ref{tab:robotwin2} correspond to average demonstration trajectory lengths of ${<}150$, $150$--$279$ and ${\geq}280$ steps.

\paragraph{LIBERO and LIBERO-Plus.}
For LIBERO we evaluate the four standard suites with 10 tasks per suite and 50 episodes per task. Each task uses seed 7 and the 50 benchmark-provided initial states exactly once, so the compared methods again share initial conditions. The policy consumes agent-view and wrist RGB observations resized with padding to $224\times224$. Every episode begins with 10 no-op steps for physical stabilization, and the maximum horizons are 220, 280, 300 and 520 steps for LIBERO-Spatial, LIBERO-Object, LIBERO-Goal and LIBERO-10. An episode ends immediately on success and is otherwise counted as a failure once its horizon is reached.

LIBERO-Plus reuses the LIBERO students without further training and follows the same execution protocol and suite horizons. Its 10,030 task instances are perturbed variants of the standard LIBERO tasks, so each instance is already a distinct condition and is evaluated for a single episode from the first benchmark-provided initial state. All LIBERO-Plus evaluations are run in one compute environment so that rendering does not differ across nodes.

\clearpage
\subsection{Real-World Data and Evaluation}
\label{appendix:realworld-protocol}

\begin{figure}[H]
    \centering
    \includegraphics[width=0.85\linewidth]{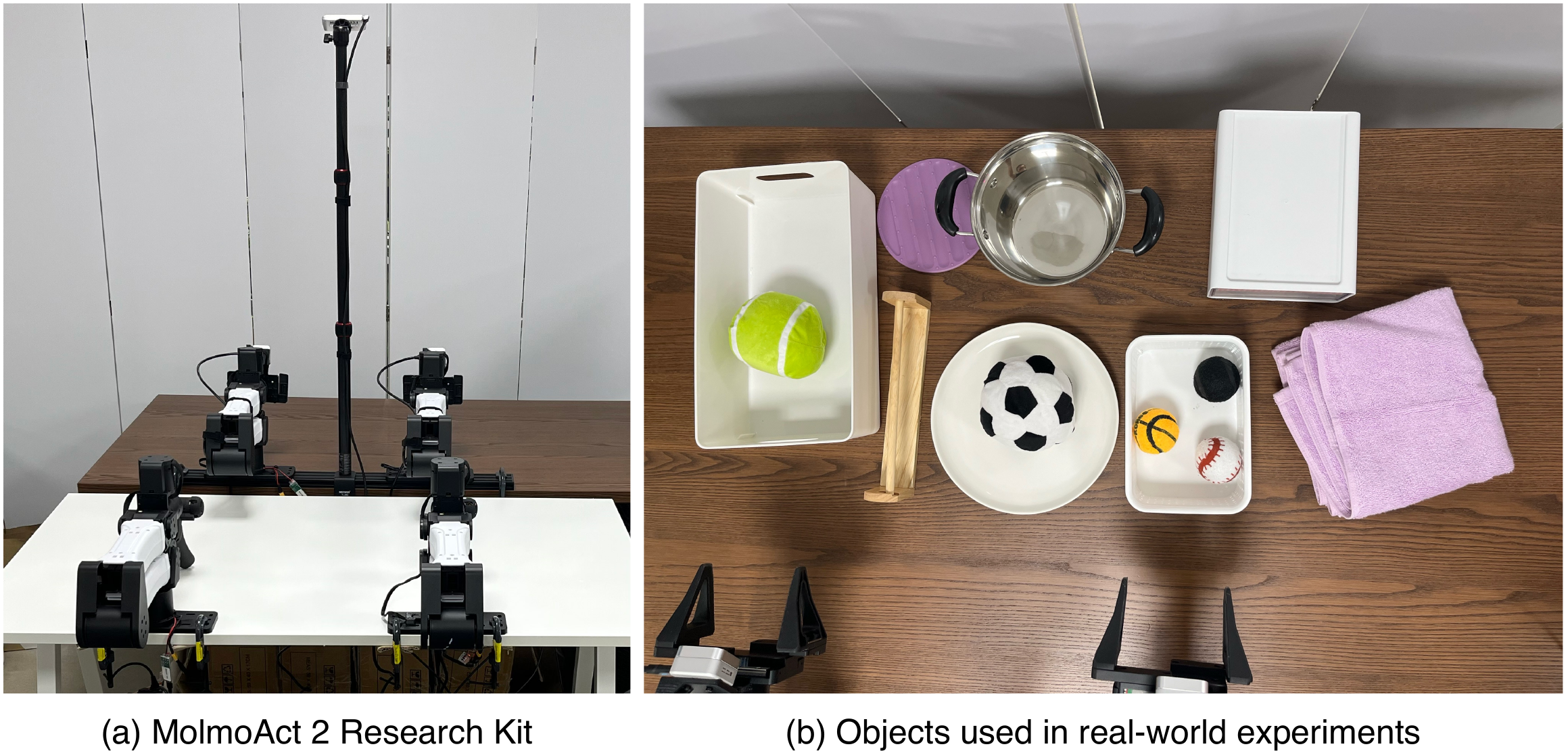}
    \vspace{-5pt}
    \caption{Real-world experimental setup. (a) MolmoAct 2 Research Kit on the YAM leader--follower bimanual platform, with top and wrist RealSense cameras providing RGB observations. The right arm is used for unimanual tasks and both arms for bimanual tasks. (b) Objects and containers used for the unimanual pick-and-place and bimanual handover tasks.}
    \label{fig:realworld-setup}
    \vspace{-15pt}
\end{figure}

\paragraph{Platform and demonstrations.}
All real-world experiments are conducted on the YAM bimanual platform shown in \Figref{fig:realworld-setup}a, using the right arm for unimanual tasks and both arms for bimanual tasks. Two operators collected 638 demonstration episodes spanning twelve tasks, teleoperating the follower arms at roughly $200\,\mathrm{Hz}$ while the three RealSense cameras recorded at approximately $30\,\mathrm{Hz}$. Robot state and action streams are linearly interpolated onto camera timestamps so that every frame carries a matched observation-action pair, yielding 366,373 aligned frames. The policy observes RGB only, letterboxed from $640\times480$ to $256\times256$, and predicts a seven-dimensional action per arm, consisting of six arm joint values and one continuous gripper value. Actions are normalized to $[-1,1]$ using 1$^\mathrm{st}$/99$^\mathrm{th}$ percentile min-max scaling, preventing rare teleoperation outliers from compressing the effective action range. The full collection and GR00T N1.6 teacher fine-tuning recipe are listed in Table~\ref{tab:realworld-data}.

\begin{table}[H]
\centering
\small
\caption{Real-world demonstration data and GR00T N1.6 teacher fine-tuning}
\vspace{-8pt}
\renewcommand{\arraystretch}{1.2}
\resizebox{0.95\textwidth}{!}{%
\begin{tabular}{p{0.31\textwidth} p{0.61\textwidth}}
    \toprule
    \textbf{Item} & \textbf{Setting} \\
    \midrule
    Robot & I2RT YAM leader--follower system \\
    Demonstrations & Unimanual (right arm): 424 episodes, 269,089 frames, 8 tasks\newline Bimanual: 214 episodes, 97,284 frames, 4 tasks \\
    Teleoperation rate & $\approx 200\,\mathrm{Hz}$ \\
    Camera rate & $\approx 30\,\mathrm{Hz}$ \\
    Policy cameras & Top, right, and left Intel RealSense, RGB only \\
    Raw resolution & $640\times480$ \\
    Model image input & $256\times256$ with letterbox preprocessing \\
    Robot state & Follower-arm state \\
    Action target & Leader-arm action \\
    Action dimension & Unimanual: 7 (six arm joints and one gripper value)\newline Bimanual: 14 (twelve arm joints and two gripper values) \\
    Gripper & Continuous value in $[0,1]$ \\
    Temporal alignment & Linear interpolation of state and action to camera timestamps \\
    Normalization & 1$^\mathrm{st}$/99$^\mathrm{th}$ percentile min-max scaling to $[-1,1]$ \\
    Data usage & All demonstrations used for teacher fine-tuning; no held-out split \\
    Teacher model & GR00T N1.6 \\
    Teacher fine-tuning & 20,000 steps, batch size 64, AdamW, learning rate
    $10^{-4}$, weight decay $10^{-5}$, warmup ratio $0.05$, cosine decay \\
    Hardware & 1$\times$ NVIDIA B200 \\
    \bottomrule
\end{tabular}
}
\label{tab:realworld-data}
\end{table}

\begin{table}[H]
\centering
\small
\caption{Objects used in the real-world experiments. Dimensions are height
$\times$ width $\times$ depth in centimeters; for spherical and circular items
the diameter is reported instead.}
\renewcommand{\arraystretch}{1.2}
\resizebox{0.95\textwidth}{!}{%
\begin{tabular}{p{0.30\textwidth} p{0.35\textwidth} p{0.25\textwidth}}
    \toprule
    \textbf{Object} & \textbf{Size} & \textbf{Role} \\
    \midrule
    Small basket (white, shallow) & $20.2\times14.5\times6$ & Target container \\
    Plate (white, round) & diameter $22.5$ & Target container \\
    Pot (stainless steel) & diameter $16$, height $9.2$ & Target container \\
    Plush tennis ball & diameter $\approx12$ & Grasped object \\
    Plush soccer ball & diameter $\approx12$ & Grasped object \\
    Plush basketball & diameter $\approx5$ & Grasped object \\
    Plush baseball & diameter $\approx5$ & Grasped object \\
    Plush black ball & diameter $\approx5$ & Grasped object \\
    \bottomrule
\end{tabular}
}
\vspace{-10pt}
\label{tab:realworld-objects}
\end{table}

\paragraph{Task design.}
We construct four unimanual pick-and-place tasks with different arm motions and end-effector orientations, as well as two bimanual pick-and-handover-and-place tasks using the objects and containers shown in \Figref{fig:realworld-setup}b and Table~\ref{tab:realworld-objects}. The four unimanual tasks are qualitatively grouped into Easy, Moderate, Hard, and Challenging based on their task characteristics and the multi-step teacher success rates. For the two bimanual tasks, we swap the object and target locations to create left-to-right and right-to-left handover settings.

\begin{table}[H]
\centering
\small
\caption{Real-world evaluation configuration}
\renewcommand{\arraystretch}{1.2}
\resizebox{0.95\textwidth}{!}{%
\begin{tabular}{p{0.35\textwidth} p{0.56\textwidth}}
    \toprule
    \textbf{Item} & \textbf{Setting} \\
    \midrule
    Tasks & Unimanual: four pick-and-place tasks \newline Bimanual: two pick-and-handover-and-place tasks \\
    Evaluation settings & 15 fixed scenes per task \\
    Pairing & Teacher session captures the scenes; all methods replay the same scenes in the same order \\
    Control frequency & $30\,\mathrm{Hz}$ \\
    Inference interval & $0.3\,\mathrm{s}$, i.e.\ about 9 commanded control steps between replans \\
    Inference mode & Asynchronous rate-limited inference \\
    Inference hardware & Local NVIDIA GeForce RTX 4090 (24\,GB); no external inference server \\
    Success annotation & Manual binary success/failure by the operator \\
    Scene verification & RGB and depth similarity logged against the teacher-session reference \\
    Reset & Return to a predefined basic pose, then settle \\
    Episode timeout & $40\,\mathrm{s}$ \\
    Maximum commanded joint speed & $0.75\,\mathrm{rad/s}$ \\
    Command timeout & $0.5\,\mathrm{s}$ \\
    \bottomrule
\end{tabular}
}
\vspace{-10pt}
\label{tab:realworld-execution}
\end{table}

\paragraph{Paired scene replay.}
Each task is evaluated on 15 fixed scenes. The multi-step teacher session runs first and records those scenes; the one-step teacher, sCD, MFD, Drift, and DriftOPD then replay the same 15 scenes in the same order. To ensure paired initial conditions, we require both the top and wrist views to achieve PSNR $\geq 28$\,dB and SSIM $\geq 0.95$ relative to the corresponding reference scene. Thus, every method is evaluated under closely matched visual environments. Success is annotated manually as a binary outcome by the operator running the session. The remaining execution settings are given in Table~\ref{tab:realworld-execution}.

\paragraph{Safety and exclusions.}
Joint-speed and command-timeout limits are enforced by the robot controller independently of the evaluated policy, so they apply identically to every method. Joint-limit recovery occasionally requires an extra reset attempt between episodes; because this happens before policy execution begins, no accepted evaluation episode is discarded by it. Episodes affected by operator error are identified and repeated before evaluation rather than being dropped afterwards. Any episode in which the two robot arms collide with each other, or in which a robot arm collides with the table or the top-camera support structure, is counted as a task failure.

\subsection{Hardware and Software}
\label{appendix:hardware-software}

\begin{table}[H]
\centering
\small
\caption{Hardware used for training and evaluation}
\vspace{-8pt}
\setlength{\tabcolsep}{10pt}
\renewcommand{\arraystretch}{1.2}
\resizebox{0.95\textwidth}{!}{%
\begin{tabular}{p{0.29\textwidth} p{0.35\textwidth} p{0.25\textwidth}}
    \toprule
    \textbf{Workload} & \textbf{Hardware} & \textbf{Usage} \\
    \midrule
    Student distillation & NVIDIA B200, 180\,GB & 1 GPU per run \\
    Critic training & NVIDIA B200, 180\,GB & 1 GPU per run \\
    Real-world teacher fine-tuning & NVIDIA B200, 180\,GB & 1 GPU per run \\
    Simulation evaluation & RoboTwin 2.0 \& LIBERO-Plus: \newline \hspace*{1em} NVIDIA A100-SXM4-40GB \newline RoboCasa365 \& LIBERO: \newline \hspace*{1em} NVIDIA B200 & 1 evaluation lane per GPU \\
    Real-world inference & NVIDIA GeForce RTX 4090, 24\,GB & Local robot workstation \\
    \bottomrule
\end{tabular}
}
\vspace{-15pt}
\label{tab:hardware}
\end{table}

The hardware used for training and evaluation is summarized in Table~\ref{tab:hardware}.
All student-distillation and critic-training runs use one GPU per process.

\begin{table}[H]
\centering
\small
\caption{Benchmark-specific software environments. Each benchmark uses its own
Python environment because the simulator dependencies conflict.}
\vspace{-8pt}
\renewcommand{\arraystretch}{1.2}
\resizebox{\textwidth}{!}{%
\begin{tabular}{p{0.18\textwidth} p{0.35\textwidth} p{0.15\textwidth} p{0.30\textwidth}}
    \toprule
    \textbf{Component} & \textbf{RoboTwin 2.0} & \textbf{RoboCasa365} &
    \textbf{LIBERO / LIBERO-Plus} \\
    \midrule
    Benchmark code & RoboTwin, upstream commit \texttt{c3ddfa8b} (23 May 2026);
      camera images rendered with the rasterization shader instead of the
      upstream 32-spp ray-tracing path
      & RoboCasa 1.0.1 (vendored) & LIBERO (vendored) \\
    Simulator & SAPIEN 3.0.0b1 & MuJoCo 3.3.1 & MuJoCo 2.3.7
      ($\pi_{0.5}$, ABot-M0); 3.8.1 (SmolVLA) \\
    Physics timestep & $1/250\,\mathrm{s}$ & Benchmark default & Benchmark default \\
    robosuite & --- & 1.5.2 & 1.4.0 \\
    Motion planning & mplib 0.2.1, nvidia-curobo 0.7.8, warp-lang 1.15.0 & --- & --- \\
    Python & 3.10.20 & 3.11.15 & 3.11.15 \\
    PyTorch & 2.7.1+cu128 & 2.7.1+cu128 & 2.11.0+cu128 ($\pi_{0.5}$);\newline
      2.7.1+cu128 (ABot-M0, SmolVLA) \\
    NumPy & 1.26.4 & 2.2.5 & 1.26.4 \\
    Gymnasium & 0.29.1 & 0.29.1 & 0.29.1 \\
    \bottomrule
\end{tabular}%
}
\vspace{-15pt}
\label{tab:benchmark-software}
\end{table}

Each benchmark keeps a separate Python environment because their simulator dependencies conflict: RoboCasa365 requires robosuite 1.5.2 with MuJoCo 3.3.1, whereas LIBERO is pinned to robosuite 1.4.0 with MuJoCo 2.3.7. Within LIBERO-Plus the SmolVLA evaluator additionally requires a newer MuJoCo than the other two policies. The RoboCasa365 and LIBERO copies are vendored into our codebase without upstream revision metadata, so we report the versions recoverable from the environments rather than a commit hash.

\subsection{Oscillation Metric}
\label{appendix:oscillation-metric}

Task success rate is the primary metric for both simulation and real-world experiments. On the real robot we additionally report a command oscillation metric in Table~\ref{tab:oscilation-real}, because success rate alone does not say how a policy reaches the goal. A one-step student can complete a task while emitting commands that reverse direction from step to step, which stresses the hardware and is not visible in a binary outcome. We therefore measure oscillation as a stability metric for real-robot deployment. 

The commanded joint trajectory carries both the intended motion and any high-frequency jitter on top of it. We separate the two by smoothing the trajectory and taking the residual as the jitter. For each accepted episode, 
\begin{equation}
    \begin{aligned}
        \hat{x}_j &= \mathrm{SG}_{w=15,p=2}(x_j), \\
        \mathrm{Osc} &= \frac{180}{\pi}\cdot\frac{1}{J}\sum_{j=1}^{J}
        \sqrt{
        \frac{1}{N}\sum_{t=1}^{N}
        \left(x_j(t)-\hat{x}_j(t)\right)^2
        },
    \end{aligned}
\label{eq:oscillation}
\end{equation}
where $x_j(t)$ is the commanded position of arm joint $j$ at timestep $t$ and $\mathrm{SG}_{w=15,p=2}$ is a second-order Savitzky-Golay filter with a 15-sample window. At the $30\,\mathrm{Hz}$ command rate that window spans about $0.5\,\mathrm{s}$, so the filter passes deliberate reaching motion and leaves only faster reversals in the residual. The residual RMS is averaged over the $J$ arm joints, excluding the grippers ($J=6$ for unimanual and $J=12$ for bimanual), and converted from radians to degrees, so $\mathrm{Osc}$ is read as a typical command deviation in degrees.

We compute $\mathrm{Osc}$ per episode and report the unweighted mean over the 15 accepted episodes of each task--method pair. Failed episodes are included: a policy that oscillates its way into failure should not be rewarded by having those episodes dropped.

\section{Full Results}
\label{appendix:results}

This section expands the aggregate tables of the main paper into per-task results and adds the two LIBERO benchmarks, which the main paper reports only in summary. All success rates are per-task rates over the episode counts given in Table~\ref{tab:benchmark-config}.

\paragraph{RoboCasa365 (Tables~\ref{tab:robocasa365_pi05_full}--\ref{tab:robocasa365_n16}).}
These four tables are the per-task version of Table~\ref{tab:robocasa365}, for $\pi_{0.5}$ with full fine-tuning, $\pi_{0.5}$ with LoRA ($r{=}16$), GR00T N1.5 and GR00T N1.6. Each keeps the three \texttt{pretrain} suites separate, and the \textit{Avg.} rows reproduce the corresponding entries of the main table. The per-task view shows that the suite averages rest on a few tasks: most \texttt{composite\_unseen} tasks are at $0\%$ for every method.

\paragraph{RoboTwin 2.0 (Tables~\ref{tab:robotwin2_pi05_full} and~\ref{tab:robotwin2_abot_full}).}
These two tables are the per-task version of Table~\ref{tab:robotwin2}, for $\pi_{0.5}$-LoRA and ABot-M0. Tasks are ordered by average demonstration trajectory length and grouped into the same \texttt{short}, \texttt{medium} and \texttt{long} horizons as the main table, so the horizon trend reads directly down each column. The backbones differ at long horizons: $\pi_{0.5}$ loses $14.5$ points from the multi-step to the one-step teacher, which the students partly recover, whereas for ABot-M0 both teachers sit at $36.97\%$ and no such gap exists.

\paragraph{LIBERO (Table~\ref{tab:libero}).}
This table is not summarized in the main paper. It reports success rate for each of the four standard suites and the 40-task average. Excluding SmolVLA, both backbones are near saturation on \texttt{libero\_spatial}, \texttt{libero\_object} and \texttt{libero\_goal}, so methods separate mainly through the long-horizon \texttt{libero\_10}. The one exception is MFD on ABot-M0, which collapses to $0\%$ everywhere.

\paragraph{LIBERO-Plus (Table~\ref{tab:libero-plus}).}
This table is likewise new. It breaks the 10,030 perturbed task instances down by perturbation factor, so each row isolates one axis of robustness. Success rates track the perturbation rather than the distillation method: for $\pi_{0.5}$-LoRA every method stays within about one point of the teacher on each factor, and the spread across factors ($71\%$ on camera viewpoints versus $98\%$ on lighting) far exceeds the spread across methods.

\paragraph{Training curves (\Figref{fig:train_loss}).}
The figure shows one-step distillation losses on RoboCasa365. One caveat is needed to read it: the drifting objective $\mathcal{L}_{\mathrm{Drift}}$ is RMS-normalized, so its value stays near $1$ throughout training by construction and carries no information about optimization progress. A flat drift curve is the expected behavior, not a stalled run. For DriftOPD the total loss is plotted with its drift term (dashed) and critic term $-Q$ (red); only the critic term moves on a meaningful scale, and it is what separates DriftOPD from Drift.

\paragraph{Pre-normalization drift magnitude (\Figref{fig:train_drift_loss}).}
Because that normalization removes the scale of $\mathcal{L}_{\mathrm{Drift}}$, we additionally plot the bandwidth-wise RMS of the drifting term before normalization. This is the quantity that actually changes over training, and shows how the drift field's magnitude is distributed across kernel bandwidths.

\clearpage
\begin{table}[t]
    \centering
    \caption{Per-task success rates of $\pi_{0.5}$ (full fine-tuning) and its one-step students on RoboCasa365. Each task is evaluated over 30 episodes. The multi-step Teacher column is taken from the RoboCasa365 leaderboard (\url{https://robocasa.ai/leaderboard.html}).}
    \setlength{\tabcolsep}{6pt}
    \renewcommand{\arraystretch}{1.1}
    \resizebox{\textwidth}{!}{%
    \begin{tabular}{c c l c c c c c c}
        \toprule
        \multirow{2}{*}{\textbf{Model}} &
        \multirow{2}{*}{\textbf{\shortstack{Teacher\\steps}}} &
        \multirow{2}{*}{\textbf{Task}} &
        \multirow{2}{*}{\textbf{Teacher}} &
        \multicolumn{5}{c}{\textbf{1-step}} \\
        \cmidrule(lr){5-9}
        & & & &
        \textbf{Teacher} &
        \textbf{sCD} &
        \textbf{MFD} &
        \textbf{Drift} &
        \textbf{DriftOPD} \\
        \midrule

        \multicolumn{9}{l}{\textit{\texttt{atomic\_seen}} (18 tasks)} \\
        \multirow{18}{*}{$\pi_{0.5}$}
        & \multirow{18}{*}{10}
        & \texttt{CloseBlenderLid}    & -- & 6.67 & 0.00 & 3.33 & 3.33 & 3.33 \\
        & & \texttt{CloseFridge}    & -- & 23.33 & 46.67 & 46.67 & 30.00 & 56.67 \\
        & & \texttt{CloseToasterOvenDoor}    & -- & 0.00 & 0.00 & 6.67 & 3.33 & 0.00 \\
        & & \texttt{CoffeeSetupMug}    & -- & 13.33 & 3.33 & 10.00 & 13.33 & 6.67 \\
        & & \texttt{NavigateKitchen}    & -- & 0.00 & 0.00 & 3.33 & 0.00 & 0.00 \\
        & & \texttt{OpenCabinet}    & -- & 30.00 & 30.00 & 36.67 & 43.33 & 43.33 \\
        & & \texttt{OpenDrawer}    & -- & 53.33 & 63.33 & 53.33 & 66.67 & 83.33 \\
        & & \texttt{OpenStandMixerHead}    & -- & 63.33 & 30.00 & 26.67 & 30.00 & 53.33 \\
        & & \texttt{PickPlaceCounterToCabinet}    & -- & 43.33 & 46.67 & 66.67 & 70.00 & 76.67 \\
        & & \texttt{PickPlaceCounterToStove}    & -- & 76.67 & 73.33 & 70.00 & 70.00 & 66.67 \\
        & & \texttt{PickPlaceDrawerToCounter}    & -- & 36.67 & 33.33 & 46.67 & 36.67 & 53.33 \\
        & & \texttt{PickPlaceSinkToCounter}    & -- & 86.67 & 86.67 & 76.67 & 93.33 & 90.00 \\
        & & \texttt{PickPlaceToasterToCounter}    & -- & 0.00 & 0.00 & 6.67 & 3.33 & 3.33 \\
        & & \texttt{SlideDishwasherRack}    & -- & 73.33 & 66.67 & 80.00 & 60.00 & 76.67 \\
        & & \texttt{TurnOffStove}    & -- & 6.67 & 3.33 & 6.67 & 0.00 & 6.67 \\
        & & \texttt{TurnOnElectricKettle}    & -- & 0.00 & 0.00 & 3.33 & 0.00 & 6.67 \\
        & & \texttt{TurnOnMicrowave}    & -- & 0.00 & 0.00 & 16.67 & 3.33 & 13.33 \\
        & & \texttt{TurnOnSinkFaucet}    & -- & 50.00 & 26.67 & 60.00 & 53.33 & 50.00 \\
        \cmidrule(lr){3-9}
        & & \textit{Avg. (atomic\_seen)} & 39.6 & 31.30 & 28.33 & 34.44 & 32.22 & \textbf{38.33} \\
        \midrule

        \multicolumn{9}{l}{\textit{\texttt{composite\_seen}} (16 tasks)} \\
        \multirow{16}{*}{$\pi_{0.5}$}
        & \multirow{16}{*}{10}
        & \texttt{DeliverStraw}    & -- & 0.00 & 0.00 & 0.00 & 0.00 & 0.00 \\
        & & \texttt{GetToastedBread}    & -- & 0.00 & 0.00 & 0.00 & 0.00 & 0.00 \\
        & & \texttt{KettleBoiling}    & -- & 0.00 & 3.33 & 0.00 & 0.00 & 3.33 \\
        & & \texttt{LoadDishwasher}    & -- & 6.67 & 0.00 & 10.00 & 20.00 & 16.67 \\
        & & \texttt{PackIdenticalLunches}    & -- & 0.00 & 0.00 & 3.33 & 3.33 & 0.00 \\
        & & \texttt{PreSoakPan}    & -- & 3.33 & 10.00 & 3.33 & 23.33 & 40.00 \\
        & & \texttt{PrepareCoffee}    & -- & 0.00 & 0.00 & 3.33 & 0.00 & 0.00 \\
        & & \texttt{RinseSinkBasin}    & -- & 0.00 & 0.00 & 3.33 & 0.00 & 3.33 \\
        & & \texttt{ScrubCuttingBoard}    & -- & 0.00 & 0.00 & 0.00 & 0.00 & 0.00 \\
        & & \texttt{SearingMeat}    & -- & 0.00 & 0.00 & 0.00 & 0.00 & 0.00 \\
        & & \texttt{SetUpCuttingStation}    & -- & 0.00 & 0.00 & 6.67 & 0.00 & 0.00 \\
        & & \texttt{StackBowlsCabinet}    & -- & 3.33 & 3.33 & 10.00 & 16.67 & 16.67 \\
        & & \texttt{SteamInMicrowave}    & -- & 0.00 & 0.00 & 0.00 & 3.33 & 0.00 \\
        & & \texttt{StirVegetables}    & -- & 0.00 & 0.00 & 0.00 & 0.00 & 3.33 \\
        & & \texttt{StoreLeftoversInBowl}    & -- & 0.00 & 0.00 & 0.00 & 0.00 & 0.00 \\
        & & \texttt{WashLettuce}    & -- & 0.00 & 0.00 & 0.00 & 13.33 & 3.33 \\
        \cmidrule(lr){3-9}
        & & \textit{Avg. (composite\_seen)} & 7.1 & 0.83 & 1.04 & 2.50 & 5.00 & \textbf{5.42} \\
        \midrule

        \multicolumn{9}{l}{\textit{\texttt{composite\_unseen}} (16 tasks)} \\
        \multirow{16}{*}{$\pi_{0.5}$}
        & \multirow{16}{*}{10}
        & \texttt{ArrangeBreadBasket}    & -- & 0.00 & 0.00 & 0.00 & 0.00 & 0.00 \\
        & & \texttt{ArrangeTea}    & -- & 0.00 & 0.00 & 0.00 & 0.00 & 0.00 \\
        & & \texttt{BreadSelection}    & -- & 0.00 & 0.00 & 0.00 & 0.00 & 0.00 \\
        & & \texttt{CategorizeCondiments}    & -- & 0.00 & 0.00 & 0.00 & 0.00 & 0.00 \\
        & & \texttt{CuttingToolSelection}    & -- & 0.00 & 0.00 & 0.00 & 0.00 & 0.00 \\
        & & \texttt{GarnishPancake}    & -- & 0.00 & 0.00 & 0.00 & 0.00 & 0.00 \\
        & & \texttt{GatherTableware}    & -- & 0.00 & 0.00 & 0.00 & 0.00 & 0.00 \\
        & & \texttt{HeatKebabSandwich}    & -- & 0.00 & 0.00 & 0.00 & 0.00 & 0.00 \\
        & & \texttt{MakeIceLemonade}    & -- & 0.00 & 0.00 & 0.00 & 0.00 & 0.00 \\
        & & \texttt{PanTransfer}    & -- & 0.00 & 0.00 & 0.00 & 0.00 & 0.00 \\
        & & \texttt{PortionHotDogs}    & -- & 0.00 & 0.00 & 0.00 & 0.00 & 0.00 \\
        & & \texttt{RecycleBottlesByType}    & -- & 13.33 & 13.33 & 13.33 & 13.33 & 13.33 \\
        & & \texttt{SeparateFreezerRack}    & -- & 0.00 & 0.00 & 0.00 & 0.00 & 0.00 \\
        & & \texttt{WaffleReheat}    & -- & 0.00 & 0.00 & 0.00 & 0.00 & 0.00 \\
        & & \texttt{WashFruitColander}    & -- & 0.00 & 0.00 & 0.00 & 3.33 & 13.33 \\
        & & \texttt{WeighIngredients}    & -- & 0.00 & 0.00 & 0.00 & 0.00 & 0.00 \\
        \cmidrule(lr){3-9}
        & & \textit{Avg. (composite\_unseen)} & 1.2 & 0.83 & 0.83 & 0.83 & 1.04 & \textbf{1.67} \\

        \bottomrule
    \end{tabular}%
    }
\label{tab:robocasa365_pi05_full}
\end{table}

\clearpage
\begin{table}[t]
    \centering
    \caption{Per-task success rates of $\pi_{0.5}$ (LoRA $r{=}16$) and its one-step students on RoboCasa365. Each task is evaluated over 30 episodes. The multi-step Teacher column is taken from the RoboCasa365 leaderboard (\url{https://robocasa.ai/leaderboard.html}).}
    \setlength{\tabcolsep}{6pt}
    \renewcommand{\arraystretch}{1.1}
    \resizebox{\textwidth}{!}{%
    \begin{tabular}{c c l c c c c c c}
        \toprule
        \multirow{2}{*}{\textbf{Model}} &
        \multirow{2}{*}{\textbf{\shortstack{Teacher\\steps}}} &
        \multirow{2}{*}{\textbf{Task}} &
        \multirow{2}{*}{\textbf{Teacher}} &
        \multicolumn{5}{c}{\textbf{1-step}} \\
        \cmidrule(lr){5-9}
        & & & &
        \textbf{Teacher} &
        \textbf{sCD} &
        \textbf{MFD} &
        \textbf{Drift} &
        \textbf{DriftOPD} \\
        \midrule

        \multicolumn{9}{l}{\textit{\texttt{atomic\_seen}} (18 tasks)} \\
        \multirow{18}{*}{\shortstack{$\pi_{0.5}$\\(LoRA $r{=}16$)}}
        & \multirow{18}{*}{10}
        & \texttt{CloseBlenderLid}   & -- & 6.67 & 0.00 & 0.00 & 3.33 & 3.33 \\
        & & \texttt{CloseFridge}   & -- & 23.33 & 10.00 & 43.33 & 20.00 & 43.33 \\
        & & \texttt{CloseToasterOvenDoor}   & -- & 0.00 & 0.00 & 3.33 & 0.00 & 6.67 \\
        & & \texttt{CoffeeSetupMug}   & -- & 13.33 & 10.00 & 10.00 & 16.67 & 10.00 \\
        & & \texttt{NavigateKitchen}   & -- & 0.00 & 0.00 & 3.33 & 0.00 & 0.00 \\
        & & \texttt{OpenCabinet}   & -- & 30.00 & 23.33 & 16.67 & 36.67 & 33.33 \\
        & & \texttt{OpenDrawer}   & -- & 53.33 & 56.67 & 40.00 & 56.67 & 70.00 \\
        & & \texttt{OpenStandMixerHead}   & -- & 63.33 & 60.00 & 6.67 & 53.33 & 53.33 \\
        & & \texttt{PickPlaceCounterToCabinet}   & -- & 43.33 & 60.00 & 53.33 & 46.67 & 66.67 \\
        & & \texttt{PickPlaceCounterToStove}   & -- & 76.67 & 83.33 & 66.67 & 70.00 & 70.00 \\
        & & \texttt{PickPlaceDrawerToCounter}   & -- & 36.67 & 36.67 & 30.00 & 36.67 & 46.67 \\
        & & \texttt{PickPlaceSinkToCounter}   & -- & 86.67 & 70.00 & 76.67 & 93.33 & 96.67 \\
        & & \texttt{PickPlaceToasterToCounter}   & -- & 0.00 & 3.33 & 10.00 & 16.67 & 30.00 \\
        & & \texttt{SlideDishwasherRack}   & -- & 73.33 & 70.00 & 60.00 & 80.00 & 73.33 \\
        & & \texttt{TurnOffStove}   & -- & 6.67 & 0.00 & 10.00 & 6.67 & 10.00 \\
        & & \texttt{TurnOnElectricKettle}   & -- & 0.00 & 0.00 & 13.33 & 3.33 & 10.00 \\
        & & \texttt{TurnOnMicrowave}   & -- & 0.00 & 0.00 & 10.00 & 13.33 & 23.33 \\
        & & \texttt{TurnOnSinkFaucet}   & -- & 50.00 & 43.33 & 56.67 & 60.00 & 60.00 \\
        \cmidrule(lr){3-9}
        & & \textit{Avg. (atomic\_seen)} & 39.6 & 31.30 & 29.26 & 28.33 & 34.07 & \textbf{39.26} \\
        \midrule

        \multicolumn{9}{l}{\textit{\texttt{composite\_seen}} (16 tasks)} \\
        \multirow{16}{*}{\shortstack{$\pi_{0.5}$\\(LoRA $r{=}16$)}}
        & \multirow{16}{*}{10}
        & \texttt{DeliverStraw}   & -- & 0.00 & 0.00 & 0.00 & 0.00 & 0.00 \\
        & & \texttt{GetToastedBread}   & -- & 0.00 & 0.00 & 0.00 & 0.00 & 0.00 \\
        & & \texttt{KettleBoiling}   & -- & 0.00 & 0.00 & 3.33 & 0.00 & 10.00 \\
        & & \texttt{LoadDishwasher}   & -- & 6.67 & 6.67 & 10.00 & 20.00 & 23.33 \\
        & & \texttt{PackIdenticalLunches}   & -- & 0.00 & 0.00 & 0.00 & 0.00 & 0.00 \\
        & & \texttt{PreSoakPan}   & -- & 3.33 & 16.67 & 20.00 & 10.00 & 13.33 \\
        & & \texttt{PrepareCoffee}   & -- & 0.00 & 0.00 & 0.00 & 0.00 & 0.00 \\
        & & \texttt{RinseSinkBasin}   & -- & 0.00 & 3.33 & 0.00 & 3.33 & 3.33 \\
        & & \texttt{ScrubCuttingBoard}   & -- & 0.00 & 0.00 & 0.00 & 0.00 & 0.00 \\
        & & \texttt{SearingMeat}   & -- & 0.00 & 0.00 & 0.00 & 0.00 & 0.00 \\
        & & \texttt{SetUpCuttingStation}   & -- & 0.00 & 0.00 & 0.00 & 0.00 & 3.33 \\
        & & \texttt{StackBowlsCabinet}   & -- & 3.33 & 0.00 & 6.67 & 6.67 & 20.00 \\
        & & \texttt{SteamInMicrowave}   & -- & 0.00 & 0.00 & 0.00 & 0.00 & 0.00 \\
        & & \texttt{StirVegetables}   & -- & 0.00 & 0.00 & 0.00 & 6.67 & 3.33 \\
        & & \texttt{StoreLeftoversInBowl}   & -- & 0.00 & 0.00 & 0.00 & 0.00 & 0.00 \\
        & & \texttt{WashLettuce}   & -- & 0.00 & 0.00 & 3.33 & 3.33 & 3.33 \\
        \cmidrule(lr){3-9}
        & & \textit{Avg. (composite\_seen)} & 7.1 & 0.83 & 1.67 & 2.71 & 3.12 & \textbf{5.00} \\
        \midrule

        \multicolumn{9}{l}{\textit{\texttt{composite\_unseen}} (16 tasks)} \\
        \multirow{16}{*}{\shortstack{$\pi_{0.5}$\\(LoRA $r{=}16$)}}
        & \multirow{16}{*}{10}
        & \texttt{ArrangeBreadBasket}   & -- & 0.00 & 0.00 & 0.00 & 0.00 & 0.00 \\
        & & \texttt{ArrangeTea}   & -- & 0.00 & 0.00 & 0.00 & 0.00 & 0.00 \\
        & & \texttt{BreadSelection}   & -- & 0.00 & 0.00 & 0.00 & 0.00 & 0.00 \\
        & & \texttt{CategorizeCondiments}   & -- & 0.00 & 0.00 & 0.00 & 0.00 & 0.00 \\
        & & \texttt{CuttingToolSelection}   & -- & 0.00 & 0.00 & 3.33 & 6.67 & 6.67 \\
        & & \texttt{GarnishPancake}   & -- & 0.00 & 0.00 & 0.00 & 0.00 & 0.00 \\
        & & \texttt{GatherTableware}   & -- & 0.00 & 0.00 & 0.00 & 0.00 & 0.00 \\
        & & \texttt{HeatKebabSandwich}   & -- & 0.00 & 0.00 & 0.00 & 0.00 & 0.00 \\
        & & \texttt{MakeIceLemonade}   & -- & 0.00 & 0.00 & 0.00 & 0.00 & 0.00 \\
        & & \texttt{PanTransfer}   & -- & 0.00 & 0.00 & 0.00 & 0.00 & 0.00 \\
        & & \texttt{PortionHotDogs}   & -- & 0.00 & 0.00 & 0.00 & 0.00 & 3.33 \\
        & & \texttt{RecycleBottlesByType}   & -- & 13.33 & 13.33 & 13.33 & 13.33 & 13.33 \\
        & & \texttt{SeparateFreezerRack}   & -- & 0.00 & 0.00 & 0.00 & 0.00 & 0.00 \\
        & & \texttt{WaffleReheat}   & -- & 0.00 & 0.00 & 0.00 & 0.00 & 0.00 \\
        & & \texttt{WashFruitColander}   & -- & 0.00 & 0.00 & 3.33 & 3.33 & 0.00 \\
        & & \texttt{WeighIngredients}   & -- & 0.00 & 0.00 & 0.00 & 0.00 & 0.00 \\
        \cmidrule(lr){3-9}
        & & \textit{Avg. (composite\_unseen)} & 1.2 & 0.83 & 0.83 & 1.25 & \textbf{1.46} & \textbf{1.46} \\

        \bottomrule
    \end{tabular}%
    }
\label{tab:robocasa365_pi05_lora16}
\end{table}

\clearpage
\begin{table}[t]
    \centering
    \caption{Per-task success rates of GR00T N1.5 and its one-step students on RoboCasa365. Each task is evaluated over 30 episodes. The multi-step Teacher column is taken from the RoboCasa365 leaderboard (\url{https://robocasa.ai/leaderboard.html}).}
    \setlength{\tabcolsep}{6pt}
    \renewcommand{\arraystretch}{1.1}
    \resizebox{\textwidth}{!}{%
    \begin{tabular}{c c l c c c c c c}
        \toprule
        \multirow{2}{*}{\textbf{Model}} &
        \multirow{2}{*}{\textbf{\shortstack{Teacher\\steps}}} &
        \multirow{2}{*}{\textbf{Task}} &
        \multirow{2}{*}{\textbf{Teacher}} &
        \multicolumn{5}{c}{\textbf{1-step}} \\
        \cmidrule(lr){5-9}
        & & & &
        \textbf{Teacher} &
        \textbf{sCD} &
        \textbf{MFD} &
        \textbf{Drift} &
        \textbf{DriftOPD} \\
        \midrule

        \multicolumn{9}{l}{\textit{\texttt{atomic\_seen}} (18 tasks)} \\
        \multirow{18}{*}{GR00T N1.5}
        & \multirow{18}{*}{4}
        & \texttt{CloseBlenderLid}   & -- & 6.67 & 3.33 & 10.00 & 16.67 & 26.67 \\
        & & \texttt{CloseFridge}   & -- & 73.33 & 60.00 & 90.00 & 56.67 & 70.00 \\
        & & \texttt{CloseToasterOvenDoor}   & -- & 43.33 & 33.33 & 53.33 & 23.33 & 26.67 \\
        & & \texttt{CoffeeSetupMug}   & -- & 16.67 & 16.67 & 3.33 & 33.33 & 16.67 \\
        & & \texttt{NavigateKitchen}   & -- & 6.67 & 0.00 & 3.33 & 3.33 & 6.67 \\
        & & \texttt{OpenCabinet}   & -- & 56.67 & 36.67 & 63.33 & 73.33 & 53.33 \\
        & & \texttt{OpenDrawer}   & -- & 53.33 & 43.33 & 56.67 & 46.67 & 63.33 \\
        & & \texttt{OpenStandMixerHead}   & -- & 80.00 & 73.33 & 70.00 & 86.67 & 80.00 \\
        & & \texttt{PickPlaceCounterToCabinet}   & -- & 63.33 & 53.33 & 73.33 & 70.00 & 86.67 \\
        & & \texttt{PickPlaceCounterToStove}   & -- & 80.00 & 86.67 & 56.67 & 70.00 & 73.33 \\
        & & \texttt{PickPlaceDrawerToCounter}   & -- & 46.67 & 53.33 & 63.33 & 33.33 & 36.67 \\
        & & \texttt{PickPlaceSinkToCounter}   & -- & 80.00 & 73.33 & 83.33 & 90.00 & 96.67 \\
        & & \texttt{PickPlaceToasterToCounter}   & -- & 63.33 & 33.33 & 43.33 & 50.00 & 43.33 \\
        & & \texttt{SlideDishwasherRack}   & -- & 30.00 & 53.33 & 43.33 & 50.00 & 63.33 \\
        & & \texttt{TurnOffStove}   & -- & 6.67 & 10.00 & 3.33 & 3.33 & 13.33 \\
        & & \texttt{TurnOnElectricKettle}   & -- & 40.00 & 53.33 & 43.33 & 30.00 & 23.33 \\
        & & \texttt{TurnOnMicrowave}   & -- & 0.00 & 16.67 & 13.33 & 20.00 & 6.67 \\
        & & \texttt{TurnOnSinkFaucet}   & -- & 26.67 & 13.33 & 10.00 & 33.33 & 30.00 \\
        \cmidrule(lr){3-9}
        & & \textit{Avg. (atomic\_seen)} & 50.7 & 42.96 & 39.63 & 43.52 & 43.89 & \textbf{45.37} \\
        \midrule

        \multicolumn{9}{l}{\textit{\texttt{composite\_seen}} (16 tasks)} \\
        \multirow{16}{*}{GR00T N1.5}
        & \multirow{16}{*}{4}
        & \texttt{DeliverStraw}   & -- & 0.00 & 0.00 & 0.00 & 0.00 & 0.00 \\
        & & \texttt{GetToastedBread}   & -- & 0.00 & 0.00 & 0.00 & 0.00 & 0.00 \\
        & & \texttt{KettleBoiling}   & -- & 6.67 & 3.33 & 3.33 & 6.67 & 6.67 \\
        & & \texttt{LoadDishwasher}   & -- & 33.33 & 30.00 & 23.33 & 10.00 & 20.00 \\
        & & \texttt{PackIdenticalLunches}   & -- & 10.00 & 0.00 & 3.33 & 3.33 & 0.00 \\
        & & \texttt{PreSoakPan}   & -- & 10.00 & 10.00 & 33.33 & 13.33 & 13.33 \\
        & & \texttt{PrepareCoffee}   & -- & 0.00 & 0.00 & 0.00 & 0.00 & 0.00 \\
        & & \texttt{RinseSinkBasin}   & -- & 6.67 & 6.67 & 26.67 & 13.33 & 10.00 \\
        & & \texttt{ScrubCuttingBoard}   & -- & 36.67 & 6.67 & 13.33 & 13.33 & 20.00 \\
        & & \texttt{SearingMeat}   & -- & 0.00 & 0.00 & 0.00 & 0.00 & 0.00 \\
        & & \texttt{SetUpCuttingStation}   & -- & 0.00 & 6.67 & 13.33 & 10.00 & 6.67 \\
        & & \texttt{StackBowlsCabinet}   & -- & 6.67 & 0.00 & 6.67 & 20.00 & 26.67 \\
        & & \texttt{SteamInMicrowave}   & -- & 0.00 & 0.00 & 3.33 & 0.00 & 3.33 \\
        & & \texttt{StirVegetables}   & -- & 0.00 & 0.00 & 0.00 & 6.67 & 6.67 \\
        & & \texttt{StoreLeftoversInBowl}   & -- & 3.33 & 0.00 & 0.00 & 0.00 & 0.00 \\
        & & \texttt{WashLettuce}   & -- & 10.00 & 3.33 & 10.00 & 33.33 & 23.33 \\
        \cmidrule(lr){3-9}
        & & \textit{Avg. (composite\_seen)} & 14.8 & 7.71 & 4.17 & \textbf{8.54} & 8.12 & \textbf{8.54} \\
        \midrule

        \multicolumn{9}{l}{\textit{\texttt{composite\_unseen}} (16 tasks)} \\
        \multirow{16}{*}{GR00T N1.5}
        & \multirow{16}{*}{4}
        & \texttt{ArrangeBreadBasket}   & -- & 0.00 & 0.00 & 0.00 & 0.00 & 0.00 \\
        & & \texttt{ArrangeTea}   & -- & 0.00 & 0.00 & 0.00 & 0.00 & 0.00 \\
        & & \texttt{BreadSelection}   & -- & 16.67 & 0.00 & 0.00 & 3.33 & 13.33 \\
        & & \texttt{CategorizeCondiments}   & -- & 0.00 & 0.00 & 0.00 & 3.33 & 0.00 \\
        & & \texttt{CuttingToolSelection}   & -- & 0.00 & 0.00 & 6.67 & 0.00 & 0.00 \\
        & & \texttt{GarnishPancake}   & -- & 0.00 & 0.00 & 3.33 & 0.00 & 3.33 \\
        & & \texttt{GatherTableware}   & -- & 0.00 & 0.00 & 0.00 & 0.00 & 0.00 \\
        & & \texttt{HeatKebabSandwich}   & -- & 0.00 & 0.00 & 0.00 & 0.00 & 0.00 \\
        & & \texttt{MakeIceLemonade}   & -- & 3.33 & 0.00 & 3.33 & 0.00 & 0.00 \\
        & & \texttt{PanTransfer}   & -- & 0.00 & 0.00 & 0.00 & 0.00 & 0.00 \\
        & & \texttt{PortionHotDogs}   & -- & 0.00 & 0.00 & 6.67 & 3.33 & 6.67 \\
        & & \texttt{RecycleBottlesByType}   & -- & 6.67 & 16.67 & 3.33 & 16.67 & 16.67 \\
        & & \texttt{SeparateFreezerRack}   & -- & 0.00 & 0.00 & 0.00 & 0.00 & 0.00 \\
        & & \texttt{WaffleReheat}   & -- & 6.67 & 3.33 & 6.67 & 6.67 & 3.33 \\
        & & \texttt{WashFruitColander}   & -- & 3.33 & 0.00 & 10.00 & 0.00 & 3.33 \\
        & & \texttt{WeighIngredients}   & -- & 3.33 & 6.67 & 3.33 & 3.33 & 0.00 \\
        \cmidrule(lr){3-9}
        & & \textit{Avg. (composite\_unseen)} & 2.7 & 2.50 & 1.67 & 2.71 & 2.29 & \textbf{2.92} \\

        \bottomrule
    \end{tabular}%
    }
\label{tab:robocasa365_n15}
\end{table}

\clearpage
\begin{table}[t]
    \centering
    \caption{Per-task success rates of GR00T N1.6 and its one-step students on RoboCasa365. Each task is evaluated over 30 episodes. The multi-step Teacher column is taken from the RoboCasa365 leaderboard (\url{https://robocasa.ai/leaderboard.html}).}
    \setlength{\tabcolsep}{6pt}
    \renewcommand{\arraystretch}{1.1}
    \resizebox{\textwidth}{!}{%
    \begin{tabular}{c c l c c c c c c}
        \toprule
        \multirow{2}{*}{\textbf{Model}} &
        \multirow{2}{*}{\textbf{\shortstack{Teacher\\steps}}} &
        \multirow{2}{*}{\textbf{Task}} &
        \multirow{2}{*}{\textbf{Teacher}} &
        \multicolumn{5}{c}{\textbf{1-step}} \\
        \cmidrule(lr){5-9}
        & & & &
        \textbf{Teacher} &
        \textbf{sCD} &
        \textbf{MFD} &
        \textbf{Drift} &
        \textbf{DriftOPD} \\
        \midrule

        \multicolumn{9}{l}{\textit{\texttt{atomic\_seen}} (18 tasks)} \\
        \multirow{18}{*}{GR00T N1.6}
        & \multirow{18}{*}{4}
        & \texttt{CloseBlenderLid}   & -- & 3.33 & 6.67 & 0.00 & 10.00 & 13.33 \\
        & & \texttt{CloseFridge}   & -- & 66.67 & 80.00 & 63.33 & 90.00 & 76.67 \\
        & & \texttt{CloseToasterOvenDoor}   & -- & 56.67 & 60.00 & 30.00 & 66.67 & 50.00 \\
        & & \texttt{CoffeeSetupMug}   & -- & 36.67 & 36.67 & 3.33 & 20.00 & 26.67 \\
        & & \texttt{NavigateKitchen}   & -- & 6.67 & 0.00 & 0.00 & 16.67 & 16.67 \\
        & & \texttt{OpenCabinet}   & -- & 30.00 & 33.33 & 33.33 & 66.67 & 63.33 \\
        & & \texttt{OpenDrawer}   & -- & 53.33 & 43.33 & 6.67 & 23.33 & 26.67 \\
        & & \texttt{OpenStandMixerHead}   & -- & 83.33 & 86.67 & 80.00 & 86.67 & 93.33 \\
        & & \texttt{PickPlaceCounterToCabinet}   & -- & 56.67 & 63.33 & 53.33 & 60.00 & 53.33 \\
        & & \texttt{PickPlaceCounterToStove}   & -- & 60.00 & 63.33 & 43.33 & 56.67 & 60.00 \\
        & & \texttt{PickPlaceDrawerToCounter}   & -- & 23.33 & 36.67 & 10.00 & 30.00 & 30.00 \\
        & & \texttt{PickPlaceSinkToCounter}   & -- & 80.00 & 83.33 & 80.00 & 76.67 & 73.33 \\
        & & \texttt{PickPlaceToasterToCounter}   & -- & 50.00 & 56.67 & 43.33 & 40.00 & 60.00 \\
        & & \texttt{SlideDishwasherRack}   & -- & 46.67 & 80.00 & 43.33 & 70.00 & 66.67 \\
        & & \texttt{TurnOffStove}   & -- & 3.33 & 6.67 & 0.00 & 3.33 & 13.33 \\
        & & \texttt{TurnOnElectricKettle}   & -- & 70.00 & 70.00 & 73.33 & 56.67 & 90.00 \\
        & & \texttt{TurnOnMicrowave}   & -- & 43.33 & 63.33 & 23.33 & 46.67 & 33.33 \\
        & & \texttt{TurnOnSinkFaucet}   & -- & 36.67 & 60.00 & 6.67 & 30.00 & 43.33 \\
        \cmidrule(lr){3-9}
        & & \textit{Avg. (atomic\_seen)} & 51.1 & 44.81 & \textbf{51.67} & 32.96 & 47.22 & 49.44 \\
        \midrule

        \multicolumn{9}{l}{\textit{\texttt{composite\_seen}} (16 tasks)} \\
        \multirow{16}{*}{GR00T N1.6}
        & \multirow{16}{*}{4}
        & \texttt{DeliverStraw}   & -- & 0.00 & 0.00 & 0.00 & 0.00 & 0.00 \\
        & & \texttt{GetToastedBread}   & -- & 0.00 & 0.00 & 0.00 & 0.00 & 0.00 \\
        & & \texttt{KettleBoiling}   & -- & 0.00 & 6.67 & 0.00 & 0.00 & 6.67 \\
        & & \texttt{LoadDishwasher}   & -- & 10.00 & 3.33 & 0.00 & 6.67 & 10.00 \\
        & & \texttt{PackIdenticalLunches}   & -- & 0.00 & 0.00 & 0.00 & 10.00 & 6.67 \\
        & & \texttt{PreSoakPan}   & -- & 3.33 & 6.67 & 3.33 & 3.33 & 3.33 \\
        & & \texttt{PrepareCoffee}   & -- & 0.00 & 0.00 & 0.00 & 0.00 & 0.00 \\
        & & \texttt{RinseSinkBasin}   & -- & 13.33 & 13.33 & 3.33 & 10.00 & 16.67 \\
        & & \texttt{ScrubCuttingBoard}   & -- & 6.67 & 10.00 & 6.67 & 20.00 & 20.00 \\
        & & \texttt{SearingMeat}   & -- & 0.00 & 0.00 & 0.00 & 0.00 & 0.00 \\
        & & \texttt{SetUpCuttingStation}   & -- & 0.00 & 0.00 & 0.00 & 3.33 & 0.00 \\
        & & \texttt{StackBowlsCabinet}   & -- & 10.00 & 6.67 & 0.00 & 0.00 & 6.67 \\
        & & \texttt{SteamInMicrowave}   & -- & 0.00 & 0.00 & 0.00 & 3.33 & 3.33 \\
        & & \texttt{StirVegetables}   & -- & 0.00 & 3.33 & 0.00 & 0.00 & 3.33 \\
        & & \texttt{StoreLeftoversInBowl}   & -- & 0.00 & 0.00 & 0.00 & 0.00 & 0.00 \\
        & & \texttt{WashLettuce}   & -- & 0.00 & 6.67 & 0.00 & 6.67 & 3.33 \\
        \cmidrule(lr){3-9}
        & & \textit{Avg. (composite\_seen)} & 9.4 & 2.71 & 3.54 & 0.83 & 3.96 & \textbf{5.00} \\
        \midrule

        \multicolumn{9}{l}{\textit{\texttt{composite\_unseen}} (16 tasks)} \\
        \multirow{16}{*}{GR00T N1.6}
        & \multirow{16}{*}{4}
        & \texttt{ArrangeBreadBasket}   & -- & 0.00 & 0.00 & 0.00 & 0.00 & 0.00 \\
        & & \texttt{ArrangeTea}   & -- & 0.00 & 0.00 & 0.00 & 0.00 & 0.00 \\
        & & \texttt{BreadSelection}   & -- & 3.33 & 3.33 & 0.00 & 0.00 & 3.33 \\
        & & \texttt{CategorizeCondiments}   & -- & 0.00 & 0.00 & 0.00 & 0.00 & 0.00 \\
        & & \texttt{CuttingToolSelection}   & -- & 0.00 & 0.00 & 0.00 & 0.00 & 0.00 \\
        & & \texttt{GarnishPancake}   & -- & 0.00 & 0.00 & 0.00 & 0.00 & 0.00 \\
        & & \texttt{GatherTableware}   & -- & 0.00 & 0.00 & 0.00 & 0.00 & 0.00 \\
        & & \texttt{HeatKebabSandwich}   & -- & 0.00 & 0.00 & 0.00 & 0.00 & 0.00 \\
        & & \texttt{MakeIceLemonade}   & -- & 0.00 & 0.00 & 0.00 & 3.33 & 0.00 \\
        & & \texttt{PanTransfer}   & -- & 0.00 & 0.00 & 0.00 & 0.00 & 0.00 \\
        & & \texttt{PortionHotDogs}   & -- & 0.00 & 0.00 & 0.00 & 0.00 & 0.00 \\
        & & \texttt{RecycleBottlesByType}   & -- & 6.67 & 6.67 & 16.67 & 6.67 & 13.33 \\
        & & \texttt{SeparateFreezerRack}   & -- & 0.00 & 0.00 & 0.00 & 0.00 & 0.00 \\
        & & \texttt{WaffleReheat}   & -- & 0.00 & 0.00 & 0.00 & 0.00 & 0.00 \\
        & & \texttt{WashFruitColander}   & -- & 3.33 & 0.00 & 0.00 & 0.00 & 0.00 \\
        & & \texttt{WeighIngredients}   & -- & 0.00 & 0.00 & 0.00 & 0.00 & 0.00 \\
        \cmidrule(lr){3-9}
        & & \textit{Avg. (composite\_unseen)} & 1.7 & 0.83 & 0.62 & \textbf{1.04} & 0.62 & \textbf{1.04} \\

        \bottomrule
    \end{tabular}%
    }
\label{tab:robocasa365_n16}
\end{table}

\clearpage
\begin{table}[t]
    \centering
    \caption{Per-task success rates of $\pi_{0.5}$ and its LoRA-based ($r=16$) student policies on RoboTwin 2.0 with the ALOHA 14-DoF embodiment. Tasks are ordered by average demonstration trajectory length and grouped into \texttt{short}, \texttt{medium} and \texttt{long} horizons. Each task is evaluated over 30 episodes from a shared accepted-seed manifest.}
    \setlength{\tabcolsep}{6pt}
    \renewcommand{\arraystretch}{1.1}
    \resizebox{\textwidth}{!}{%
    \begin{tabular}{c c l c c c c c c}
        \toprule
        \multirow{2}{*}{\textbf{Model}} &
        \multirow{2}{*}{\textbf{Teacher steps}} &
        \multirow{2}{*}{\textbf{Task}} &
        \multirow{2}{*}{\textbf{Teacher}} &
        \multicolumn{5}{c}{\textbf{1-step}} \\
        \cmidrule(lr){5-9}
        & & & &
        \textbf{Teacher} &
        \textbf{sCD} &
        \textbf{MFD} &
        \textbf{Drift} &
        \textbf{DriftOPD} \\
        \midrule

        \multicolumn{9}{l}{\textit{\texttt{short}} (18 tasks, 77--147 demo steps)} \\
        \multirow{19}{*}{$\pi_{0.5}$-LoRA}
        & \multirow{19}{*}{10}
        & \texttt{click\_bell}                 & 30.00 & 80.00 & 90.00 & 100.00 & 86.67 & 86.67 \\
        & & \texttt{click\_alarmclock}           & 50.00 & 80.00 & 73.33 & 100.00 & 83.33 & 86.67 \\
        & & \texttt{grab\_roller}                & 100.00 & 100.00 & 100.00 & 100.00 & 100.00 & 100.00 \\
        & & \texttt{turn\_switch}                & 40.00 & 50.00 & 56.67 & 66.67 & 43.33 & 50.00 \\
        & & \texttt{lift\_pot}                   & 33.33 & 33.33 & 50.00 & 43.33 & 40.00 & 60.00 \\
        & & \texttt{beat\_block\_hammer}         & 53.33 & 36.67 & 16.67 & 26.67 & 63.33 & 40.00 \\
        & & \texttt{move\_playingcard\_away}     & 90.00 & 90.00 & 93.33 & 70.00 & 83.33 & 93.33 \\
        & & \texttt{press\_stapler}              & 90.00 & 86.67 & 96.67 & 100.00 & 93.33 & 100.00 \\
        & & \texttt{pick\_diverse\_bottles}      & 46.67 & 36.67 & 53.33 & 30.00 & 53.33 & 70.00 \\
        & & \texttt{pick\_dual\_bottles}         & 56.67 & 60.00 & 50.00 & 50.00 & 66.67 & 80.00 \\
        & & \texttt{place\_phone\_stand}         & 33.33 & 33.33 & 33.33 & 10.00 & 30.00 & 36.67 \\
        & & \texttt{place\_object\_stand}        & 90.00 & 83.33 & 90.00 & 66.67 & 96.67 & 96.67 \\
        & & \texttt{adjust\_bottle}              & 93.33 & 96.67 & 93.33 & 63.33 & 90.00 & 86.67 \\
        & & \texttt{place\_object\_scale}        & 56.67 & 60.00 & 60.00 & 33.33 & 70.00 & 56.67 \\
        & & \texttt{move\_pillbottle\_pad}       & 70.00 & 60.00 & 70.00 & 23.33 & 80.00 & 53.33 \\
        & & \texttt{place\_a2b\_right}           & 63.33 & 66.67 & 56.67 & 66.67 & 70.00 & 60.00 \\
        & & \texttt{place\_fan}                  & 66.67 & 33.33 & 40.00 & 10.00 & 30.00 & 40.00 \\
        & & \texttt{place\_mouse\_pad}           & 30.00 & 16.67 & 43.33 & 20.00 & 13.33 & 23.33 \\
        \cmidrule(lr){3-9}
        & & \textit{Avg. (short)} & 60.74 & 61.30 & 64.81 & 54.44 & 66.30 & 67.78 \\
        \midrule

        \multicolumn{9}{l}{\textit{\texttt{medium}} (21 tasks, 149--278 demo steps)} \\
        \multirow{22}{*}{$\pi_{0.5}$-LoRA}
        & \multirow{22}{*}{10}
        & \texttt{stamp\_seal}                 & 26.67 & 23.33 & 23.33 & 23.33 & 20.00 & 30.00 \\
        & & \texttt{place\_a2b\_left}            & 66.67 & 50.00 & 63.33 & 60.00 & 66.67 & 63.33 \\
        & & \texttt{move\_can\_pot}              & 93.33 & 63.33 & 70.00 & 13.33 & 70.00 & 70.00 \\
        & & \texttt{rotate\_qrcode}              & 76.67 & 63.33 & 66.67 & 43.33 & 73.33 & 70.00 \\
        & & \texttt{move\_stapler\_pad}          & 20.00 & 30.00 & 23.33 & 23.33 & 40.00 & 33.33 \\
        & & \texttt{place\_container\_plate}     & 96.67 & 86.67 & 93.33 & 80.00 & 96.67 & 90.00 \\
        & & \texttt{place\_bread\_skillet}       & 76.67 & 70.00 & 63.33 & 70.00 & 66.67 & 73.33 \\
        & & \texttt{scan\_object}                & 23.33 & 13.33 & 20.00 & 10.00 & 33.33 & 43.33 \\
        & & \texttt{place\_empty\_cup}           & 96.67 & 93.33 & 90.00 & 83.33 & 100.00 & 96.67 \\
        & & \texttt{place\_shoe}                 & 43.33 & 23.33 & 36.67 & 33.33 & 33.33 & 40.00 \\
        & & \texttt{open\_laptop}                & 93.33 & 90.00 & 90.00 & 86.67 & 93.33 & 83.33 \\
        & & \texttt{handover\_mic}               & 46.67 & 30.00 & 36.67 & 16.67 & 46.67 & 53.33 \\
        & & \texttt{place\_dual\_shoes}          & 36.67 & 26.67 & 13.33 & 10.00 & 3.33 & 13.33 \\
        & & \texttt{place\_bread\_basket}        & 63.33 & 56.67 & 80.00 & 56.67 & 66.67 & 70.00 \\
        & & \texttt{place\_burger\_fries}        & 90.00 & 90.00 & 93.33 & 90.00 & 93.33 & 93.33 \\
        & & \texttt{dump\_bin\_bigbin}           & 86.67 & 86.67 & 70.00 & 60.00 & 70.00 & 83.33 \\
        & & \texttt{place\_object\_basket}       & 83.33 & 50.00 & 73.33 & 40.00 & 63.33 & 66.67 \\
        & & \texttt{shake\_bottle}               & 100.00 & 100.00 & 100.00 & 86.67 & 100.00 & 96.67 \\
        & & \texttt{place\_can\_basket}          & 66.67 & 26.67 & 43.33 & 26.67 & 50.00 & 60.00 \\
        & & \texttt{put\_object\_cabinet}        & 26.67 & 10.00 & 36.67 & 6.67 & 13.33 & 16.67 \\
        & & \texttt{shake\_bottle\_horizontally} & 100.00 & 100.00 & 96.67 & 93.33 & 100.00 & 100.00 \\
        \cmidrule(lr){3-9}
        & & \textit{Avg. (medium)} & 67.30 & 56.35 & 61.11 & 48.25 & 61.90 & 64.13 \\
        \midrule

        \multicolumn{9}{l}{\textit{\texttt{long}} (11 tasks, 282--625 demo steps)} \\
        \multirow{12}{*}{$\pi_{0.5}$-LoRA}
        & \multirow{12}{*}{10}
        & \texttt{handover\_block}             & 30.00 & 0.00 & 6.67 & 0.00 & 3.33 & 6.67 \\
        & & \texttt{place\_cans\_plasticbox}     & 83.33 & 83.33 & 53.33 & 13.33 & 86.67 & 90.00 \\
        & & \texttt{stack\_bowls\_two}           & 86.67 & 76.67 & 73.33 & 40.00 & 73.33 & 86.67 \\
        & & \texttt{stack\_blocks\_two}          & 93.33 & 53.33 & 76.67 & 10.00 & 76.67 & 63.33 \\
        & & \texttt{hanging\_mug}                & 10.00 & 3.33 & 3.33 & 0.00 & 6.67 & 6.67 \\
        & & \texttt{blocks\_ranking\_rgb}        & 90.00 & 70.00 & 66.67 & 30.00 & 90.00 & 73.33 \\
        & & \texttt{blocks\_ranking\_size}       & 43.33 & 33.33 & 40.00 & 23.33 & 43.33 & 43.33 \\
        & & \texttt{stack\_bowls\_three}         & 56.67 & 33.33 & 26.67 & 3.33 & 53.33 & 46.67 \\
        & & \texttt{stack\_blocks\_three}        & 20.00 & 10.00 & 3.33 & 0.00 & 6.67 & 16.67 \\
        & & \texttt{open\_microwave}             & 83.33 & 76.67 & 56.67 & 56.67 & 76.67 & 73.33 \\
        & & \texttt{put\_bottles\_dustbin}       & 10.00 & 6.67 & 0.00 & 3.33 & 16.67 & 6.67 \\
        \cmidrule(lr){3-9}
        & & \textit{Avg. (long)} & 55.15 & 40.61 & 36.97 & 16.36 & 48.48 & 46.67 \\
        \midrule
        \multicolumn{3}{c}{\textbf{Avg. (50 tasks)}} &
        \textbf{62.27} &
        \textbf{54.67} &
        \textbf{57.13} &
        \textbf{43.47} &
        \textbf{60.53} &
        \textbf{61.60} \\
        \bottomrule
    \end{tabular}%
    }
\label{tab:robotwin2_pi05_full}
\end{table}

\clearpage
\begin{table}[t]
    \centering
    \caption{Per-task success rates of ABot-M0 and its student policies on RoboTwin 2.0 with the ALOHA 14-DoF embodiment. Tasks are ordered by average demonstration trajectory length and grouped into \texttt{short}, \texttt{medium} and \texttt{long} horizons. Each task is evaluated over 30 episodes from a shared accepted-seed manifest.}
    \setlength{\tabcolsep}{6pt}
    \renewcommand{\arraystretch}{1.1}
    \resizebox{\textwidth}{!}{%
    \begin{tabular}{c c l c c c c c c}
        \toprule
        \multirow{2}{*}{\textbf{Model}} &
        \multirow{2}{*}{\textbf{Teacher steps}} &
        \multirow{2}{*}{\textbf{Task}} &
        \multirow{2}{*}{\textbf{Teacher}} &
        \multicolumn{5}{c}{\textbf{1-step}} \\
        \cmidrule(lr){5-9}
        & & & &
        \textbf{Teacher} &
        \textbf{sCD} &
        \textbf{MFD} &
        \textbf{Drift} &
        \textbf{DriftOPD} \\
        \midrule

        \multicolumn{9}{l}{\textit{\texttt{short}} (18 tasks, 77--147 demo steps)} \\
        \multirow{19}{*}{ABot-M0}
        & \multirow{19}{*}{10}
        & \texttt{click\_bell}                 & 96.67 & 100.00 & 90.00 & 0.00 & 80.00 & 93.33 \\
        & & \texttt{click\_alarmclock}           & 100.00 & 100.00 & 96.67 & 3.33 & 93.33 & 90.00 \\
        & & \texttt{grab\_roller}                & 100.00 & 93.33 & 83.33 & 0.00 & 93.33 & 90.00 \\
        & & \texttt{turn\_switch}                & 80.00 & 63.33 & 70.00 & 23.33 & 73.33 & 53.33 \\
        & & \texttt{lift\_pot}                   & 100.00 & 93.33 & 93.33 & 0.00 & 90.00 & 6.67 \\
        & & \texttt{beat\_block\_hammer}         & 86.67 & 83.33 & 80.00 & 0.00 & 83.33 & 83.33 \\
        & & \texttt{move\_playingcard\_away}     & 40.00 & 53.33 & 50.00 & 0.00 & 46.67 & 50.00 \\
        & & \texttt{press\_stapler}              & 10.00 & 16.67 & 13.33 & 16.67 & 36.67 & 30.00 \\
        & & \texttt{pick\_diverse\_bottles}      & 33.33 & 53.33 & 66.67 & 0.00 & 43.33 & 53.33 \\
        & & \texttt{pick\_dual\_bottles}         & 46.67 & 63.33 & 73.33 & 0.00 & 56.67 & 56.67 \\
        & & \texttt{place\_phone\_stand}         & 43.33 & 40.00 & 56.67 & 0.00 & 56.67 & 36.67 \\
        & & \texttt{place\_object\_stand}        & 76.67 & 76.67 & 60.00 & 0.00 & 63.33 & 73.33 \\
        & & \texttt{adjust\_bottle}              & 83.33 & 76.67 & 73.33 & 0.00 & 76.67 & 80.00 \\
        & & \texttt{place\_object\_scale}        & 86.67 & 86.67 & 86.67 & 0.00 & 86.67 & 86.67 \\
        & & \texttt{move\_pillbottle\_pad}       & 90.00 & 70.00 & 70.00 & 0.00 & 86.67 & 90.00 \\
        & & \texttt{place\_a2b\_right}           & 76.67 & 83.33 & 80.00 & 0.00 & 73.33 & 80.00 \\
        & & \texttt{place\_fan}                  & 90.00 & 86.67 & 60.00 & 0.00 & 70.00 & 70.00 \\
        & & \texttt{place\_mouse\_pad}           & 23.33 & 26.67 & 10.00 & 0.00 & 33.33 & 40.00 \\
        \cmidrule(lr){3-9}
        & & \textit{Avg. (short)} & 70.19 & 70.37 & 67.41 & 2.41 & 69.07 & 64.63 \\
        \midrule

        \multicolumn{9}{l}{\textit{\texttt{medium}} (21 tasks, 149--278 demo steps)} \\
        \multirow{22}{*}{ABot-M0}
        & \multirow{22}{*}{10}
        & \texttt{stamp\_seal}                 & 50.00 & 50.00 & 40.00 & 0.00 & 50.00 & 60.00 \\
        & & \texttt{place\_a2b\_left}            & 90.00 & 90.00 & 83.33 & 0.00 & 83.33 & 90.00 \\
        & & \texttt{move\_can\_pot}              & 73.33 & 60.00 & 50.00 & 0.00 & 73.33 & 76.67 \\
        & & \texttt{rotate\_qrcode}              & 90.00 & 73.33 & 70.00 & 0.00 & 80.00 & 70.00 \\
        & & \texttt{move\_stapler\_pad}          & 26.67 & 26.67 & 20.00 & 0.00 & 23.33 & 33.33 \\
        & & \texttt{place\_container\_plate}     & 93.33 & 96.67 & 90.00 & 0.00 & 80.00 & 80.00 \\
        & & \texttt{place\_bread\_skillet}       & 80.00 & 60.00 & 63.33 & 0.00 & 53.33 & 63.33 \\
        & & \texttt{scan\_object}                & 50.00 & 36.67 & 50.00 & 0.00 & 46.67 & 53.33 \\
        & & \texttt{place\_empty\_cup}           & 96.67 & 96.67 & 96.67 & 0.00 & 80.00 & 90.00 \\
        & & \texttt{place\_shoe}                 & 46.67 & 36.67 & 50.00 & 0.00 & 40.00 & 43.33 \\
        & & \texttt{open\_laptop}                & 93.33 & 83.33 & 90.00 & 6.67 & 80.00 & 90.00 \\
        & & \texttt{handover\_mic}               & 66.67 & 53.33 & 40.00 & 0.00 & 60.00 & 60.00 \\
        & & \texttt{place\_dual\_shoes}          & 33.33 & 30.00 & 26.67 & 0.00 & 23.33 & 23.33 \\
        & & \texttt{place\_bread\_basket}        & 90.00 & 76.67 & 86.67 & 0.00 & 76.67 & 76.67 \\
        & & \texttt{place\_burger\_fries}        & 90.00 & 86.67 & 76.67 & 0.00 & 76.67 & 76.67 \\
        & & \texttt{dump\_bin\_bigbin}           & 93.33 & 80.00 & 90.00 & 0.00 & 83.33 & 93.33 \\
        & & \texttt{place\_object\_basket}       & 56.67 & 66.67 & 83.33 & 0.00 & 76.67 & 70.00 \\
        & & \texttt{shake\_bottle}               & 100.00 & 100.00 & 100.00 & 46.67 & 100.00 & 100.00 \\
        & & \texttt{place\_can\_basket}          & 73.33 & 70.00 & 66.67 & 0.00 & 63.33 & 53.33 \\
        & & \texttt{put\_object\_cabinet}        & 63.33 & 50.00 & 30.00 & 0.00 & 30.00 & 26.67 \\
        & & \texttt{shake\_bottle\_horizontally} & 96.67 & 100.00 & 100.00 & 33.33 & 100.00 & 100.00 \\
        \cmidrule(lr){3-9}
        & & \textit{Avg. (medium)} & 73.97 & 67.78 & 66.83 & 4.13 & 65.71 & 68.10 \\
        \midrule

        \multicolumn{9}{l}{\textit{\texttt{long}} (11 tasks, 282--625 demo steps)} \\
        \multirow{12}{*}{ABot-M0}
        & \multirow{12}{*}{10}
        & \texttt{handover\_block}             & 0.00 & 3.33 & 0.00 & 0.00 & 6.67 & 3.33 \\
        & & \texttt{place\_cans\_plasticbox}     & 96.67 & 70.00 & 66.67 & 0.00 & 93.33 & 83.33 \\
        & & \texttt{stack\_bowls\_two}           & 100.00 & 96.67 & 86.67 & 0.00 & 83.33 & 86.67 \\
        & & \texttt{stack\_blocks\_two}          & 56.67 & 66.67 & 56.67 & 0.00 & 56.67 & 66.67 \\
        & & \texttt{hanging\_mug}                & 10.00 & 0.00 & 0.00 & 0.00 & 20.00 & 16.67 \\
        & & \texttt{blocks\_ranking\_rgb}        & 26.67 & 46.67 & 33.33 & 0.00 & 30.00 & 33.33 \\
        & & \texttt{blocks\_ranking\_size}       & 10.00 & 6.67 & 6.67 & 0.00 & 0.00 & 3.33 \\
        & & \texttt{stack\_bowls\_three}         & 83.33 & 76.67 & 66.67 & 0.00 & 60.00 & 66.67 \\
        & & \texttt{stack\_blocks\_three}        & 6.67 & 0.00 & 3.33 & 0.00 & 3.33 & 16.67 \\
        & & \texttt{open\_microwave}             & 10.00 & 33.33 & 16.67 & 10.00 & 36.67 & 46.67 \\
        & & \texttt{put\_bottles\_dustbin}       & 6.67 & 6.67 & 0.00 & 0.00 & 3.33 & 3.33 \\
        \cmidrule(lr){3-9}
        & & \textit{Avg. (long)} & 36.97 & 36.97 & 30.61 & 0.91 & 35.76 & 38.79 \\
        \midrule
        \multicolumn{3}{c}{\textbf{Avg. (50 tasks)}} &
        \textbf{64.47} &
        \textbf{61.93} &
        \textbf{59.07} &
        \textbf{2.80} &
        \textbf{60.33} &
        \textbf{60.40} \\
        \bottomrule
    \end{tabular}%
    }
\label{tab:robotwin2_abot_full}
\end{table}

\clearpage
\begin{table}[t]
\centering
\caption{Average task success rates on LIBERO, reported separately for each evaluation suite. Each task is evaluated over 50 episodes under paired environment settings.}
\scriptsize
\setlength{\tabcolsep}{4pt}
\renewcommand{\arraystretch}{1.2}
\resizebox{\textwidth}{!}{%
    \begin{tabular}{c c l c c c c c c}
        \toprule
        \multirow{2}{*}{\textbf{Model}} &
        \multirow{2}{*}{\textbf{Teacher steps}} &
        \multirow{2}{*}{\textbf{Suite}} &
        \multirow{2}{*}{\textbf{Teacher}} &
        \multicolumn{5}{c}{\textbf{1-step}} \\
        \cmidrule(lr){5-9}
        & & & &
        \textbf{Teacher} &
        \textbf{sCD} &
        \textbf{MFD} &
        \textbf{Drift} &
        \textbf{DriftOPD} \\
        \midrule
        \multirow{5}{*}{$\pi_{0.5}$-LoRA} & \multirow{5}{*}{10} & \texttt{libero\_spatial} (10) & 98.8\% & 99.2\% & 98.6\% & 99.2\% & 98.2\% & 98.4\% \\
        & & \texttt{libero\_object} (10) & 98.2\% & 98.4\% & 97.2\% & 97.2\% & 98.4\% & 97.4\% \\
        & & \texttt{libero\_goal} (10) & 98.0\% & 98.2\% & 97.4\% & 98.2\% & 98.0\% & 97.0\% \\
        & & \texttt{libero\_10} (10) & 92.4\% & 93.0\% & 89.2\% & 87.8\% & 91.8\% & 91.8\% \\
        & & \textbf{Average (40)} & \textbf{96.85\%} & \textbf{97.20\%} & \textbf{95.60\%} & \textbf{95.60\%} & \textbf{96.60\%} & \textbf{96.15\%} \\
        \midrule
        \multirow{5}{*}{SmolVLA} & \multirow{5}{*}{10} & \texttt{libero\_spatial} (10) & 66.6\% & 76.6\% & 69.6\% & 36.8\% & 64.0\% & 68.0\% \\
        & & \texttt{libero\_object} (10) & 92.0\% & 95.4\% & 89.4\% & 78.8\% & 90.2\% & 89.6\% \\
        & & \texttt{libero\_goal} (10) & 78.0\% & 78.8\% & 73.6\% & 76.6\% & 76.6\% & 76.6\% \\
        & & \texttt{libero\_10} (10) & 43.0\% & 58.0\% & 43.4\% & 22.0\% & 44.6\% & 41.8\% \\
        & & \textbf{Average (40)} & \textbf{69.90\%} & \textbf{77.20\%} & \textbf{69.00\%} & \textbf{53.55\%} & \textbf{68.85\%} & \textbf{69.00\%} \\
        \midrule
        \multirow{5}{*}{ABot-M0} & \multirow{5}{*}{10} & \texttt{libero\_spatial} (10) & 98.8\% & 98.8\% & 99.2\% & 0.0\% & 99.2\% & 99.6\% \\
        & & \texttt{libero\_object} (10) & 99.8\% & 99.6\% & 99.8\% & 0.0\% & 99.4\% & 99.8\% \\
        & & \texttt{libero\_goal} (10) & 99.0\% & 98.4\% & 98.6\% & 0.0\% & 98.4\% & 98.4\% \\
        & & \texttt{libero\_10} (10) & 96.6\% & 95.2\% & 94.6\% & 0.0\% & 94.0\% & 94.6\% \\
        & & \textbf{Average (40)} & \textbf{98.55\%} & \textbf{98.00\%} & \textbf{98.05\%} & \textbf{0.00\%} & \textbf{97.75\%} & \textbf{98.10\%} \\
        \bottomrule
    \end{tabular}%
}
\label{tab:libero}
\end{table}

\begin{table}[t]
\caption{Average task success rates on LIBERO-Plus, reported separately for each evaluation suite. Each of the 10,030 perturbed task instances is evaluated over a single episode under paired environment settings.}
\centering
\scriptsize
\setlength{\tabcolsep}{4pt}
\renewcommand{\arraystretch}{1.2}
\resizebox{\textwidth}{!}{%
    \begin{tabular}{c c l c c c c c c}
        \toprule
        \multirow{2}{*}{\textbf{Model}} &
        \multirow{2}{*}{\textbf{Teacher steps}} &
        \multirow{2}{*}{\textbf{Suite}} &
        \multirow{2}{*}{\textbf{Teacher}} &
        \multicolumn{5}{c}{\textbf{1-step}} \\
        \cmidrule(lr){5-9}
        & & & &
        \textbf{Teacher} &
        \textbf{sCD} &
        \textbf{MFD} &
        \textbf{Drift} &
        \textbf{DriftOPD} \\
        \midrule
        \multirow{8}{*}{$\pi_{0.5}$-LoRA} & \multirow{8}{*}{10} & \texttt{Sensor Noise} (1,601) & 86.70\% & 84.51\% & 85.38\% & 86.20\% & 85.51\% & 86.38\% \\
        & & \texttt{Camera Viewpoints} (1,599) & 71.36\% & 71.79\% & 70.61\% & 72.61\% & 70.86\% & 71.23\% \\
        & & \texttt{Robot Initial States} (1,550) & 75.16\% & 71.74\% & 73.16\% & 72.97\% & 74.39\% & 74.58\% \\
        & & \texttt{Language Instructions} (1,537) & 84.65\% & 86.21\% & 84.65\% & 85.75\% & 85.88\% & 85.49\% \\
        & & \texttt{Objects Layout} (1,525) & 85.90\% & 85.77\% & 86.69\% & 86.23\% & 86.75\% & 86.56\% \\
        & & \texttt{Light Conditions} (1,142) & 97.81\% & 97.11\% & 96.50\% & 96.58\% & 97.81\% & 96.76\% \\
        & & \texttt{Background Textures} (1,076) & 95.45\% & 96.56\% & 95.26\% & 94.89\% & 96.47\% & 96.93\% \\
        & & \textbf{Average (10,030)} & \textbf{84.24\%} & \textbf{83.69\%} & \textbf{83.55\%} & \textbf{84.04\%} & \textbf{84.28\%} & \textbf{84.35\%} \\
        \midrule
        \multirow{8}{*}{SmolVLA} & \multirow{8}{*}{10} & \texttt{Sensor Noise} (1,601) & 0.00\% & 0.06\% & 0.06\% & 0.12\% & 0.06\% & 0.00\% \\
        & & \texttt{Camera Viewpoints} (1,599) & 0.00\% & 0.00\% & 0.06\% & 0.00\% & 0.00\% & 0.13\% \\
        & & \texttt{Robot Initial States} (1,550) & 0.32\% & 0.26\% & 0.52\% & 0.19\% & 0.52\% & 0.52\% \\
        & & \texttt{Language Instructions} (1,537) & 5.01\% & 6.18\% & 5.99\% & 3.84\% & 4.88\% & 6.57\% \\
        & & \texttt{Objects Layout} (1,525) & 10.62\% & 10.69\% & 9.05\% & 13.51\% & 11.67\% & 11.28\% \\
        & & \texttt{Light Conditions} (1,142) & 14.62\% & 21.28\% & 17.86\% & 12.00\% & 17.51\% & 19.18\% \\
        & & \texttt{Background Textures} (1,076) & 9.29\% & 9.20\% & 9.01\% & 8.27\% & 10.13\% & 10.41\% \\
        & & \textbf{Average (10,030)} & \textbf{5.09\%} & \textbf{6.03\%} & \textbf{5.39\%} & \textbf{4.95\%} & \textbf{5.69\%} & \textbf{6.12\%} \\
        \midrule
        \multirow{8}{*}{ABot-M0} & \multirow{8}{*}{10} & \texttt{Sensor Noise} (1,601) & 86.51\% & 87.26\% & 87.26\% & 0.00\% & 87.07\% & 87.57\% \\
        & & \texttt{Camera Viewpoints} (1,599) & 59.35\% & 58.47\% & 57.41\% & 0.00\% & 57.54\% & 58.29\% \\
        & & \texttt{Robot Initial States} (1,550) & 68.65\% & 67.48\% & 67.29\% & 0.00\% & 67.29\% & 67.74\% \\
        & & \texttt{Language Instructions} (1,537) & 87.18\% & 86.73\% & 86.21\% & 0.00\% & 88.03\% & 87.77\% \\
        & & \texttt{Objects Layout} (1,525) & 83.02\% & 83.21\% & 82.95\% & 0.00\% & 82.95\% & 82.69\% \\
        & & \texttt{Light Conditions} (1,142) & 96.85\% & 95.53\% & 96.32\% & 0.00\% & 95.36\% & 95.18\% \\
        & & \texttt{Background Textures} (1,076) & 93.22\% & 92.01\% & 93.31\% & 0.00\% & 92.75\% & 92.10\% \\
        & & \textbf{Average (10,030)} & \textbf{80.89\%} & \textbf{80.37\%} & \textbf{80.28\%} & \textbf{0.00\%} & \textbf{80.38\%} & \textbf{80.48\%} \\
        \bottomrule
    \end{tabular}%
}
\label{tab:libero-plus}
\end{table}

\clearpage
\begin{figure}[t]
  \centering
   \includegraphics[width=\linewidth]{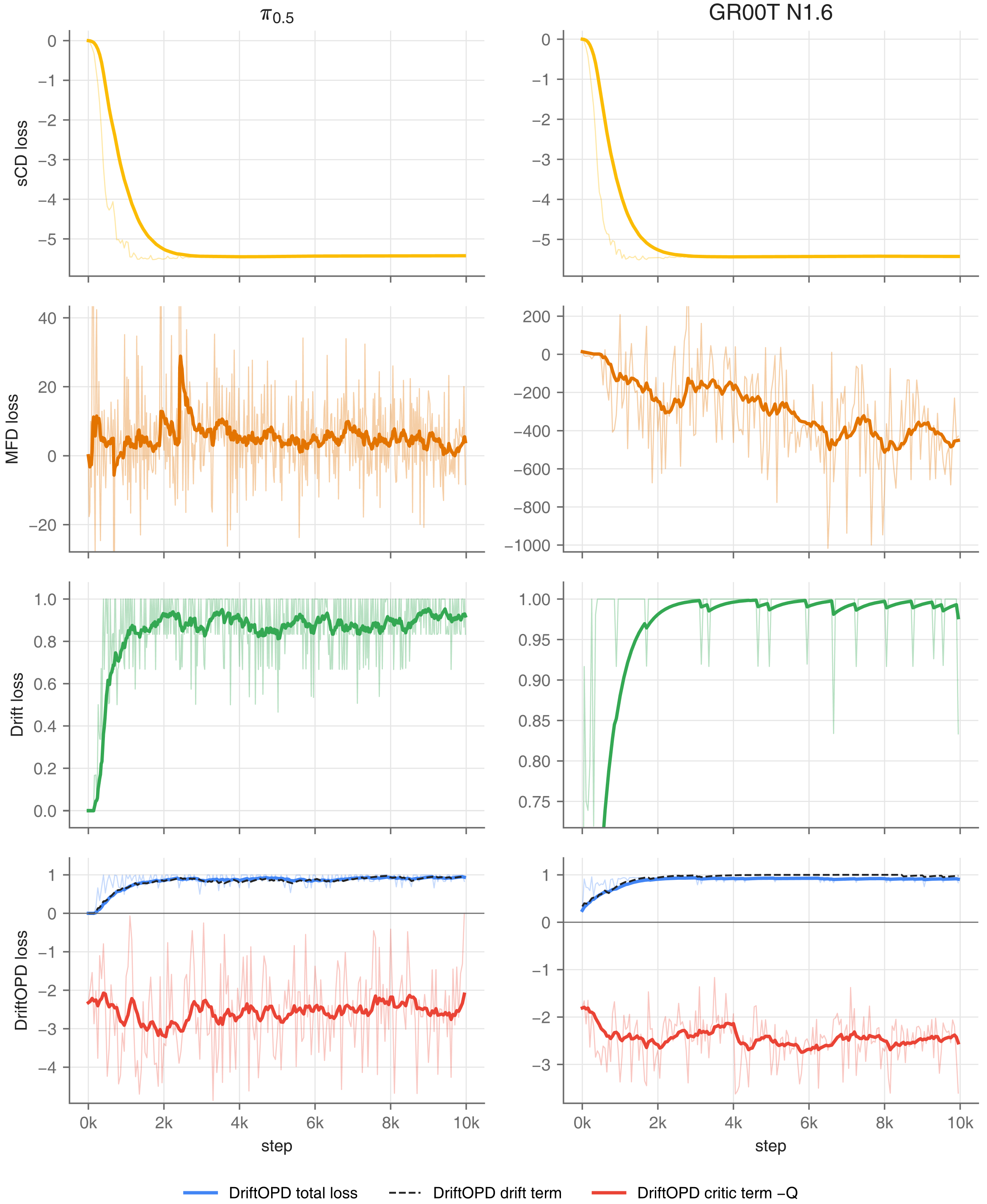}
   \caption{Training curves of one-step distillation on RoboCasa365. Faint lines are logged values and bold lines are EMAs. For DriftOPD, the total loss is shown with its drift term (dashed) and critic term $-Q$ (red).}
   \label{fig:train_loss}
\end{figure}

\clearpage
\begin{figure}[t]
  \centering
   \includegraphics[width=\linewidth]{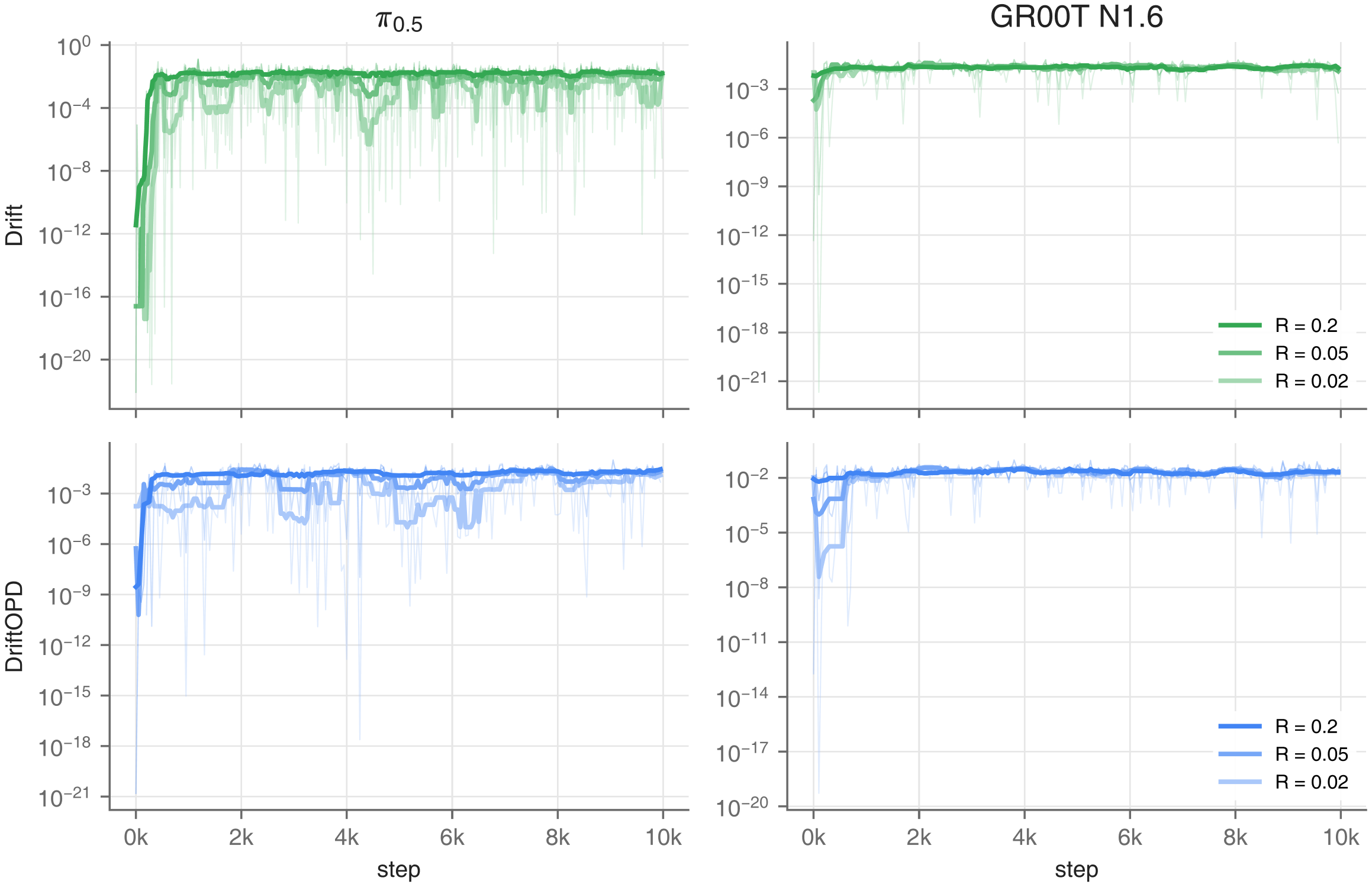}
   \caption{Bandwidth-wise RMS of the drifting loss term before normalization, for the drifting distillation (Drift) and DriftOPD on RoboCasa365.}
   \label{fig:train_drift_loss}
\end{figure}

\section{Additional Experiments}
\label{appendix:analysis}

\subsection{Oscillation Analysis for Real-World Robot}
\label{appendix:oscilation}

\begin{table}[H]
\centering
\caption{Real-world comparison of task success rate and action oscillation across the multi-step teacher and one-step policies. Values in parentheses indicate partial success rates (e.g., picking the object but failing to place or handover it), and oscillation is computed according to \Eqref{eq:oscillation}.}
\resizebox{\textwidth}{!}{%
  \begin{tabular}{l cccccc cccccc}
      \toprule
      \multirow{3}{*}{\textbf{Task}} & \multicolumn{6}{c}{\textbf{Success Rate (\%)}} & \multicolumn{6}{c}{\textbf{Oscillation ($^\circ$)}} \\
      \cmidrule(lr){2-7} \cmidrule(lr){8-13}
      & \multicolumn{1}{c}{\textbf{4-step}} & \multicolumn{5}{c}{\textbf{1-step}} & \multicolumn{1}{c}{\textbf{4-step}} & \multicolumn{5}{c}{\textbf{1-step}} \\
      \cmidrule(lr){3-7} \cmidrule(lr){9-13}
      & \textbf{Teacher} & \textbf{Teacher} & \textbf{sCD} & \textbf{MFD} & \textbf{Drift} & \textbf{DriftOPD} & \textbf{Teacher} & \textbf{Teacher} & \textbf{sCD} & \textbf{MFD} & \textbf{Drift} & \textbf{DriftOPD} \\
      \midrule
      \multicolumn{13}{l}{\textit{unimanual (right arm)}} \\
      Easy        & 93.3 & 40.0 & 0.0 (60.0) & 0.0 (0.0) & 86.7 & \textbf{93.3} & 1.70 & 1.77 & 4.64 & 7.55 & \textbf{1.97} & 2.01 \\
      Moderate    & 73.3 & 46.7 & 6.7 (20.0) & 0.0 (0.0) & \textbf{60.0} & \textbf{60.0} & 1.61 & 1.90 & 4.81 & 7.77 & \textbf{1.83} & 1.89 \\
      Hard        & 66.7 & 13.3 & 0.0 (0.0)& 0.0 (0.0) & 60.0 & \textbf{66.7} & 1.65 & 1.79 & 4.87 & 7.69 & \textbf{1.70} & 1.89 \\
      Challenging & 46.7 & 0.0  & 0.0 (0.0)& 0.0 (0.0) & \textbf{13.3} & \textbf{13.3} & 1.64 & 2.03 & 5.01 & 7.66 & 1.71 & \textbf{1.68} \\
      \cmidrule(lr){1-13}
      \textit{Avg} & 70.0 & 25.0 & 1.7 & 0.0 & 55.0 & \textbf{58.3} & 1.65 & 1.87 & 4.83 & 7.67 & \textbf{1.80} & 1.87 \\
      \midrule
      \multicolumn{13}{l}{\textit{bimanual}} \\
      Handover (LR) & 73.3 & 60.0 & 0.0 (0.0) & 0.0 (0.0) & 53.3 & \textbf{80.0} & 1.69 & 2.08 & 3.89 & 5.44 & \textbf{2.05} & 2.19 \\
      Handover (RL) & 86.7 & 53.3 & 0.0 (26.7) & 0.0 (20.0) & 40.0 & \textbf{73.3} & 1.69 & 1.93 & 3.75 & 5.16 & 2.11 & \textbf{2.02} \\
      \cmidrule(lr){1-13}
      \textit{Avg} & 80.0 & 56.7 & 0.0 & 0.0 & 46.7 & \textbf{76.7} & 1.69 & 2.00 & 3.82 & 5.30 & \textbf{2.08} & 2.10 \\
      \bottomrule
  \end{tabular}%
}
\label{tab:oscilation-real}
\end{table}

\paragraph{Oscillations for real world experiments.}
Table~\ref{tab:oscilation-real} is the full version of Table~\ref{tab:realworld-gr00t}, covering every task and every method we ran on the real robot with GR00T N1.6 as the teacher, in a right-arm unimanual variant (7-D actions) and a bimanual variant (14-D actions). Oscillation is reported in degrees, measured on the joint-position commands the policy streams to the follower arms at 30\,Hz and averaged over the arm joints of each embodiment. sCD and MFD oscillate severely, by a wide margin over the 4-step teacher, and the same pattern holds on every task and on both embodiments. Drift and DriftOPD instead stay at the level of the teacher, so collapsing the sampler to a single step costs them no smoothness while sCD and MFD pay for it with visible jitter.

\clearpage
\subsection{Stability Across LoRA Ranks}
\label{appendix:lora-rank}

\begin{figure}[H]
  \centering
   \includegraphics[width=0.6\linewidth]{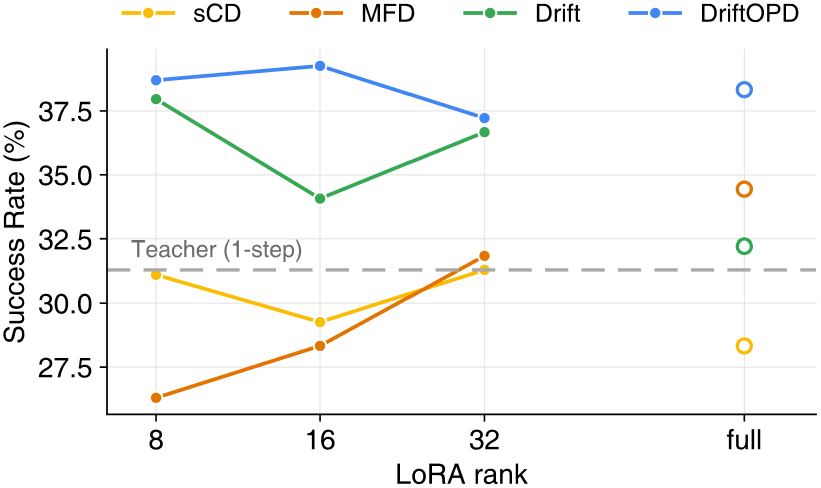}
   \caption{LoRA rank sweep for $\pi_{0.5}$ on RoboCasa365 (\texttt{atomic\_seen}). Hollow markers: full fine-tuning; dashed line: 1-step teacher (31.30\%).}
   \label{fig:pi05-lora-r}
\end{figure}

\textbf{LoRA ranks.}
The LoRA rank sets how much capacity the student has to move away from the teacher it is initialized from, and one-step distillation is exactly the regime where that budget matters: too little capacity and no objective can close the multi-step gap, too much and the student is free to drift away from the teacher in ways the objective must control. An objective that only wins at one rank would therefore say little. We sweep the rank for $\pi_{0.5}$ on \texttt{atomic\_seen} and compare all objectives at each setting (\Figref{fig:pi05-lora-r}). DriftOPD outperforms the other objectives at every rank and at full fine-tuning.

\subsection{Analysis for Single-Positive Drifting Field}
\label{appendix:mse}

\begin{figure}[H]
  \centering
   \includegraphics[width=0.6\linewidth]{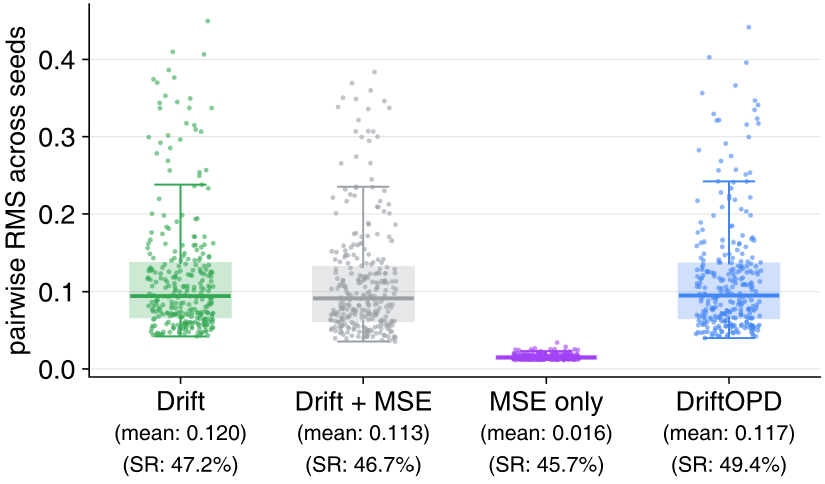}
   \caption{
   Seed-to-seed action diversity at a fixed observation (GR00T N1.6, RoboCasa365, 10k-step students). For 288 training observations (18 tasks $\times$ 16), each student samples $K = 32$ one-step action chunks that differ only in input noise. The y-axis is the pairwise RMS among the 32 chunks (12 action dims $\times$ 16 steps, normalized space); each dot is one observation, boxes show the median and IQR, and the mean is given under each label. MSE only collapses to a near-deterministic output (0.016), while Drift (0.120) and DriftOPD (0.117) keep about 7$\times$ more spread.
   }
   \label{fig:mse}
\end{figure}

\textbf{Drifting loss vs. MSE loss.}
\citet{ImplicitDriftingPolicy} argue that a drifting field built from a single positive degenerates into a mean squared error (MSE) objective unless additional practical strategies are used. Since our formulation uses one positive per context, we test directly whether it suffers from that degeneracy.

\Figref{fig:mse} measures seed-to-seed diversity rather than success, because the failure mode in question is a policy that ignores its input noise. We draw 18 tasks at random and 16 observations from each, and at every one of those 288 observations each student generates 32 action chunks that differ only in the sampled noise $z\sim\mathcal{N}(0,I)$. \textit{Drift + MSE} and \textit{MSE only} denote a plain sum of the drifting objective and an MSE term, and distillation with the MSE term alone; both follow the training protocol of Table~\ref{tab:student-optimization}. The MSE-only student collapses to an almost deterministic map: its spread is $0.016$, an order of magnitude below Drift ($0.120$) and DriftOPD ($0.117$), so it discards the initial noise entirely. Success rate alone would not have exposed this, as it degrades far less than the diversity does.

Drawing multiple independent noise samples per observation during training did not change the measured diversity, so the collapse is not simply a matter of how many samples each context sees. We instead attribute the gap to three ingredients that our field inherits from \citet{Drifting} and from the distillation setting itself: the self-masking heuristic, which excludes each generated sample from its own kernel sum; initialization from pretrained weights, which places the student in a region where the teacher's noise dependence already exists; and the multi-bandwidth kernel, which keeps a repulsive component alive at scales where a single bandwidth would have saturated. Drifting distillation applied to VLAs is therefore free of the collapse that the single-positive analysis predicts.

\subsection{Success Field Drifting}
\label{appendix:success-field}

\begin{figure}[H]
  \centering
   \includegraphics[width=0.6\linewidth]{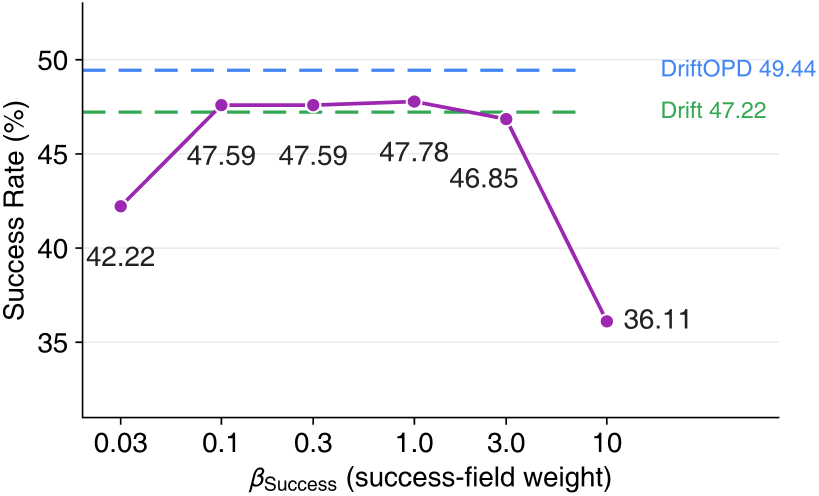}
   \caption{Effect of the success-field weight $\beta_{\mathrm{Success}}$ on GR00T N1.6 (RoboCasa365 \texttt{atomic\_seen}, 18 tasks $\times$ 30 episodes). Dashed lines indicate Drift (47.22) and DriftOPD (49.44). Success rate remains flat for $\beta_{\mathrm{Success}} \in [0.1, 3.0]$, drops at $\beta_{\mathrm{Success}} = 0.03$, and collapses at $\beta_{\mathrm{Success}} = 10$.}
   \label{fig:success-field}
\end{figure}

\textbf{Chunk-level success signal: additional success drifting field.}
DriftOPD supplies its success signal per context, through a critic conditioned on the observation the student is distilling against. We ask whether that conditioning is necessary, or whether one-step drifting distillation can also be aided by a success signal defined at the level of the action chunk alone. Suppose successful actions form a manifold in chunk space, so that whether a chunk succeeds is a property of the chunk itself rather than of the observation it was produced for. Success could then be supervised globally, leaving the ground-truth target untouched and adding a separate matching term against the distribution of successful chunks. Writing the marginal distribution of the student over observations as
\begin{equation}
    \bar{q}_\theta(a)=\mathbb{E}_o\!\left[q_\theta(a\mid o)\right],
    \label{eq:success-marginal}
\end{equation}
the local field is the usual per-context drifting field and the success field is
its global counterpart,
\begin{equation}
\begin{aligned}
    V_{\mathrm{GT}}(a,o)&=\nabla_a\log p_{\mathrm{GT}}(a\mid o)-\nabla_a\log q_\theta(a\mid o), \\
    V_{\mathrm{succ}}(a)&=\nabla_a\log p_{\mathrm{succ}}(a)-\nabla_a\log\bar{q}_\theta(a).
\end{aligned}
\label{eq:success-fields}
\end{equation}
The two are combined with a single coefficient,
\begin{equation}
    \mathcal{L}=\mathcal{L}_{\mathrm{GT}}
    + \beta_\mathrm{success}\,\mathcal{L}_{\mathrm{succ}},
    \qquad
    \mathcal{L}_{\mathrm{succ}}
    =
    \mathbb{E}_{a\sim\bar{q}_\theta}
    \left[
        \left\lVert a-\operatorname{sg}\!\left(a+\eta V_{\mathrm{succ}}\right)\right\rVert^2
    \right].
    \label{eq:success-loss}
\end{equation}
The sweep in \Figref{fig:success-field} suggests the chunk-level signal is a weak substitute for the conditioned one. Over a broad range of $\beta_{\mathrm{success}}$ the success rate stays close to the Drift baseline. 
A global success field can therefore be added without breaking training, but it brings no substantial gain to one-step drifting distillation.
Whether a chunk succeeds appears to remain a function of the observation it was generated for. Marginalizing the success signal over observations discards much of what makes it useful, and the per-context (i.e., episode) form DriftOPD uses is what carries it.

\section{Limitations and Future Work}
\label{appendix:limitation}

Our current instantiation uses a critic trained only from demonstrations, which allows DriftOPD to remain fully rollout-free. While an ideal critic can exactly provide the future-gradient contribution of the sequence-level reverse KLD objective, the practical critic provides an offline approximation whose action-gradient quality is limited by demonstration coverage. This limitation is particularly evident when the one-step teacher already matches its multi-step counterpart, as in several LIBERO and LIBERO-Plus settings (Tables~\ref{tab:libero} and~\ref{tab:libero-plus}), where the current critic provides little additional improvement beyond the teacher/student baseline. Importantly, this behavior reflects the limitation of the learned critic rather than an intrinsic ceiling of the sequence-level OPD formulation. A stronger critic with broader coverage could provide more informative guidance toward action sequences that improve beyond the multi-step policy, motivating future work on Q-functions that more accurately capture how the current action chunk influences subsequent task progress.

Finally, although we study continuous action-chunk VLAs, the proposed sequence-level OPD formulation is not restricted to this setting. For autoregressive VLAs that predict discrete actions or action tokens, step-wise conditional probabilities are directly available, making the local reverse-KL term and its future contribution potentially more direct to instantiate than for continuous flow-based action experts. Extending sequence-level OPD to such next-action prediction models is therefore a natural direction for future work.

\end{document}